\newtheorem{mydef}{Definition}
\newtheorem{mytherm}{Theorem}
\newtheorem{remark}{Remark}
\begin{document}
% paper title
\title{Multiobjective Programming for Type-2 Hierarchical Fuzzy Inference Trees}

\author{Varun~Kumar~Ojha,~\IEEEmembership{Member,~IEEE,}
	V\'{a}clav~Sn\'{a}\v{s}el,~\IEEEmembership{Senior Member,~IEEE,}
	and~Ajith~Abraham,~\IEEEmembership{Senior Member,~IEEE}% <-this % stops a space
	%\thanks{Varun Kumar Ojha and Vaclav Snasel are with the IT4Innovations, V\v{S}B Technical University of Ostrava, Ostrava, 70833 Czech Republic, e-mail: varun.kumar.ojha@vsb.cz, vaclav.snasel@vsb.cz}
	\thanks{V K Ojha is with the {\color{black}Chair of Information Architecture, ETH Zurich, Zurich, Switzerland e-mail: {\color{black}ojha@arch.ethz.ch}}}
	\thanks{V Sn\'{a}\v{s}el is with the Dept. of Computer Science, Technical University of Ostrava, Czech Republic, e-mail: vaclav.snasel@vsb.cz}% <-this % stops a space
	%\thanks{Ajith Abraham is with Machine Intelligence Research Labs, Auburn, Washington, 98071-2259, USA, e-mail: ajith.abraham@ieee.org}
	\thanks{A Abraham is with Machine Intelligence Research Labs, Washington, USA, e-mail: ajith.abraham@ieee.org}
	\thanks{This work was supported by the IPROCOM Marie Curie initial training network, funded through the People Programme (Marie Curie Actions) of the European Union's Seventh Framework Programme FP7/2007-2013/ under REA Grant Agreement No. 316555.}% <-this % stops a space
	\thanks{Manuscript received Month xx, yyyy; revised Month xx, yyyy.}}

% The paper headers
\markboth{Journal of \LaTeX\ Class Files,~Vol.~xx, No.~xx, Month~yyyy}%
{Ojha V.K., \MakeLowercase{\textit{et al.}}: Multiobjective Programming for Type-2 Hierarchical Fuzzy Inference Trees }

% make the title area
\maketitle
\begin{abstract}
This paper proposes a design of hierarchical fuzzy inference tree (HFIT).  {\color{black}An} HFIT produces an optimum tree-like structure. {\color{black}Specifically, a natural hierarchical structure} that accommodates simplicity by combining several low-dimensional fuzzy inference systems (FISs). Such {\color{black}a natural hierarchical structure} provides {\color{black}a high degree of approximation accuracy}. The construction of HFIT takes place in two phases. {\color{black}Firstly}, a nondominated sorting based {\color{black}multiobjective genetic programming} (MOGP) is applied to obtain a simple tree structure {\color{black}(low model's complexity) with a high accuracy}. {\color{black}Secondly}, the differential evolution algorithm is applied to {\color{black}optimize the obtained tree's parameters.} 
{\color{black}In the obtained tree, each node has a different input's combination, where the evolutionary process governs the input's combination. Hence, HFIT nodes are heterogeneous in nature, which leads to a high diversity among the rules generated by the HFIT.} 
%Such {\color{black}a} process of {\color{black}structural} optimization and parameter tuning produces a tree-like model by keeping a balance between the model's complexity and accuracy. 
{\color{black}Additionally}, the HFIT provides an automatic feature selection because it uses MOGP for the {\color{black}tree's structural optimization that accept inputs only relevant to the knowledge contained in data}. The HFIT was studied in the context of both type-1 and type-2 FISs, and its performance was evaluated {\color{black}through} six application problems. Moreover, the proposed multiobjective HFIT was compared {\color{black}both theoretically and empirically} with recently proposed FISs methods from the literature{\color{black},} such as McIT2FIS, TSCIT2FNN, SIT2FNN, RIT2FNS-WB, eT2FIS, MRIT2NFS, IT2FNN-SVR, etc. From the obtained results, it was found that {\color{black}the HFIT} provided less complex and highly accurate models compared to the models produced by most of the other methods. {\color{black}Hence}, the proposed HFIT is an efficient and competitive alternative to the other FISs for {\color{black}function} approximation and {\color{black}feature} selection.                  
\end{abstract}
% Note that keywords are not normally used for peerreview papers.
\begin{IEEEkeywords}
	Hierarchical fuzzy inference system, multiobjective genetic programming, differential evolution, approximation, feature selection
\end{IEEEkeywords}

% For peerreview papers, this IEEEtran command inserts a page break and
% creates the second title. It will be ignored for other modes.
\IEEEpeerreviewmaketitle

%%%%%%%%%%%%%%%%%%%%%%%%%%%%%%%%%%%%%%%%%%%%%%%%%%%%%%%%%%%%%%%%%%%%%%%%%%%%%%%%%%%%%%%%%%%%%%%%%%%%%%%%
\section{Introduction}
\label{sec:intro}
\IEEEPARstart{A}{} fuzzy inference system (FIS)---composed of a fuzzifier to fuzzify input information, an inference engine to infer information from a rule base (RB), and a defuzzifier to return {\color{black}crisp} information---solves a wide range of problems that are ambiguous, uncertain, inaccurate, and noisy. An RB of an FIS is a set of rules of the form IF-THEN, i.e., the antecedent and the consequent form. {\color{black}The} Takagi--Sugeno--Kang (TSK) is a widely used FIS model{\color{black}~\cite{takagi1985fuzzy}}. It {\color{black}embraces the} IF-THEN form, where the antecedent part {\color{black}consists} of type-1 fuzzy {\color{black}sets} (T1FS) and/or type-2 fuzzy {\color{black}sets} (T2FS), and the consequent part consists of real values or a linear/nonlinear function. 

{\color{black}Type-1} FIS (T1FIS) and type-2 FIS (T2FIS) {\color{black}differ} when it comes to the representation of the antecedent part and the consequent part of a rule, and {\color{black}T1FS} and T2FS differ in the definitions of their membership functions. Unlike the crisp output of a T1FS membership function (MF)~\cite{Zadeh1965338}, the output of a T2FS MF is fuzzy in nature~\cite{zadeh1975concept}. {\color{black}Such nature of the T2FS MFs is advantageous in processing uncertain information more effectively than {\color{black}with} T1FS MFs~\cite{karnik1999type}. Hence, a T2FIS can overcome the inability of a T1FIS to fully handle or accommodate the linguistic and numerical uncertainties associated with {\color{black}a} changing and dynamic environment~\cite{hagras2004hierarchical}.}

However, a T2FIS is computationally expensive because it has a larger number of parameters than a T1FIS, and it requires a type-reduction mechanism {\color{black}in} its defuzzification part. The interval T2FIS (IT2FIS) reduces the computational cost by employing a simplified T2FS, known as interval T2FS (IT2FS)~\cite{karnik1999type}. An IT2FS MF is bounded by a lower MF (LMF) and an upper MF (UMF), and the area between the LMF and UMF is called the \textit{footprint of uncertainty}~\cite{karnik1999type}. Then, a type-reducer {\color{black}reduces} IT2FS to {\color{black}interval-valued} T1FS. Subsequently, the output of IT2FIS is produced by averaging the intervals. 

{\color{black}The construction and tuning of the rules are among the vital tasks in the optimization of an FIS, where the rule's construction is met by combining the fuzzy sets and the rule's tuning is met by adjusting the MF's parameters and the consequent part's parameters. Such {\color{black}a} form of rule optimization is often achieved by mapping the rule's parameters onto a real-valued genetic vector, and it is known as the Michigan {\color{black}Approach}~\cite{michigan1982}. Similarly, the construction/optimization of the RB is met by the genetic selection of the rules at the RB. Such {\color{black}a} form of RB optimization is often achieved by mapping the rules onto a binary-valued genetic vector~\cite{ishibuchi1999hybrid}, and it is known as the Pittsburgh {\color{black}Approach}~\cite{pittsburgh1980}. }

However, {\color{black}FIS} optimization is not limited only to its mapping onto the genetic vector, but a structural/network-like implementation of FIS is often performed~\cite{jang1993anfis}. Additionally, {\color{black}TSK}-based hierarchical self-organizing learning dynamics {\color{black}have} also been proposed~\cite{wu2000dynamic}. Moreover, several researchers {\color{black}have} focused {\color{black}on} the FIS and neural network (NN) integration and its parameter optimization using various learning methods including gradient-decent and the metaheuristic algorithms~\cite{sharma2009hybrid,huang2015optimization,castillo2012optimization,fernandez2015revisiting}. {\color{black}The summaries of such optimization paradigms are described as follows:}

{\color{black}A} self-constructing neural fuzzy inference network (SONFIN), proposed by Juang et al.~\cite{juang1998online}, is a six layered network structure whose optimization begins with no rule and {\color{black}then} rules are incrementally added during the learning process. SONFIN uses a clustering method to partition the input space that governs the number of rules extracted from the data{\color{black}{\color{black}, then} the parameters ({\color{black}MF's} arguments) of the determined SONFIN structure are tuned by the backpropagation algorithm.} Later, in~\cite{juang2008self}, {\color{black}SONFIN's} concept was extended for the construction of T2FIS, where a self-evolving IT2FIS (SEIT2FNN) that implements a TSK-type FIS model was proposed, and the parameters of the evolved structure were tuned by using the Kalman-filtering algorithm. Additionally, a simplified type-reduction process for SEIT2FNN was proposed in~\cite{juang2013reduced}. Like SONFIN, in~\cite{kasabov2002denfis}, a TSK-type FIS model{\color{black},} called {\color{black}a} dynamic evolving neural-fuzzy inference system (DENFIS){\color{black},} was proposed, which evolved incrementally by choosing active rules from a set of rules and employed an evolving clustering method to partition the input space and the least-square estimator to optimize its parameters.

To overcome some limitations of the self-organizing fuzzy NN paradigm, Tung et al.~\cite{tung2011safin} proposed a self-adaptive fuzzy inference network (SaFIN) that applied a categorical learning induced partitioning algorithm to eliminate two  limitations: 1) {\color{black}the} need {\color{black}for} predefined numbers of fuzzy clusters {\color{black}and} 2) the stability--plasticity trade-off that addresses the difficulty in finding a balance between {\color{black}past} knowledge and {\color{black}current} knowledge during {\color{black}the} learning process. SaFIN also employed a rule consistency {\color{black}checking} mechanism to avoid inconsistent RB construction. Additionally, the Levenberg-Marquardt method was applied for {\color{black}RB's} parameters tuning. In~\cite{lin2013mutually}, to improve the efficiency of IT2FIS, a mutually recurrent interval type-2 neural fuzzy system (MRIT2NFS) was proposed which used weighted feedback loops in the antecedent parts of the formed rules and applied gradient-decent learning and {\color{black}a} Kalman-filter algorithm to tune the recurrent weights and the rules' parameters, respectively. In~\cite{lin2014tsk}, a self-evolving T2FIS model was proposed that employed {\color{black}a} compensatory operator in the type-2 inference mechanism and a variable-expansive Kalman-filter algorithm for {\color{black}parameter} tuning. 

Further, a simplified interval type-2 fuzzy NN with a simplified type-reduction process (SIT2FIS)  was proposed in~\cite{lin2014simplified}, and a growing online self-learning IT2FIS that used the dynamics of {\color{black}a} growing Gaussian mixture model was proposed in~\cite{bouchachia2014gt2fc}. Recently, in~\cite{das2015evolving}, a meta-cognitive interval type-2 neuro FIS (McIT2FIS) was proposed, which {\color{black}employs} a self-regulatory meta-cognitive system that extracts the knowledge contained in minimal samples by accepting or discarding data samples based on {\color{black}sample's} contribution to knowledge. For the {\color{black}parameters tuning}, McIT2FIS employed {\color{black}the} Kalman-filtering algorithm.

{\color{black}However, the self-organizing fuzzy NN paradigm discussed above has to employ a clustering method to partition the input space during the FIS {\color{black}structure's} design. Contrary to this, a hierarchical FIS (HFIS) constructs an FIS by using a hierarchical arrangement of several low-dimensional fuzzy subsystems~\cite{raju1991hierarchical}.} Initially, the input variables selection, the levels of hierarchy, and the number of parameters was fully up to the experts to determine. {\color{black}Moreover}, HFIS design overcomes the \textit{curse of dimensionality}~\cite{brown1995high}, and it possesses a universal approximation ability~\cite{wang1999analysis,delgado2001hierarchical,lee2003modeling,zeng2005approximation}. 

Torra et al.~\cite{torra2002review} summarized the contributions where the expert's role in the HFIS design process was minimized/eliminated. For example, in~\cite{joo2005class}, HFIS was realized as {\color{black}a} feedforward network like structure in which the output of the previous {\color{black}layer's} subsystem was only fed to the consequent part of the next layer{\color{black},} and so on. Similarly, in~\cite{fernandez2009hierarchical}, a two-layered HFIS was developed, where, for each layer, the knowledge bases (KB) were generated by linguistics rule generation method and the {\color{black}KB rules} were selected by genetic algorithm (GA). In~\cite{hwang2014adaptive}, an adaptive fuzzy hierarchical sliding-mode control method was proposed{\color{black},} which was an arrangement of many subsystems, and the top layer accommodated all the {\color{black}subsystems'} outputs. Moreover, in~\cite{chen2007automatic}, to optimize the structure of a hierarchical arrangement of low-dimensional TSK-type {\color{black}FISs}, probabilistic incremental program evolution~\cite{salustowicz1997probabilistic} was employed. Similarly, the importance of the hierarchical arrangements of the low-dimensional T2FSs is explained in~\cite{hagras2004hierarchical,mohammadzadeh2014two}.

For {\color{black}FIS} models that have {\color{black}a} structural representation (e.g., self-organizing fuzzy NN and HFIS models), {\color{black}multiobjective} optimization is inherent since {\color{black}accuracy} maximization and complexity minimization are two desirable objectives~\cite{ishibuchi2007multiobjective}. Hence, to make trade-offs between interpretability and accuracy, or{\color{black},} in other words, to make trade-offs between approximation error minimization and complexity minimization, {\color{black}a} multiobjective orientation of {\color{black}FIS} optimization can be used~\cite{ishibuchi1997single,alcala2007multi,cordon2011historical}. Complexity minimization can be defined in many ways{\color{black},} such as {\color{black}a} reduced number of rules, reduced number of parameters, etc.~\cite{guillaume2001designing,cordon2011historical}. 

Since a single solution may not satisfy both objectives simultaneously, a Pareto-based multiobjective optimization algorithm can be used in FIS optimization{\color{black}, the scope of which} spans from the rule selection, to rule mining, rule learning, etc.~\cite{ishibuchi2007analysis,gacto2009adaptation,carmona2010nmeef,cara2013multiobjective}. Similarly, in~\cite{wang2005multi,munoz2008automatic,alcala2009multiobjective,antonelli2011learning}, simultaneous learning of KB was proposed, which included feature selection, rule complexity minimization together with approximation error minimization, etc. %cordon2003multiobjective

Moreover, in~\cite{antonelli2012genetic}, a co-evolutionary approach that {\color{black}aimed at}  combining {\color{black}a} multiobjective approach with {\color{black}a} single objective approach was presented where, at first, a multiobjective GA determined a {\color{black}Pareto-optimal} solution by finding a trade-off between accuracy and rule complexity. Then, a single objective GA was applied to reduce training instances. Such {\color{black}a} process was then repeated until a satisfactory solution was obtained. A summary of research works focused on multiobjective optimization of FIS is provided in~\cite{fazzolari2013review}.

{\color{black}In conclusion, the following are the necessary practices for an FIS model design: 1) input space partitioning; 2) {\color{black}rule formation}; 3) {\color{black}rule tuning}; 4) FIS structural representation; 5) improving accuracy and minimizing {\color{black}a model's} complexity. Therefore, in this work, a multiobjective optimization of HFIS, called {\color{black}a} hierarchical fuzzy inference tree (HFIT){\color{black},} was proposed.} 

Unlike the self-organizing paradigm that has a network-like structure and uses {\color{black}a} clustering algorithm for partitioning of {\color{black}input} space, the proposed HFIT constructs a tree-like structure and uses the dynamics of the evolutionary algorithm for partitioning {\color{black}input} space~\cite{poli2008field}. The HFIT is {\color{black}analogous} to {\color{black}a} multi-layered network and {\color{black}automatically partitions} input space during the structure optimization phase, i.e., during the tree construction phase. The parameter tuning of the HFIT was performed by the differential evolution (DE) algorithm~\cite{qin2009differential}, {\color{black}which is} a metaheuristic algorithm inspired by the dynamics of the evolutionary process. {\color{black}{\color{black}Metaheuristic} algorithms, being independent of the problems, solve complex optimization problems. Hence, {\color{black}they are} useful in finding the appropriate parameter values for an FIS~\cite{castillo2012optimization}. }

In this work, the proposed HFIT implements {\color{black}a} TSK-type FIS for both T1FIS and T2FIS, and {\color{black}HFIT} was studied under {\color{black}both} single objective and multiobjective optimization orientations. Hence, a total {\color{black}of} four versions of HFIT algorithms were proposed: type-1 single objective HFIT (T1HFIT$^{\text{S}}$), type-1 multiobjective objective HFIT (T1HFIT$^{\text{M}}$), type-2 single objective HFIT (T2HFIT$^{\text{S}}$), and type-2 multiobjective objective HFIT (T2HFIT$^{\text{M}}$). In the construction of type-2 HFITs, the type-reduction algorithm {\color{black}of the} {\color{black}Karnik-Mendel} method described in~\cite{karnik1999type} was used with an improvement {\color{black}in} its termination criteria. {\color{black}{\color{black}In} summary, the following are the main and novel contributions of this work.}

{\color{black} 
\begin{enumerate}
    \item The proposed hierarchical tree-like design (HFIT) forms a natural hierarchical structure by combining several low-dimensional fuzzy subsystems.
    \item MOGP driven optimization provided a trade-off between model's accuracy and complexity. Moreover, in the obtained tree, each node has a different input's combination, where the MOGP governs the input's combination. Hence, HFIT nodes are heterogeneous in nature, which leads to a high diversity among the rules generated by the HFIT. Such a diverse rule generation methods is a distinguished aspect of the proposed HFIT.
    \item A comprehensive theoretical study of HFIT shows that when it comes to the partitioning of input space, membership function design, and even rule formation, it has advantages over network-like layered architecture models, which have to use clustering methods when they do input space partitioning. Clustering methods generate overlapping MFs in fuzzy sets, whereas HFIT's MOGP driven MFs selection avoid such a overlapping of MFs.    
    \item Unlike many models in the literature, HFIT performed an inclusive automatic feature selection, which led to the simplification of the RB in fuzzy subsystems and incorporated only relevant knowledge contained in the dataset into HFIT's structural representation. 
    \item A comprehensive performance comparison of the proposed four versions of the HFIT algorithms both in theoretical and empirical sense with the recently proposed FIS algorithms found in the literature suggests that HFIT design offers a high approximation ability with simple model complexity.
\end{enumerate}}

{\color{black}The structure} of this article is as follows{\color{black}:} Section~\ref{sec_prilimi} provides an introduction to T1FIS and T2FIS{\color{black};} Section~\ref{sec_hfit} describes the proposed multiobjective strategy for developing HFIT {\color{black}and its parameter optimization; Section~\ref{sec_evaluation_the} provides a comprehensive theoretical evaluation of HFIT{\color{black};} Section~\ref{sec_evaluation_emp}} provides a detailed description of parameter setting and {\color{black} a comprehensive empirical evaluation} the proposed HFIT compared with the  algorithms reported in the literature{\color{black};} finally, the obtained results are discussed in Section~\ref{sec_disc} followed by a concise conclusion in Section~\ref{sec_con}.

%%%%%%%%%%%%%%%%%%%%%%%%%%%%%%%%%%%%%%%%%%%%%%%%%%%%%%%%%%%%%%%%%%%%%%%%%%%%%%%%%%%%%%%%%%%%%%%%%%%%%%
\section{TSK Fuzzy Inference Systems}
\label{sec_prilimi}
\subsection{Type-1 Fuzzy Inference Systems}
A TSK-type FIS is governed by the {IF--THEN} rules of the form~\cite{takagi1985fuzzy}:
\begin{equation}
\label{eq_type1_rules}
R_i: \text{IF } x_1 \text{ is } {A}_{i1} \text{ AND } \ldots \text{ AND } x_{d^i} \text{ is } {A}_{id^i} \text{ THEN } y \text{ is } B_i
\end{equation}
where $ R_i $ is the $ i $-th rule in an FIS, $ A_{i1},\ldots, {A}_{id^i}$ are the T1FSs, $ B_i $ is a function of an input vector $ \text{\textbf{x}} = \langle  x_1,x_2,\ldots,x_{d^i}\rangle $ that returns a crisp output $ y $, and $ d^i $ is the total number of the inputs presented {\color{black}to} the $ i $-th rule. Note that the number of inputs may vary from rule-to-rule. Hence, the dimension of inputs {\color{black}in} a rule is denoted as $ d^i $. In TSK, the function $ B_i $ is usually expressed as: 
\begin{equation}
\label{eq_type1_consequent}
B_i = c_i^0 + \sum\limits_{j=1}^{d^i} c^j_i x_j
\end{equation}
where $ c^j_i$ for $ j $ = $ 0 $  to  $d^i$ is the free parameters {\color{black}in} the consequent part of a rule. %The basic building blocks of an FIS is shown in Fig.~\ref{fig_FIS} whose 
The defuzzified crisp output of FIS is computed as follows{\color{black}:} First, the inference engine fires {\color{black}up} the {\color{black}RB rules}. The firing strength $ f_i $ of the $ i $-th rule is computed as:
\begin{equation}
\label{eq_type1_firing_strength}
f_i =  \prod\limits_{j = 1}^{d^i} \mu_{A_{ij}} (x_j) 
\end{equation}
where $ \mu_{A_{ij}} $ is the value of $ j $-th T1FS MF at the $ i $-th rule. Then, the defuzzified output $ y $ of an FIS is computed as: 
\begin{equation}
\label{eq_type1_out}
y = \frac{\sum_{i=1}^M B_i f_i}{\sum_{i=1}^M f_i}
\end{equation}
where $ M $ is the total rules in the RB. In this work, as shown in~Fig.~\ref{fig_MF_T1}, the T1FS $ A $ was of the form:
\begin{equation}
\color{blue}
\label{eq_type1_MF}
\mu_A(x) = \frac{1}{1 + \left(\frac{x-m}{\sigma} \right)^2}
\end{equation}
where $ m $ and $ \sigma $ are the center and the width of MF $ \mu_A(x) $, respectively.

%\begin{figure}
%	\centering
%	\includegraphics[width=0.6\textwidth]{}
%	\caption{Typical fuzzy inference system}
%	\label{fig_FIS}
%\end{figure}

\subsection{Type-2 Fuzzy Inference Systems}
\label{sec_fls2}
A T2FS $\tilde{A}$ is characterized by a {\color{black}3-dimensional (3-D)} MF~\cite{mendel2013km}. The three axes of T2FS are defined as follows. The x-axis is called {\color{black}the} primary variable, the y-axis is called {\color{black}the} secondary variable (or primary MF, which is denoted by $ u $), and the z-axis is called the MF value (or secondary MF value), which is denoted by $ \mu $. Hence, in a universal set $ X $, a T2FS $\tilde{A}$ has the form:
\begin{equation}
\label{eq_type2MF}
\tilde{A} = \left\lbrace \left( \left(x,u\right), \mu_{\tilde{A}} \left(x,u\right)  \right) | \forall x \in X, \forall u \in [0,1] \right\rbrace
\end{equation}
where the MF value $ \mu $ has a 2-dimensional support called the \textit{footprint of uncertainty} of $ \tilde{A} $, which is bounded by an LMF $ \underline{\mu}_{\tilde{A}}(x) $ and a UMF $ \bar{\mu}_{\tilde{A}} (x)$ (Fig.~\ref{fig_MF_T2}). %The \textit{footprint of uncertainty} is the area enclosed within the LMF and the UMF (Fig.~\ref{fig_MF_T2}). 
A Gaussian function{\color{black},} with {\color{black}an} uncertain mean within $ [m_1, m_2] $ and standard deviation $ \sigma ${\color{black},} is an IT2FS MF (Fig.~\ref{fig_MF_T2}), which is written as: 
\begin{equation}
\label{eq_GaussT2MF}
\color{blue}
\mu_{\tilde{A}} (x,m,\sigma) = \exp \left( -\frac{1}{2} \left(\frac{x - m}{\sigma}\right)^2\right),\hspace*{2em} m \in [m_1,m_2].
\end{equation}
In this work, the LMF was defined as~\cite{karnik1999type}:
\begin{equation}
\label{eq_loMF}
\underline{\mu}_{\tilde{A}} (x) = \left\lbrace
\begin{array}{ll}
\mu_{\tilde{A}} (x,m_2,\sigma),\hspace*{2em} & x \le (m_1 + m_2)/2\\
\mu_{\tilde{A}} (x,m_1,\sigma),\hspace*{2em} & x > (m_1 + m_2)/2
\end{array}
\right.
\end{equation}
and the UMF was defined as~\cite{karnik1999type}:
\begin{equation}
\label{eq_upMF}
\bar{\mu}_{\tilde{A}} (x) = \left\lbrace
\begin{array}{ll}
\mu_{\tilde{A}} (x,m_1,\sigma),\hspace*{2em} & x < m_1\\
1,\hspace*{2em} &  m_1 \le x \le m_2 \\
\mu_{\tilde{A}} (x,m_2,\sigma),\hspace*{2em}  & x > m_2
\end{array}.
\right.
\end{equation}
\begin{figure}
	\centering
	\subfigure[Type-1 Fuzzy MF]
	{
		\includegraphics[width=0.45\columnwidth]{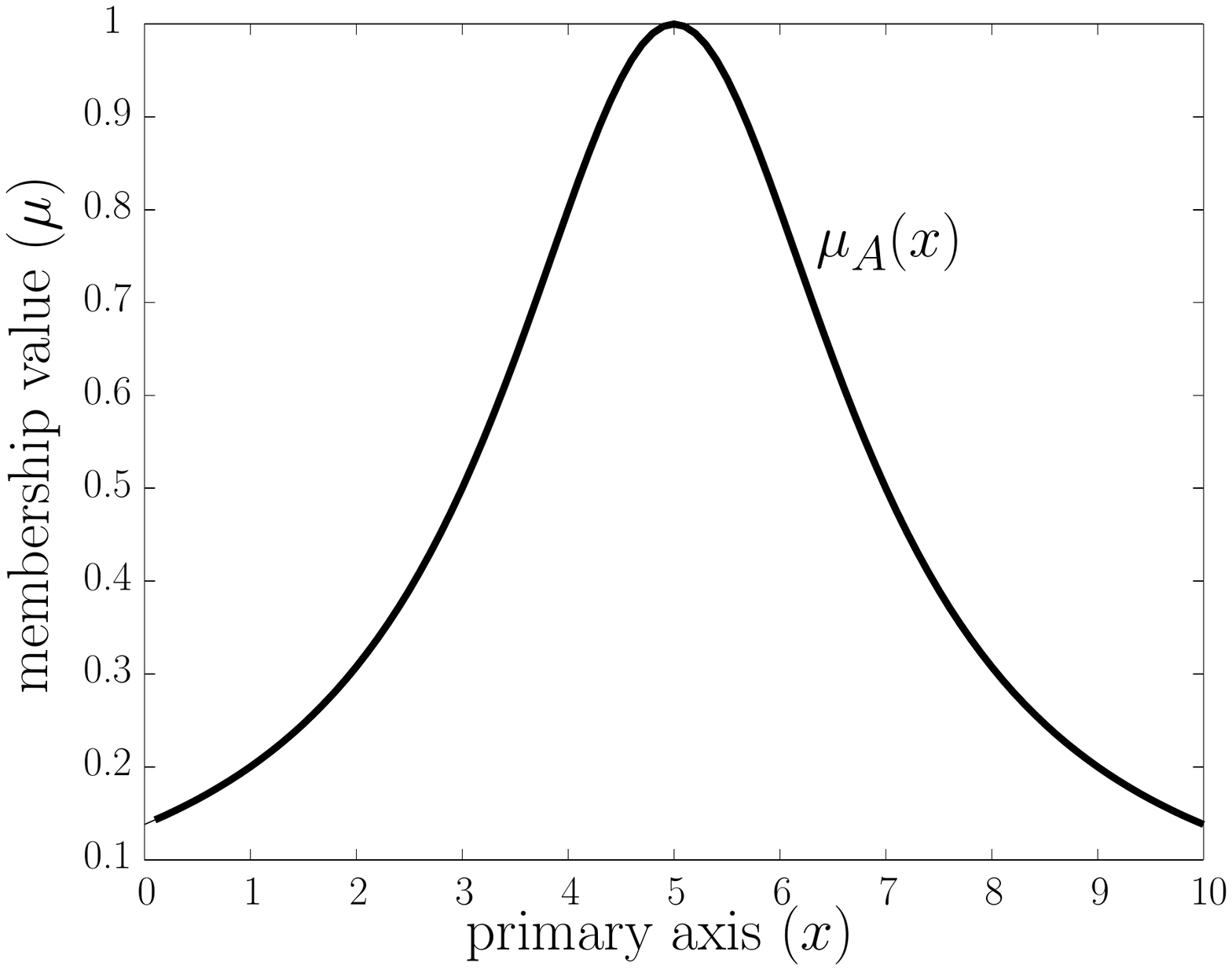}%T1GMF
		\label{fig_MF_T1}%
		
	}
	\subfigure[Type-2 Fuzzy MF]
	{
	\includegraphics[width=0.45\columnwidth]{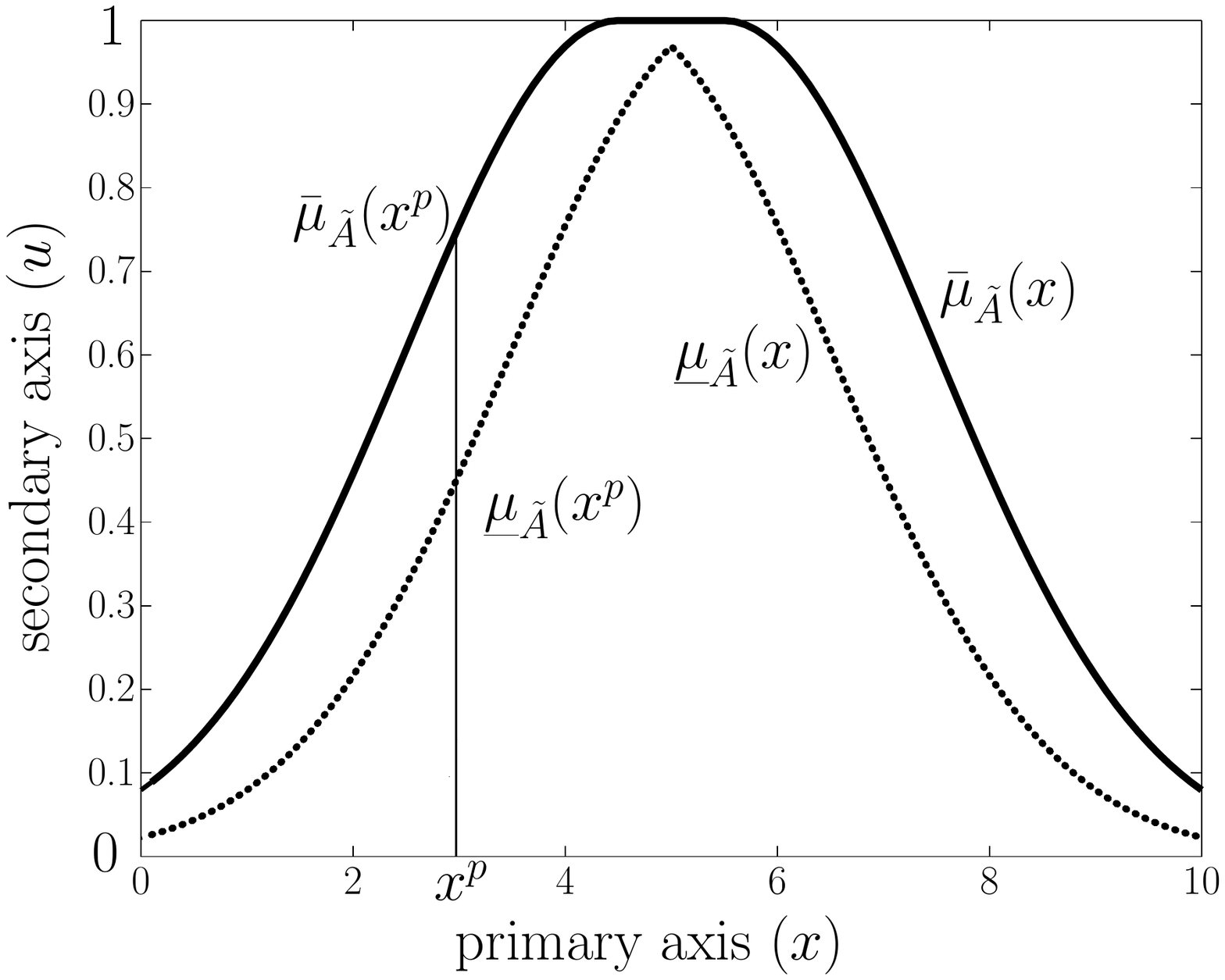}%T2GMF
	\label{fig_MF_T2}
	}
	\caption{Membership functions. (a) Type-1 MF~\eqref{eq_type1_MF} with mean $ m = 5.0 $ and $ \sigma = 2.0 $ (b) Type-2 Fuzzy MF with fixed $ \sigma = 2.0 $ and means $ m_1 = 4.5 $ and $ m_2 = 5.5 $. LMF $ \underline{\mu}_{\tilde{A}} (x) $  as per~\eqref{eq_loMF} is {\color{black}on the} dotted line and UMF $ \bar{\mu}_{\tilde{A}} (x) $ as per~\eqref{eq_upMF} is {\color{black}on the} solid line.}
\end{figure}
In Fig.~\ref{fig_MF_T2}, {\color{black}the} point $ x^p $ along the x-axis of 3-D IT2FS MF cuts the LMF and UMF along the y-axis, and the value of the IT2FS is considered to be along the  z-axis (not shown in Fig.~\ref{fig_MF_T2}) are  $ \bar{\mu}_{\tilde{A}} (x^p) $   and $ \underline{\mu}_{\tilde{A}} (x^p) $. Considering IT2FS MFs, $ i $-th IF--THEN rule of type-2 TSK-FIS for an input vector $ \text{\textbf{x}} = \langle x_1,x_2,\ldots,x_{d^i}\rangle $ takes the following form:
\begin{equation}
\label{eq_type2_rules}
R_i: \text{IF } x_1 \text{ is } \tilde{A}_{i1} \text{ AND } \ldots \text{ AND } x_{d^i} \text{ is } \tilde{A}_{id^i} \text{ THEN } y \text{ is } \tilde{B}_i
\end{equation}
where $ \tilde{A}_{i1},\ldots, \tilde{A}_{id^i}$ are the T2FSs, $ \tilde{B}_i $ is  a function of $ \text{\textbf{x}} $ that returns a pair $[\underline{b}_i, \bar{b}_i]$ called {\color{black}the} left and right weights of the consequent part of the $ i $-th rule. In TSK, $ \tilde{B}_i $ is usually written as:
\begin{equation}
\label{eq_type2_consequent}
\tilde{B}_i  = [c^0_i-s^0_i,{c}^0_i+s^0_i]  + \sum\limits_{j=1}^{d^i} [c^j_i-s^j_i,{c}^j_i+s^j_i]x_j
\end{equation}
where $ c^j_i $ for $ j$ = $ 0 $  to  $d^i$ is the free parameter {\color{black}in} the consequent part of a rule and $ s^i_j $ for $ j$ = $ 0 $  to  $d^i$ {\color{black}are} the deviation factors of the free parameters.  The firing strength of IT2FS $ F_i = [\underline{f}_i, \bar{f}_i]$ is computed as:
\begin{equation}
\label{eq_fire}
\underline{f}_i = \prod\limits_{j=1}^{d^i} \underline{\mu}_{\tilde{A}_{ij}} (x_j) \text{\hspace*{1em} and \hspace*{1em}}  \bar{f}_i = \prod\limits_{j=1}^{d^i} \bar{\mu}_{\tilde{A}_{ij}} (x_j)
\end{equation}

At this stage, inference engine fires {\color{black}up} the rule and the type-reducer reduces the IT2FS to T1FS. In this work, {\color{black}the} \textit{center of set} type-reducer $ y_{cos} $, prescribed  in~\cite{karnik1999type}, was used:
\begin{equation}
\label{eq_cos}
y_{cos} = \underset{f^i \in F^i, \,\, b^i \in \tilde{B}^i}{\bigcup} \frac{\sum_{i = 1}^M f^i b^i}{\sum_{i = 1}^M f^i} = [y_l, y_r]
\end{equation}    
where $ y_l $ and $ y_r $ are the left and the right end of the interval. For the {\color{black}ascending} order of $ \underline{b}^i $ and $ \bar{b}^i$ , $ y_l $ and $ y_r $ are computed as:
\begin{equation}
\label{eq_yl}
y_l = \frac{\sum_{i = 1}^L \bar{f}^i \underline{b}^i  + \sum_{i = L+1}^{M}\underline{f}^i \underline{b}^i}{\sum_{i = 1}^L\bar{f}^i + \sum_{i = L+1}^{M}\underline{f}^i}
\end{equation}
\begin{equation}
\label{eq_yr}
y_r = \frac{\sum_{i = 1}^R \underline{f}^i \bar{b}^i  + \sum_{i = R+1}^{M}\bar{f}^i \bar{b}^i}{\sum_{i = 1}^R\underline{f}^i + \sum_{i = R+1}^{M}\bar{f}^i}
\end{equation}
where $ L $ and $ R $ are the switch point, determined by
$$ \underline{b}^L \le y_l \le \underline{b}^{L+1}\hspace*{1em} \text{ and }\hspace*{1em} \bar{b}^R \le y_r \le \bar{b}^{R+1}, $$
respectively. The defuzzified crisp output is then computed as:
\begin{equation}
\label{eq_fist2_out}
y = \frac{y_l + y_r}{2}.
\end{equation} 
%%%%%%%%%%%%%%%%%%%%%%%%%%%%%%%%%%%%%%%%%%%%%%%%%%%%%%%%%%%%%%%%%%%%%%%%
\section{Multiobjective Optimization of Hierarchical Fuzzy Inference Trees}
\label{sec_hfit}
%\subsection{Hierarchical Fuzzy Inference Tree Construction}
\subsection{Hierarchical Tree Formation}
\label{sec_hfit_tree}
A hierarchical fuzzy inference tree (HFIT) is a tree-based system. Its hierarchical structure is analogous to a multilayer feedforward NN, where the nodes  (the low-dimensional FISs) are connected using weighted links. The concept of forming {\color{black}a} hierarchical fuzzy inference tree is inherited from the flexible neural tree proposed by Chen et al.~\cite{chen2005time}, which has two learning {\color{black}phases}. First, in {\color{black}the} \emph{tree construction} phase, an evolutionary algorithm is employed to construct/optimize a tree-like structure. Second, in {\color{black}the} \emph{parameter tuning} phase, a genotype representing the underlying parameters of the tree structure is optimized by using parameter optimization algorithms. 

To create an optimum tree based model, {\color{black}firstly}, a population of randomly generated trees is formed. Once a satisfactory tree structure (a tree with {\color{black}a} small approximation error and low complexity) is obtained using an evolutionary algorithm, the \emph{parameter tuning} phase optimizes its parameters. The phases are repeated until a satisfactory solution is obtained. {\color{black} Fig.~\ref{fig_HFIT_dia}} is a clear representation of {\color{black}HFIT's two-phase construction approach}.%, where first, an initial population of HFITs is optimized by a single objective genetic programming or a multiobjective genetic programming (MOGP). Then, the selected tree passed to next phase where parameters are tuned by the DE. Such process is repeated until a satisfactory solution is obtained.
\begin{figure}
	\centering
	\includegraphics[width=0.5\columnwidth]{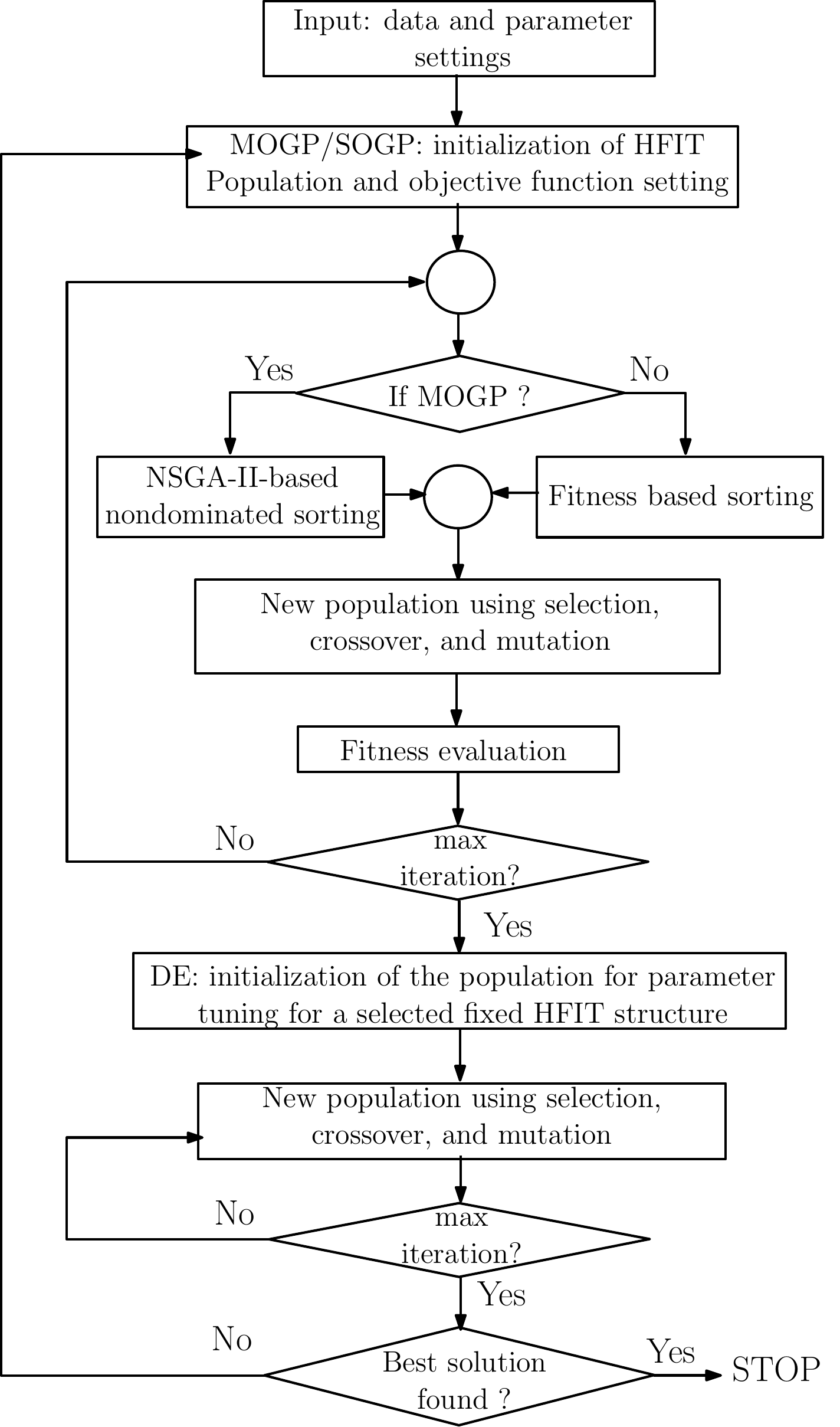}%T1GMF
	\caption{Two-phase construction of {\color{black}a} hierarchical fuzzy inference tree.}
	\label{fig_HFIT_dia}
\end{figure}

\subsection{Tree Encoding}
\begin{figure}
	\centering
	\subfigure[{\color{black}Two-stage} hierarchical tree.]
	{
		\includegraphics[width=0.45\columnwidth]{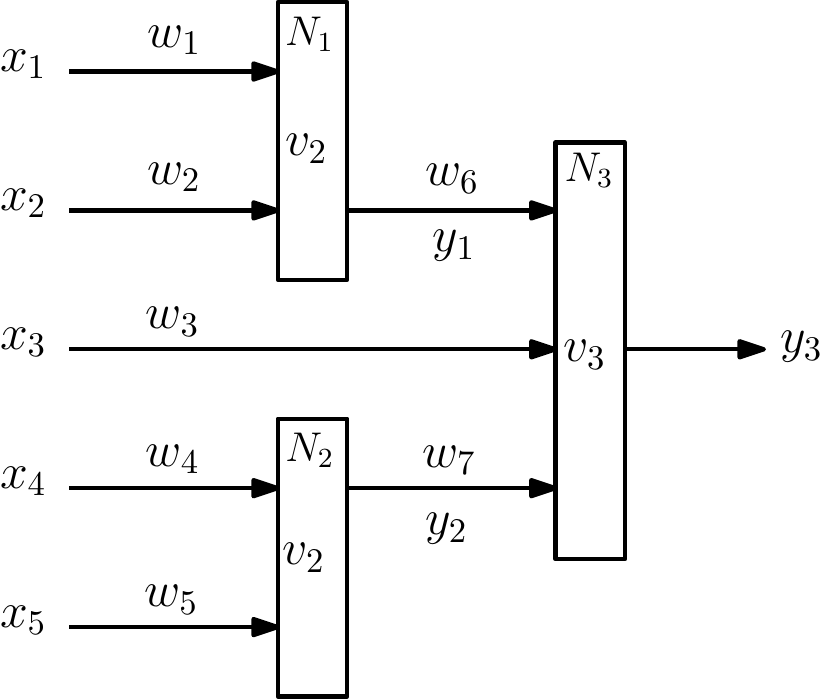}%T1GMF
		\label{fig_HFIS_tree}%
	}
	\subfigure[{\color{black}Node structure.}]
	{
		\includegraphics[width=0.45\columnwidth]{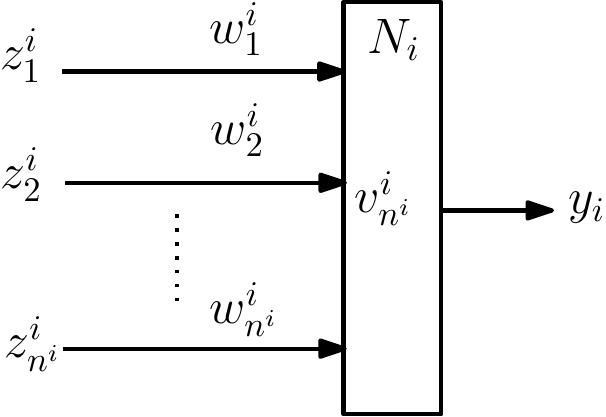}%T2GMF
		\label{fig_HFIS_node}%
	}
	\caption{Hierarchical fuzzy inference system. (a) Complete tree with three nodes $ N_1 $, $ N_2 $, and $ N_3 $ and with inputs $ x_1 $, $ x_2 $, $ x_3 $, $ x_4 $, and $ x_5$. (b) Illustration of the $ i $-th node $ N_i $ that has $ n^i $ inputs $ z^i_j \in [x_1, \ldots,x_n]$ for $ j = 1 \text{ to } n^i$ and output $ y_i $.}
	\label{fig_HFIS}
\end{figure}

An HFIT $ G $ is a collection of nodes $ V $ and terminal nodes $ T $:
{\color{black}\begin{equation}
	\label{eq-fnt}
	G = V \cup T = \left\lbrace v_2,v_3,\ldots,v_{tn} \right\rbrace  \cup \left\lbrace x_1,x_2,\ldots,x_d \right\rbrace 
\end{equation}}
where $ v_j \left( j = 2,3,\ldots,{tn} \right)  $ denotes non-leaf instruction and has $ 2 \le j \le tn $ arguments. The leaf node's instruction $ x_1,x_2,\ldots,x_d $ takes no argument and represents the input variable/instruction. A typical HFIT is shown in Fig.~\ref{fig_HFIS_tree}; whereas, Fig.~\ref{fig_HFIS_node} {\color{black}illustrates an HFIT's} node $ N_i $ that takes $ n^i $ inputs. The inputs $ z^i_j \in \{x_1,x_2,\ldots,x_d\}$ for $ j = 1 \text{ to } n^i$ to {\color{black}a} node $ N_i $ is either from the input layer or from other nodes in HFIT. Each node in an HFIT receives a weighted input $ x_i w_i  $, where $ w_i $ is the weight. {\color{black}In this work, however, the weights in HFIT were set to 1 because the objective of this work was also to reduce the complexity of the produced tree along with approximation error. Setting weights to 1 also allow raw input to be fed to the fuzzy sets.}

\subsection{Rule Formation at the Nodes}
\label{sec_hfit_Rule}
\subsubsection{Rules for Type-1 FIS Node}
Each node in an HFIT is an FIS of either type-1 or type-2. Hence, the rules at a node were created as follows{\color{black}:} Considering a reference to the node $ N_1 $ from Fig.~\ref{fig_HFIS_tree} that has two arguments/inputs $ x_1 $ and $ x_2 $ and assuming that each input $ x_1$ and $ x_2 $ has two T1FSs $ A^1_{11}, A^1_{12} $ and $ A^1_{21}, A^1_{22} $, respectively, the rules for T1FIS {\color{black}are} generated as:  
\begin{equation*}
\begin{array}{l}
R^1_{ij}: \text{IF } x_1 \text{ is } {A}^1_{1i} \text{ AND } x_2 \text{ is } {A}^1_{2j} \text{ THEN }
y^1_{ij} =  c^0_{ij} + c^1_{ij} x_1 + c^2_{ij} x_2,\\ \text{for } i = 1,2 \text{ and } j = 1,2.
\end{array}
\end{equation*}

{\color{black}The consequent part $ B^1_{ij} $ of the rules at the node $ N_1 $ is computed by using~\eqref{eq_type1_consequent}. Finally, the output $ y_1 $ of node $ N_1 $ is computed as:
\begin{equation}
y_1 = \frac{\sum_{i=1}^{2}\sum_{j=1}^{2} f^1_{ij} B^1_{ij}}{\sum_{i=1}^{2}\sum_{j=1}^{2} f^1_{ij} }
\end{equation}
where the firing strength $ f^1_{ij} $ is computed as:
\begin{equation}
f^1_{ij} = \mu_{A^1_{1i}}(x_1) \mu_{A^1_{2j}}(x_2),\hspace*{2em} \text{for } i = 1,2 \text{ and } j = 1,2.
\end{equation}

Similar to {\color{black}node} $ N_1 $, {\color{black}node} $ N_2 $ has two inputs and{\color{black},} if each input at node $ N_2 $ is partitioned into two T1FSs{\color{black}, then} the output $ y_2 $ of node $ N_2 $ is computed in {\color{black}a} similar way {\color{black}to how} the output of the node $ N_1 $ is computed. 

The output $ y_3 $ of the HFIT shown in Fig.~\ref{fig_HFIS_tree} is computed from {\color{black}node} $ N_3 ${\color{black}, which} revives inputs $ y_1 $ and $ y_2 $ and $ x_3 $, where $ y_1 $ and $ y_2 $ are the outputs of {\color{black}nodes} $ N_1 $ and $ N_2 $, respectively. Therefore, the rules at node $ N_3 $, considering each input $ y_1$, $ y_2 $, and $ x_3 $ has two T1FSs $ A^3_{11}, A^3_{12} $, $ A^3_{21}, A^3_{22} $, and $ A^3_{31}, A^3_{32} $ respectively, is represented as:
\begin{equation*}
\begin{array}{l}
R^3_{ijk}: \text{IF } y_1 \text{ is } {A}^3_{1i} \text{ AND } y_2 \text{ is } {A}^3_{2j} \text{ AND } x_3 \text{ is } {A}^3_{3k} \text{ THEN }
y^3_{ijk} =  c^0_{ijk} + c^1_{ijk} y_1 + c^2_{ijk} y_2 + c^3_{ijk} x_3,\\ \text{for } i = 1,2 \text{ and } j = 1,2 \text{ and } k = 1,2.
\end{array}
\end{equation*}
Output $ y_3 $ of node $ N_3 $, which is also the output of the tree (Fig.~\ref{fig_HFIS_tree}){\color{black},} is computed as:
\begin{equation}
y_3 = \frac{\sum_{i=1}^{2}\sum_{j=1}^{2}\sum_{k=1}^{2} f^3_{ijk} B^3_{ijk}}{\sum_{i=1}^{2}\sum_{j=1}^{2} \sum_{j=1}^{2}f^3_{ijk} }
\end{equation}
where the consequent part $ B^3_{ij} $ is computed using~\eqref{eq_type1_consequent} and the firing strength $ f^3_{ijk} $ is computed as:
\begin{equation}
f^3_{ij} = \mu_{A^3_{1i}}(y_1) \mu_{A^3_{2j}}(y_2) \mu_{A^3_{3k}}(x_3),\hspace*{2em} \text{for } i = 1,2 \text{ and } j = 1,2 \text{ and } k = 1,2.
\end{equation}}

\subsubsection{Rules for Type-2 FIS Node}
If the nodes of the HFIT in Fig.~\ref{fig_HFIS_tree} are type-2 nodes{\color{black}, then}, assuming that {\color{black}node} $ N_1 $ has two T2FSs $ \tilde{A}^1_{11}, \tilde{A}^1_{12} $ and $ \tilde{A}^1_{21}, \tilde{A}^1_{22} $, respectively, the rules for T2FIS at {\color{black}node} $ N_1 $ {\color{black}are} generated as:

\begin{equation*}
\begin{array}{l}
R^1_{ij}: \text{IF } x_1 \text{ is } \tilde{A}^1_{1i} \text{ AND } x_2 \text{ is } \tilde{A}^1_{2j} \text{ THEN }
y^1_{ij} =  [c^0_{ij}-s^0_{ij}] + [c^1_{ij}-s^1_{ij}] x_1 + [c^2_{ij}-s^2_{ij}] x_2,\\
\text{for } i = 1,2 \text{ and } j = 1,2
\end{array}
\end{equation*}
and the lower and upper firing strengths $ \underline{f}^1_{ij} $ and $ \bar{f}^1_{ij} $ at {\color{black}node} $ N_1 $ {\color{black}are} computed as: 
\begin{equation}
\underline{f}^1_{ij} = \mu_{\tilde{A}^1_{1i}}(x_1) \mu_{\tilde{A}^1_{2j}}(x_2),\hspace*{2em} \text{for } i = 1,2 \text{ and } j = 1,2
\end{equation}
\begin{equation}
\bar{f}^1_{ij} = \mu_{\tilde{A}^1_{1i}}(x_1) \mu_{\tilde{A}^1_{2j}}(x_2),\hspace*{2em} \text{for } i = 1,2 \text{ and } j = 1,2.
\end{equation}

{\color{black}Then, the left and right weights $\underline{b}^1_{ij}$ and $\bar{b}^1_{ij}$  of the consequent part of the rules are produced by using~\eqref{eq_type2_consequent}. {\color{black}Thereafter}, the type-reduction of the node is performed as described in~\cite{karnik1999type}, where the left and right intervals $ y^1_l $ and $ y^1_r $ are computed as per~\eqref{eq_yl} and \eqref{eq_yr}. During {\color{black}type-reduction}~\cite{karnik1999type}, an early stopping mechanism was adopted to reduce computation time. Finally, {\color{black}output} $ y_1 $ of {\color{black}node} $ N_1 $ is computed as $ y_1 = (y^1_l + y^1_r )/2 $. 

The output computation at {\color{black}node} $ N_2 $ of the tree in Fig.~\ref{fig_HFIS_tree} is similar to that of the output computation of {\color{black}node} $ N_1 $ because{\color{black},} at {\color{black}node} $ N_2 ${\color{black},} there are two inputs and each of {\color{black}these are} partitioned into two T2FSs.  

The output of the type-2 HFIT shown in Fig.~\ref{fig_HFIS_tree} is computed from {\color{black}node} $ N_3 ${\color{black}, which receives} inputs $ y_1 $ and $ y_2 $ and $ x_3 $, where $ y_1 $ and $ y_2 $ are the outputs of {\color{black}nodes} $ N_1 $ and $ N_2 $, respectively. Therefore, the rules at node $ N_3 $, considering each input $ y_1$, $ y_2 $, and $ x_3 $ has two T2FSs $ \tilde{A}^3_{11}, \tilde{A}^3_{12} $, $ \tilde{A}^3_{21}, \tilde{A}^3_{22} $, and $ \tilde{A}^3_{31}, \tilde{A}^3_{32} $ respectively, {\color{black}are} represented as:
\begin{equation*}
\begin{array}{l}
R^3_{ijk}: \text{IF } y_1 \text{ is } \tilde{A}^3_{1i} \text{ AND } y_2 \text{ is } \tilde{A}^3_{2j} \text{ AND } x_3 \text{ is } \tilde{A}^3_{3k} \text{ THEN }\\
\hspace*{3em}y^3_{ijk} =  [c^0_{ijk}-s^0_{ijk}] + [c^1_{ijk}-s^1_{ijk}] y_1 + [c^2_{ijk}-s^2_{ijk}] y_2 + [c^3_{ijk}-s^3_{ijk}] x_3,\\ \text{for } i = 1,2 \text{ and } j = 1,2 \text{ and } k = 1,2.
\end{array}
\end{equation*}
The lower and upper firing strengths $ \underline{f}^3_{ijk} $ and $ \bar{f}^3_{ijk} $ at {\color{black}node} $ N_3 $ {\color{black}are} computed as: 
\begin{equation}
\underline{f}^3_{ijk} = \mu_{\tilde{A}^3_{1i}}(y_1) \mu_{\tilde{A}^3_{2j}}(y_2) \mu_{\tilde{A}^3_{3j}}(x_3), \text{ for } i = 1,2 \text{ and } j = 1,2 \text{ and } k = 1,2
\end{equation}
\begin{equation}
\bar{f}^3_{ijk} = \mu_{\tilde{A}^3_{1i}}(y_1) \mu_{\tilde{A}^3_{2j}}(y_2) \mu_{\tilde{A}^3_{3j}}(x_3), \text{ for } i = 1,2 \text{ and } j = 1,2 \text{ and } k = 1,2.
\end{equation}

After computing the firing strengths and the left and right weights $\underline{b}^3_{ij}$ and $\bar{b}^3_{ij}$ of the rules, the type-reduction at the node is performed by using~\eqref{eq_cos}, where the left and right intervals $ y^3_l $ and $ y^3_r $ are computed as per~\eqref{eq_yl} and \eqref{eq_yr}.  Output $ y_3 $ of {\color{black}node} $ N_3 $, which is also the output of the tree{\color{black},} is computed by averaging $ y^3_l $ and $ y^3_r $ as  $ y_3 = (y^3_l + y^3_r )/2 $.}

\subsection{Structure Tuning (Pareto-based Multiobjective Optimization)}
\label{sec_mo_Strategy}
{\color{black}Usually, a learning algorithm owns a single objective (approximation error minimization) that is often achieved by minimizing the root mean squared error (RMSE) on the learning data}:
\begin{equation}
\label{eq_rmse}
E =  \sqrt{ \frac{1}{N} \sum\limits_{i = 1}^{N} (d_i - y_i)^2}
\end{equation} 
where $ d $ and $ y $ are the desired and the model's outputs, respectively{\color{black},} and $ N $ is the number of data pairs in the training set. However, a single objective comes at the expense of {\color{black}a} model's complexity or the generalization ability on unseen data. The generalization ability broadly depends on the model's complexity {\color{black}(e.g., the number of parameters $ c(\text{\textbf{w}}) $ in the model)}~\cite{jin2005evolutionary}. The minimization of the approximation error $ E $ and the number of free parameters $ c(\text{\textbf{w}}) $ are conflicting objectives. {\color{black}Hence, a Pareto-based multiobjective optimization can be applied to obtained a Pareto set of nondominated solutions, in which no one objective function can be improved without a simultaneous detriment to at least one of the other objectives of the solution~\cite{deb2000fast}}

Therefore, an HFIT that offers the lowest approximation error and simplest structure is the {\color{black}most} desirable {\color{black}one}. To obtain such {\color{black}a} set of Pareto-optimal (nondominated) solutions, a nondominated sorting {\color{black} based MOGP} was applied. %The nondominated solution is also known as a Pareto-optimal solution.
%\begin{mydef}
%	Pareto-dominance - A solution  $ \mathrm{\mathbf{w}}_1 $ is said to dominate a solution $ \mathrm{\mathbf{w}}_2 $ if $ \forall i = 1,2,\ldots,m $, $ f_i(\mathrm{\mathbf{w}}_1) \le f_i(\mathrm{\mathbf{w}}_2) $, and there exists $ j \in \left\lbrace 1,2,\ldots,m\right\rbrace  $ such that $ f_j(\mathrm{\mathbf{w}}_1) < f_j(\mathrm{\mathbf{w}}_2) $.
%\end{mydef}
%\begin{mydef}
%	Pareto-optimal - A solution $ \mathrm{\mathbf{w}}_1 $ is called the \textit{Pareto-optimal} if there does not exist any other solution that dominates it. A set such solutions is called \textit{Pareto Front}.   
%\end{mydef}

{{\color{black} The proposed MOGP acquires the nondominated sorting algorithm}~\cite{deb2000fast} for computing Pareto-optimal solutions from an initial population of fuzzy inference trees. The individuals in {\color{black}MOGP} {\color{black}were} sorted according to their dominance in population. Moreover, individuals were sorted according to the rank/Pareto-front/line. {\color{black}MOGP} is an elitist algorithm that allows the best individuals to propagate into {\color{black}the} next generation. Diversity in population was maintained by measuring the crowding distance among the individuals~\cite{deb2000fast}.
%\begin{algorithm}
%	\begin{algorithmic}[1]
%		\Procedure{Multiobjective GP}{} %\Comment a population of genotype \rightarrow$ Initial solution or solution vector or set of solution vectorsso
%		\State Initialization:  $\text{W}^0$ \Comment $W^0$ indicates a population of randomly generated HFITs
%		\State Evaluation: nondominated sorting of $\text{W}^0$ 
%		\Repeat
%		\State Selection: binary tournament selection
%		\State Generation:  $\text{W}^{c}:= \text{MOGP}_\text{Operator}(\text{W}^t)$ %a new population
%		\State Recombination: $ \text{W}^{g} = \text{W}^t + \text{W}^{c} $
%		\State Evaluation: nondominated sorting of $ \text{W}^{g}  $
%		\State Elitism: $ \text{W}^{t+1} $ =  $ |\text{W}^t| $ best individuals from $ \text{W}^{g} $
%		\Until {\textit{ Stopping criteria satisfied}}		
%		\State \textbf{return} a set of Pareto-optimal solutions
%		\EndProcedure 
%	\end{algorithmic}
%	\caption{MOGP: Multiobjective genetic programming}
%	\label{algo_nsga_gp}
%\end{algorithm} 

%The detailed description of the $ \text{MOGP}_\text{Operator} $ such as selection, crossover and mutation, recombination and elitism are as follows: 
{\color{black}A} detailed description of {\color{black} MOGP algorithm} {\color{black}is} as follows{\color{black}:}
\subsubsection{Initial Population}
Two fitness {\color{black}measures were} considered: \emph{approximation error} minimization and \textit{parameter count} minimization. {\color{black}To simultaneously optimize these objectives} {\color{black}during} the \emph{structure-tuning} phase{\color{black}, an} initial population $\text{W}^0$ of randomly generated HFITs was formed and sorted according to their nondominance{\color{black}.} %Various components of Algorithm~\ref{algo_nsga_gp} is as follows.

\subsubsection{Selection} In selection operation, a \emph{mating pool} $ \text{W}^{p} $ of $ size(\text{W}^0)/2 $ was obtained using \emph{binary tournament selection} that selects two candidates randomly at a time from a  population $ \text{W}^t $, and the best solution (according to its rank and crowding distance) is copied into the mating pool $ \text{W}^{p} $. This process is continued until the mating pool becomes full. 

\subsubsection{Generation} An offspring population $ \text{W}^{c}  $ was generated using the individuals of the mating pool $ \text{W}^{p} $. Two distinct individuals (parents) were randomly selected from the mating pool to create new individuals using the genetic operators crossover and mutation{\color{black}.} 

\subsubsection{Crossover} In crossover operation, {\color{black}randomly} selected sub-trees of two parent trees are swapped (Fig.~\ref{fig_tree_crossover}). The swapping includes the exchange of nodes. A detailed description of the crossover operation in genetic programming is available in~\cite{eiben2015ec,ojha2017ensemble}. The crossover operation is selected with {\color{black}the} crossover probability $ pc $. 

%bouaziz2016evolving, Wongseree2007771
\subsubsection{Mutation} The mutation operators used in HFIT are as follows~\cite{eiben2015ec,ojha2017ensemble}:
{\small 
\begin{enumerate}
	\item[a)]  Replacing a randomly selected terminal $ x_i \in T$ with a newly generated terminal $ x_j \in T$ for $ j \ne i $.
	\item[b)]  Replacing all terminal nodes of an HFIT with a new set of terminal nodes derived from $T$.
	\item[c)]  Replacing a randomly selected FIS node $ N_i \in F$ with a newly generated FIS  node $ N_j \in F$ for $ j \ne i $.
	\item[d)]  Replacing a randomly selected terminal node $ x_i \in T$ with a newly created FIS node $ N_i \in F$.
	\item[e)]  Deleting a randomly selected terminal node $ x_i \in T$ or deleting a randomly selected FIS node $ N_i \in F$.  
\end{enumerate}
}
The mutation operation was selected with {\color{black}the} probability $ pm $, and the type of mutation operator (a or b or c or d or e) was chosen randomly during the mutation operation (Fig.~\ref{fig_tree_mutation}).
\subsubsection{Recombination} The offspring population $ \text{W}^{c} $ and the main population $ \text{W}^{t} $ were mixed together {\color{black}to make} a combined population $ \text{W}^{g} $. 
\subsubsection{Elitism} {\color{black}In this work, elitism was decided according to the rank (based on both RMSE and {\color{black}the} model's complexity) of the individuals (HFITs) in the population. Therefore, in this step, $ size(\text{W}^{c}) $ worst  (poorer rank)  individuals were weeded out from the combined population $ \text{W}^{g} $. In other words, $ size(\text{W}^{t}) $ best individuals are propagated {\color{black}into the} new generation $ t+1 $ as the main population $ \text{W}^{t+1} $. }
\begin{figure}
	\centering
	\subfigure[Crossover operation]
	{
		\includegraphics[width=0.4\columnwidth]{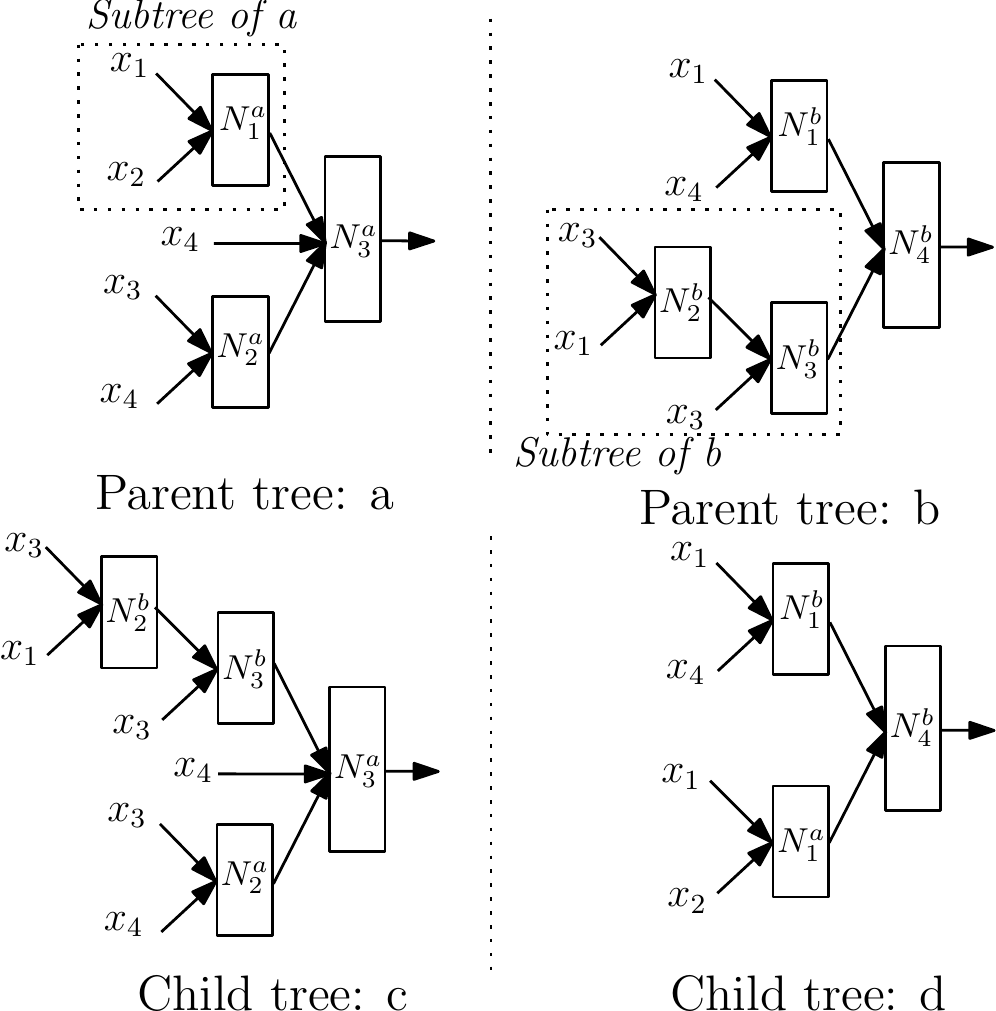}
		\label{fig_tree_crossover}%
		
	}
	\subfigure[Mutation operation]
	{
		\includegraphics[width=0.5\columnwidth]{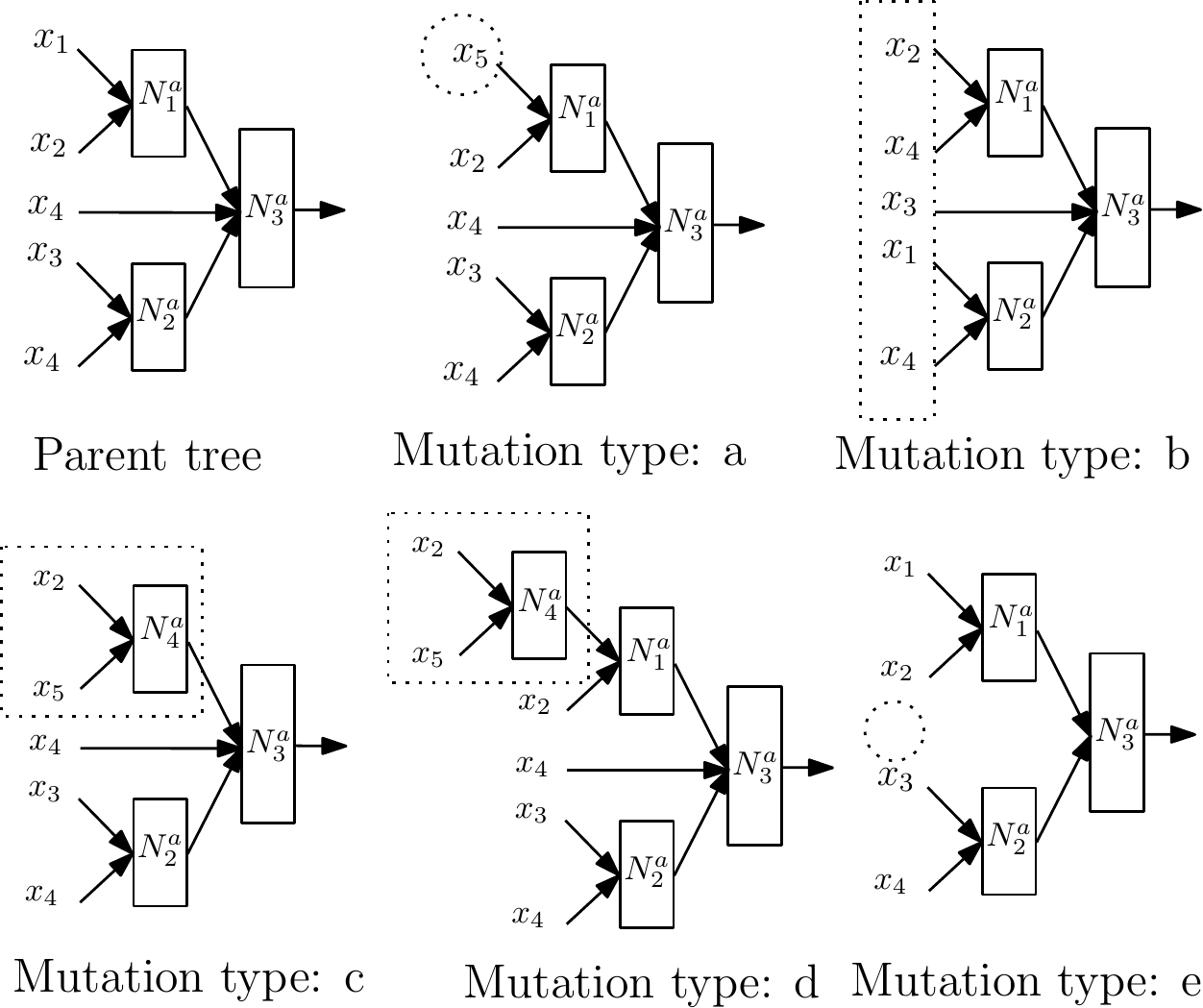}
		\label{fig_tree_mutation}
	}
	%\caption{MOGP Operations: (a) Crossover between two parent trees a and b, where two children trees c and d are formed.  (b) A total five types of mutation of parent tree is shown inc child trees a, b, c, d, and e.}
	\caption{\color{black}MOGP Operations: (a) Crossover between two parent trees a and b.  (b) A total five types of mutation of the parent tree.}
\end{figure}

\subsection{Parameter Tuning}
\label{sec_hfit_DE}
In {\color{black}the} structure tuning phase, an optimum phenotype (HFIT) was derived with the parameters being initially fixed by random {\color{black}guesswork}. Hence, the obtained phenotype was further tuned in the parameter tuning phase by using a parameter optimization algorithm. To tune the parameters of the derived phenotype, its parameters were mapped onto a genotype, i.e., onto a real vector, called {\color{black}a} solution vector. The selection of the best phenotype in a single objective training was solely based on {\color{black}a} comparison of the RMSEs. However, selecting a solution in a multiobjective training is a difficult choice. In this work, after the multiobjective training of HFIT, the best solution for parameter tuning was picked from the Pareto front. Strictly, the solution that gave the best RMSE among the solutions {\color{black}marked rank-one} in the Pareto-optimal set was chosen. Fig.~\ref{fig_pareto_front} is an illustration of the solutions that belong to {\color{black}the} Pareto-front{\color{black}.} 

The genotype mapping of a T1FIS and a T2FIS differ only {in regard to} their number of parameters. {\color{black} In HFIT,} a T1FIS uses the MF mentioned in~\eqref{eq_type1_MF}, which has two arguments $ m $ and $ \sigma $ and each rule in T1FIS has $ {d^i} + 1$ variables {\color{black}in} the consequent part as referred to in~\eqref{eq_type1_consequent}, where $ d^i $ is the number of inputs to the $ i $-th rule. On the other hand, {\color{black} a T2FIS uses IT2FSs, which} are bounded by LMFs and UMFs (Fig.~\ref{fig_MF_T2}) and have two Gaussian means $ m_1 $ and $ m_2 $ and a variance $ \sigma $ {\color{black}to} be optimized. The Gaussian means $ m_1 $ and $ m_2 $ for type-2 Gaussian MF~\eqref{eq_GaussT2MF} were defined as:
{\color{black}\begin{equation*}
m_1 = m + \lambda \sigma\hspace*{1em} \text{ and }\hspace*{1em} m_2 = m - \lambda \sigma, 
\end{equation*}  }   
where $ \lambda \in [0,1]$ is a random variable taken from uniform distribution and $ m $ is the center of {\color{black}the} Gaussian means $ m_1 $ and $ m_2 $ taken from [0, 1]. Similarly, the variance $ \sigma $ of type-2 Gaussian MF~\eqref{eq_GaussT2MF} was taken from [0, 1]. The consequent part of {\color{black}the} T2FIS was computed  {\color{black}according to}~\eqref{eq_type2_consequent}, which led to $ 2 (d^i + 1) $ variables. 

\begin{figure}
	\centering
	\includegraphics[width=0.6\columnwidth]{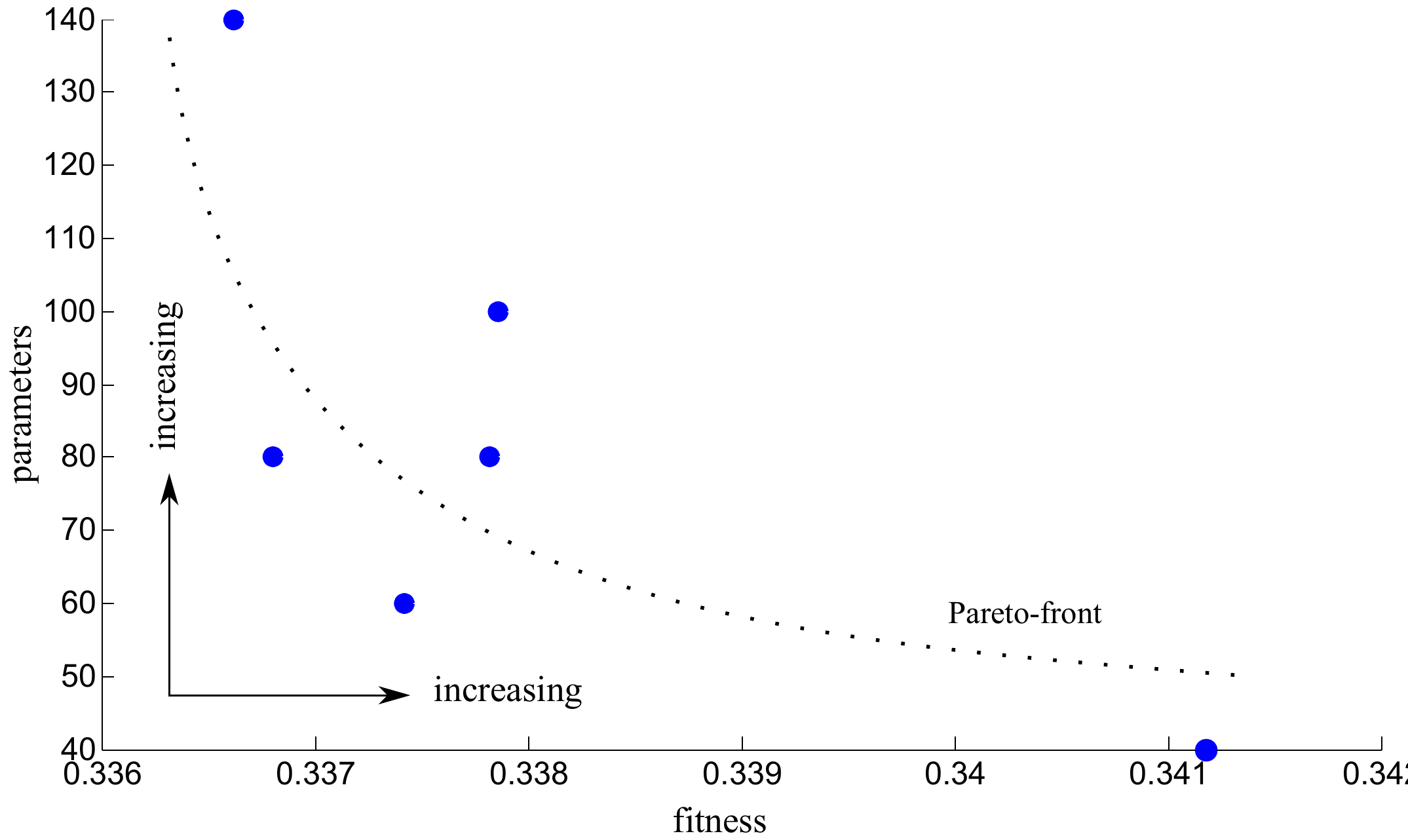}%
	\caption{FIS fitness versus FIS {\color{black}parameter} mapping across the Pareto-front. This graph was {\color{black}created} during a multiobjective training of the example--2 mentioned in Section~\ref{sec_fis_exp_2}.}
	\label{fig_pareto_front}
\end{figure}

Assume that an HFIT (a tree like Fig.~\ref{fig_HFIS_tree}) has $ k $ many nodes and each node in the phenotype takes $ 2 \le d^i \le tn$ inputs, where each input is partitioned into two fuzzy sets (MFs). {\color{black}Then, the number of the fuzzy sets at a node is $ 2d^i$. Since the number of inputs at a node is $ d^i $ and each input is partitioned into two fuzzy sets,  the number of rules at a node is $ 2^{d^i} $. Hence, the number of parameters at a T1FIS node is {\color{black}$[2(2d^i) + 2^{d^i}({d^i} +1)] $} and the number of parameters at a T2FIS node is {\color{black}$[3(2d^i) + 2^{d^i}(2({d^i} +1))] $}. Therefore, the total number of parameters in an HFIT is the summation of the number of parameters at all $ k $ nodes in the tree. For example, the number of parameters in the type-1 HFIT and type-2 HFIT shown in Fig.~\ref{fig_HFIS_tree} are 84 and 154 respectively.}

Assuming $ n $ is the total number of parameters in a tree, the genotype or the solution vector $ \text{\textbf{w}}  $ corresponding to the tree {\color{black}(phenotype)} is expressed as: 
\begin{equation}
\label{eq_realVector}
\text{\textbf{w}} = \langle w_1,w_2,\ldots,w_n\rangle
\end{equation} 
{\color{black}Now}, to optimize parameter vector $ \text{\textbf{w}} $, {\color{black}a parameter optimizer} can be used: genetic algorithms~\cite{eiben2015ec}, evolution strategy~\cite{eiben2015ec}, artificial bee colony~\cite{karaboga2007powerful}, PSO~\cite{poli2007particle}, DE~\cite{das2016recent}, gradient-based algorithms~\cite{snyman2005practical}, backpropagation~\cite{werbos1990backpropagation}, Klaman-filter~\cite{haykin2001kalman}, etc. 	

In this work, {\color{black}the} differential evolution (DE) version ``DE/rand-to-best/1/bin''~\cite{qin2009differential} was used, which is a metaheuristic algorithm that uses a crossover operator inspired by the dynamics of {\color{black}``natural selection.''} The basic principle of the DE is as follows{\color{black}:} First, an initial population matrix $\text{W}^t =  (\text{\textbf{w}}_1,\text{\textbf{w}}_2,\ldots,\text{\textbf{w}}_P)$ at the iteration $  t = 0 $ is randomly initialized. The population $\text{W}^t $ contains $ P $ many solution vectors. A solution vector $\text{\textbf{w}} $ in the population is an $n$-dimensional vector representing the free parameters of an HFIT. Secondly, the population $\text{W}^{t+1}$ is created using binomial trials. Hence, to create a new solution vector for the population $\text{W}^{t+1}$, three distinct solution vectors $\text{\textbf{w}}^a,$ $\text{\textbf{w}}^b$, and $\text{\textbf{w}}^c$ and the best solution vector $ \text{\textbf{w}}^g $ are selected from the population $\text{W}^{t}$. Then, for a random index $k \in [1, n]$ and for the selected trial vector $\text{\textbf{w}}^a= \langle w_1^a,w_2^a,\ldots,w_n^a\rangle$, the $ j $-th variable of the modified trial vector $\text{\textbf{w}}^{a'}$ {\color{black}are} created as:  
\begin{equation}
\label{eq_deEquation}
w_j^{a'}  =
\begin{dcases}
\begin{split}
w^a_j + F(w^g_j - w^a_j ) + F (w^b_j - w^c_j ),\hspace*{2em} 
\end{split}
& r_j < cr \parallel j = k  \\
\begin{split}
w_j^a,
\end{split}
& r_j \ge cr 
\end{dcases}
\end{equation}
where $ r_j \in [0, 1] $ is a uniform random sample, $cr \in [0, 1]$ is the crossover rate, and $ F \in [0, 2]$ is the differential weight.
Similarly, all the variables $ j = 1 \text{ to }  n$ of the trial vector $\text{\textbf{w}}^{a}$ is created using~\eqref{eq_deEquation}. After creation of the modified trial vector $\text{\textbf{w}}^{a'}$, it is \textit{recombined} as:
\begin{equation}
\label{eq:deRecombination}
\text{\textbf{w}}^a = \left\lbrace 
\begin{array}{ll}
\text{\textbf{w}}^{a'},\hspace*{2em} & E(\text{\textbf{w}}^{a'}) < E(\text{\textbf{w}}^a) \\
\text{\textbf{w}}^a,\hspace*{2em} & E(\text{\textbf{w}}^{a'}) \ge E(\text{\textbf{w}}^a)\\
\end{array}		
\right.
\end{equation}
where {\color{black}$ E(.) $} is the function that returns the fitness of a solution vector using~\eqref{eq_rmse}.
%The basic framework of the iterative learning of the DE is laid down in Algorithm~\ref{algo_mhAlgo}.
{\color{black} {\color{black}In DE} {\color{black},} operators{\color{black}, such as} \emph{selection}, \emph{crossover}, and \emph{recombination} were repeated until a {\color{black}satisfactory} solution vector $ \mathrm{\mathbf{w}}^* $ was found or no improvement was observed compared to an obtained solution over a fixed period (100 DE iterations).}
%\begin{algorithm}
%	\begin{algorithmic}[1]
%		\Procedure{Metaheuristics}{} %\Comment a population of genotype \rightarrow$ Initial solution or solution vector or set of solution vectorsso
%		\State Initialize  $\text{W}^0$  \Comment $W^0$ indicates {\color{black}the} initial population of solution vectors.
%		\State The fittest  solution $\text{\textbf{w}}^* = \text{fittest}(\text{W}^0)$
%		\Repeat
%		\State $\text{W}^{t+1} = \text{DE}_\text{Operator}(\text{W}^t)$ %\Comment New solution/set of solution generated using Metaheuristic operator
%		\State $\hat{\text{\textbf{w}}} = \text{fittest}(\text{W}^{t+1})$ %\Comment $\mathrm{w}^*  \rightarrow$ Optimum solution selection
%		\If{$\hat{\text{\textbf{w}}}< \text{\textbf{w}}^*$}
%		\State $\text{\textbf{w}}^* = \hat{\text{\textbf{w}}}$
%		\EndIf    
%		\Until {\textit{ Stopping criteria satisfied}}		
%		\State \textbf{return} $\text{\textbf{w}}^*$
%		\EndProcedure 
%	\end{algorithmic}
%	\caption{DE: Metaheuristic optimization framework}
%	\label{algo_mhAlgo}
%\end{algorithm} 

\section{Theoretical Evaluation}
\label{sec_evaluation_the}
Efficiency of the proposed HFIT comes from a combined influence of three basic operations involved in the model's development: tree construction through MOGP, combining several low-dimensional fuzzy systems in a hierarchical manner, and parameters tuning through differential evolution (DE). Hence, HFIT bears many distinguished properties that define its prediction efficiency compared to many models invoked from literature for comparison. Following are the HFIT's properties: 1) Convergence ability of the evolutionary class algorithms (EA) or for that matter MOGP. 2) Approximation ability of the evolved hierarchical fuzzy system (tree model). 3) Convergence ability of DE in tree's parameters tuning. Subsequent discussions theoretically analyze each of these properties one-by-one.
\subsection{Optimal tree structure through MOGP convergence}
\label{sec:MOGP_convergence_proof}
Evaluating the convergence of evolutionary class algorithms has been a challenging task because of their stochastic nature. Theoretical studies of EAs performed through various perspectives show that indeed an optimal solution is possible in a finite time. Initially, Goldberg and Sergret~\cite{goldberg1987finite} showed convergence property of GA using a finite Markov chain analysis, where they considered GA with a finite population and recombination and mutation operators. 

A different viewpoint of MOGP convergence (EAs in general) can be referred to as by using Banach fixpoint theorem described in~\cite{szalas1993contractive}. Banach fixpoint theorem~\cite{Banach1922} states that on a metric space a constructive mapping $ f $ has a unique fixpoint, i.e., for an element $ x $, $ f(x) = x $. Therefore, Banach fixpoint theorem can explain MOGP convergence with only assumption that there should be an improvement of the population (not necessarily of the optimal solution) from one generation to another. Banach fixpoint theorem also indicates that if MOGP semantics is to be considered as a transformation between one population to another and if it is possible to obtain a metric space in which transformation is constructive, then MOGP converges to a optimal population $ \text{W}^* $, i.e., to a population containing optimal solution. 

A mapping $ f $ defined on elements of ordered pair set $ S $ is constructive if the distance between $ f(x) $ and $ f(y) $ is less than $ x $ and $ y $ for any $ x,y \in S $. Now, distance mapping $ \delta:S \times S \rightarrow \mathbb{R} $ is a metric space iff for any $ x,y \in S $ the following condition satisfy:
\begin{itemize}
	\item $ \delta(x,y) \ge 0$ and $\delta(x,y) = 0 $ if $ x = y $
	\item $ \delta(x,y) =  \delta(y,x) $
	\item $ \delta(x,y) +  \delta(y,z) \ge  \delta(x,z) $
\end{itemize}
Let $ \langle S,\delta \rangle $ be a metric space and  $ f:S \rightarrow S$ be a mapping, then $ f $ is \textit{constructive} iff there is a constant $ \epsilon \in [0,1) $ such that for all $ x,y \in S $
\begin{equation}
\delta(f(x),f(y)) \le \epsilon  \delta(x,y)
\end{equation}
Therefore, for Banach theorem's formulation, the completeness of the metric space needs to be defined. Now, metric space elements $ p_0, p_1, \ldots $ are a Cauchy sequence iff for any $ \epsilon > 0 $, there exist $ k $ such that for all $ m,n > k,$ $\delta(p_m, p_n) < \epsilon $. It also follows that, if such Cauchy sequence $ p_0, p_1, \ldots $ has a limit $ p = \lim_{n \rightarrow \infty} p_n $, then metric space is complete.
\begin{mytherm}
	For a complete metric space $ \langle S,\delta \rangle $ and constructive mapping $ f:S \rightarrow S$, mapping $ f  $ has a unique fixpoint $ x \in S $ such that for any $ x_0 \in S $
	$$ x = \lim_{i \rightarrow \infty} f^i(x_0)$$
	where $ f^0(x_0) = x_0 $ and $ f^{i+1}(x_0) = f(f^i(x_0)) $ 
\end{mytherm}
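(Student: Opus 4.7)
The plan is to prove the classical Banach fixpoint theorem by the standard three-stage argument: construct a candidate fixpoint as the limit of the iterated sequence $x_n = f^n(x_0)$, verify it is indeed fixed by $f$, and then rule out alternatives. The whole proof rests on turning the contractivity constant $\epsilon \in [0,1)$ into geometric decay, which is what makes the iterated sequence Cauchy so that completeness of $\langle S,\delta\rangle$ can supply a limit.

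First I would fix an arbitrary $x_0 \in S$ and define the orbit $x_n = f^n(x_0)$. Applying the constructive inequality $\delta(f(x),f(y)) \le \epsilon\, \delta(x,y)$ inductively at consecutive pairs gives $\delta(x_{n+1},x_n) \le \epsilon^{\,n}\, \delta(x_1,x_0)$. A telescoping triangle inequality then yields, for all $m > n$,
\begin{equation*}
\delta(x_m, x_n) \;\le\; \sum_{k=n}^{m-1} \delta(x_{k+1}, x_k) \;\le\; \delta(x_1,x_0) \sum_{k=n}^{m-1} \epsilon^{\,k} \;\le\; \frac{\epsilon^{\,n}}{1-\epsilon}\,\delta(x_1,x_0).
\end{equation*}
Since $\epsilon < 1$, the right-hand side tends to $0$ as $n \to \infty$, so the orbit is a Cauchy sequence. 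By the completeness assumption on $\langle S,\delta\rangle$ it admits a limit $x = \lim_{n\to\infty} x_n$.

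Next I would show $f(x) = x$. Observe that any constructive mapping is Lipschitz with constant $\epsilon$, hence continuous: if $y_n \to y$, then $\delta(f(y_n), f(y)) \le \epsilon\, \delta(y_n, y) \to 0$. Therefore
\begin{equation*}
f(x) \;=\; f\!\left(\lim_{n\to\infty} x_n\right) \;=\; \lim_{n\to\infty} f(x_n) \;=\; \lim_{n\to\infty} x_{n+1} \;=\; x,
\end{equation*}
which is exactly the fixpoint identity $x = \lim_{i\to\infty} f^i(x_0)$ required by the statement. Finally, for uniqueness, suppose $x,y \in S$ both satisfy $f(x)=x$ and $f(y)=y$. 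Then $\delta(x,y) = \delta(f(x), f(y)) \le \epsilon\, \delta(x,y)$, which forces $(1-\epsilon)\,\delta(x,y) \le 0$; since $1-\epsilon > 0$ and $\delta \ge 0$, this gives $\delta(x,y) = 0$ and hence $x=y$.

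The argument is almost entirely routine; the only subtle step is the Cauchy estimate, where the geometric-series bound must be made uniform in $m$ so that the tail decay depends only on $n$. The continuity remark used to pass $f$ through the limit is immediate from contractivity, and uniqueness collapses once one realises that $\epsilon < 1$ is strict. No step requires anything beyond the three metric-space axioms listed in the excerpt together with the completeness hypothesis.
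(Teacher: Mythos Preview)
Your proof is correct and is precisely the standard \emph{method of successive approximation}: geometric control of consecutive iterates, a telescoping Cauchy estimate, completeness to produce the limit, continuity of the contraction to identify it as a fixpoint, and the strict inequality $\epsilon<1$ for uniqueness. The paper does not supply its own argument at all---it simply cites Dixmier's textbook for exactly this method---so your write-up is in fact a fuller version of what the paper defers to a reference.
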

\begin{proof}
	A proof of Banach theorem can be found in~\cite[p. 60]{dixmier1984general} described as method of successive approximation.
\end{proof}   

In this article, it is necessary to show that if a metric space $ S $ for MOGP population can be obtained, then any constructive mapping $ f $ in MOGP will contain a unique fixpoint. The proposed MOGP has a fixed population size (say $ n $), i.e., each population contain $ n $ individuals, and in each generation, the total fitness of the population is expected to increase. Let $ \theta $ be a function that computes the fitness of a population, which is expressed as:
\begin{equation}
\theta(\text{W}) = \frac{1}{n} \sum_{\text{\textbf{w}}_i \in \text{W}} \frac{\sum_{\text{\textbf{w}}_i \in \text{W}} E(\text{\textbf{w}}_i)}{E(\text{\textbf{w}}_i)} 
\end{equation}   
where function $ E(\text{\textbf{w}}_i) $ evaluates RMSE of each $ \text{\textbf{w}}_i $. Now, distance mapping $ \delta:S \times S \rightarrow \mathbb{R} $, where $ S $ is a set of MOGP populations, can be defined as:
\begin{equation}
\delta(\text{W}_1, \text{W}_2) = \left\lbrace
\begin{array}{ll}
0, &  \text{W}_1 = \text{W}_2\\
|\theta (\text{W}_1)| + |\theta (\text{W}_2)|, & \text{W}_1 \ne \text{W}_2
\end{array}
\right.
\end{equation}
It follows that
\begin{itemize}
	\item $ \delta(\text{W}_1,\text{W}_2) \ge 0$ and $\delta(\text{W}_1,\text{W}_2) = 0 $ if $ \text{W}_1 = \text{W}_2 $ holds for any population $ \text{W}_1 $ and $ \text{W}_2 $ in MOGP.
	\item $ \delta(\text{W}_1,\text{W}_2) =  \delta(\text{W}_2,\text{W}_1) $ is obvious and 
	\item $ \delta(\text{W}_1,\text{W}_2) + \delta(\text{W}_2,\text{W}_3) = |\theta (\text{W}_1)| + |\theta (\text{W}_2)| + |\theta (\text{W}_2)| + |\theta (\text{W}_3)| \ge |\theta (\text{W}_1)| + |\theta (\text{W}_3)| = \delta(\text{W}_1,\text{W}_3) $
\end{itemize}
Therefore, MOGP has a metric space $ \langle S, \delta \rangle $. Now, it only remains to show that the MOGP follows a constructive mapping $ f: S \rightarrow S$, i.e., in each subsequent generation of MOGP, an improvement is possible. Altenberg~\cite{altenberg1994evolution} showed that by maintaining genetic operators, such as selection, crossover, and  mutation, the evolvability of genetic programming can be increased. Additionally, Altenberg~\cite{altenberg1994evolution} analyzed the probability of a population containing fitter individuals than the previous population and offered the subsequent proof. It was observed that even for a random crossover operation, genetic programming evolvability can be ensured. It is then necessary to say that, indeed an MOGP can produce fitter population than the previous ones. 

Let's depart from MOGP operations descriptions to continue with Banach theorem since it is now known that MOGP offers constructive mapping $ f:S\rightarrow S $, for which $ t $-th iteration population offers constructive mapping. In other words, $ \theta(\text{W}^t) < \theta(\text{W}^{t+1})$, i.e., mapping $ f(\text{W}^t) = \text{W}^{t+1} $ holds. It follows that 
$$ \delta(f(\text{W}^t_1),f(\text{W}^t_2))  < \delta(\text{W}^t_1,\text{W}^t_2)  $$ 
Moreover, it satisfies Banach fixpoint theorem. Hence, 
\begin{equation}
\text{W}^* = \lim_{i\rightarrow \infty} f^i(\text{W}^0)
\end{equation}
It indicates that MOGP converges to a population $ \text{W}^* $, which is a unique fixpoint in a population space. 
\begin{remark}
	It is evident from MOGP operation that it produces an optimal tree structure from a population space. Although obtaining optimality in the tree design using MOGP is sufficient to claim the formation of a function that can approximate to a high degree of accuracy, it is necessary to investigate the approximation capability of the hierarchical fuzzy system developed in the form of a tree structure. 
\end{remark}    
\subsection{Approximation ability of hierarchical fuzzy inference tree}
\label{sec:approximation_HFIT}
This Section describes the approximation capability of an HFIT, which is a result of MOGP operation. Theoretical studies of special cases of the hierarchical fuzzy systems are provided in~\cite{wang1999analysis,joo2005class}. Whereas, the proposed HFIT produces a general hierarchical fuzzy system. In HFIT, not only a cascaded hierarchy of fuzzy system (a fuzzy subsystem takes input only from its previous fuzzy subsystem~\cite{wang1999analysis}) can be produced, but a general hierarchical fuzzy system, in which a fuzzy subsystem can take inputs from any previous layer fuzzy subsystem, can be produced. A hierarchical fuzzy system described in~\cite{zeng2005approximation} resembles the hierarchical fuzzy system produced by HFIT. To show the approximation capability of the proposed HFIT, it requires coming to the conclusion that the proposed HFIT is analogous to the hierarchical fuzzy system described by Zeng and Keane~\cite{zeng2005approximation}. 

Let's perform an analogy between the proposed HFIT and the concept of a natural hierarchical fuzzy system described by Zeng and Keane~\cite{zeng2005approximation}. To show such an analogy, at first, it needs to establish the definition of the natural hierarchical structure of a continuous function, then it will be necessary to show that, for any such continuous function, a hierarchical fuzzy system exists. 

Let's take the example of the HFIT shown in Fig.~\ref{fig_HFIS_tree}, which can be represented as natural hierarchical structure of a continuous function. The tree in Fig.~\ref{fig_HFIS_tree} gives the output $ y_3 $ from node $ N_3 $. Moreover, the tree in Fig.~\ref{fig_HFIS_tree} gives the following functions:
$$ y_3 = N_3(y_1,y_2,x_3) \quad  y_1 = N_1(x_1,x_2)  \quad y_2 = N_2(x_4,x_5)$$ 
It can also be expressed as:
\begin{equation}
\label{eq:tree_epression}
N(x_1,x_2,x_3,x_4,x_5) = N_3[N_1(x_1,x_2),N_2(x_4,x_5),x_3]
\end{equation}
It follows that, for a given function $ y = N(x_1,x_2,x_3,x_4,x_5) $, if there exist functions $ N_3,N_2,N_1 $ such that function~\eqref{eq:tree_epression} can be obtained, then function $ N(x_1,x_2,x_3,x_4,x_5) $ can be represented as hierarchical structure. 

For simplicity's sake, let's take the case of a two-stage tree, where the top layer node is denoted by $ N^1 $ and its output is by $ y $. Similarly, second layer nodes are denoted by $ N^2_i $ and their outputs are by $ y^2_i $, for $ 1\ge i \le m$. Therefore, a natural hierarchical structure can be defined as:   
\begin{mydef}[Natural Hierarchical Structure]
	Let $ y = N(x_1,\ldots,x_n) $ be a multi-input-single-output continuous function with $ n $ input variables $ \textbf{\text{x}} = \langle x_1,\ldots, x_n \rangle $ defined on input space $ U = \prod_{i=1}^n U_i \subset \mathbb{R}^n $ and the output $ y $ defined on the output space $ V \subset \mathbb{R} $. If there exist $ m + 1 $ continuous functions
	\begin{equation}
	\begin{aligned} 
	y  & = N^1(y^2_1,\ldots,y^2_m, x^1_i, \ldots, x^1_{d_1^i})  \\
	y^2_j &= N^2_j(x^{2j}_i, \ldots, x^{2j}_{d_{2j}^i}) 
	\end{aligned}
	\end{equation} 
	and the functions have inputs $ x^1_{d_1^i} $ and $ x^2_{d_{2j}^i} $, where $ d_1 < n $ and $ d_{2j} < n $ are input dimensions at the top and second stage of hierarchy, respectively, such that 
	\begin{equation}
	N(x_1,\ldots,x_n)  = N^1[N^2_1(x^{21}_i, \ldots, x^{21}_{d_{21}^i}),\ldots, N^2_m(x^{2m}_i, \ldots, x^{2m}_{d_{2m}^i}) , x^1_i, \ldots, x^1_{d_1^i}]  
	\end{equation} 
	then $ N(x_1,\ldots,x_n) $ is a continuous function with natural hierarchal structure.	 
\end{mydef}
Such form of natural hierarchical structure also possesses separable or arbitrarily separable hierarchical structural property, i.e., the individual functions can be decomposed~\cite{zeng2005approximation}. Now, from Kolmogorov's Theorem~\cite{kolmogorov1963representation}, the following can be stated: Any continuous function $ N(x_1,\ldots,x_n) $ on $ U = \prod_{i=1}^n [\alpha_1,\beta_i]$ ($ \alpha_i$ and $\beta_i $ define the input range) can be represented as a sum of $ 2n + 1 $ continuous functions with an arbitrarily separable hierarchical structure. This statement concludes to the following theorem.
\begin{mytherm}
	\label{theorem:fuzzy_universsal}
	Let $ N(\text{\textbf{x}}) $ be any continuous function on $ U = \prod_{i=1}^n [\alpha_i,\beta_i]$ and its hierarchical structure representation be $ N(\text{\textbf{x}}) = N^1[N^2(\text{\textbf{x}}),\ldots,N^m(\text{\textbf{x}})] $, in which $ N^j(\text{\textbf{x}})(j = 1, \ldots, m) $ are continuous functions with natural hierarchical structure, then for any given $ \epsilon > 0 $, there exists a hierarchical fuzzy system
	$$ G(\text{\textbf{x}}) = G^1[G^2(\text{\textbf{x}}),\ldots,G^m(\text{\textbf{x}})] $$  
	which has the same natural hierarchical structure as $ N(\text{\textbf{x}}) $ such that
	\begin{equation}
	\|N -G \|_\infty < \epsilon
	\end{equation}
	\begin{equation}
	\|N^i - G^i \|_\infty < \epsilon  \quad i = 0,1,\ldots,m
	\end{equation}
	and the same holds between the sub-functions of $ N^i(\text{\textbf{x}}) $ and the fuzzy subsystems of $ G^i(\text{\textbf{x}}) (i=0,1,\ldots,m) $.
\end{mytherm}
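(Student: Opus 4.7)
My plan is to combine the classical universal approximation property of TSK fuzzy systems with a bottom-up induction on the levels of the natural hierarchy, carefully propagating the approximation error through the uniform continuity of the outer function. The starting ingredient is the single-node universal approximation fact (already invoked in the paper through~\cite{wang1999analysis,zeng2005approximation}): for any continuous function $f$ on a compact box $K\subset\mathbb{R}^k$ and any $\delta>0$, there exists a TSK fuzzy system with Gaussian MFs whose output $g$ satisfies $\|f-g\|_\infty<\delta$. Because an HFIT node is exactly such a TSK system, this lemma is the building block I will apply at every node of the tree.

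I would then work bottom-up on the hierarchy. For each leaf-level continuous sub-function $N^j(\text{\textbf{x}})$, I use the lemma to build a fuzzy subsystem $G^j$ on the same reduced input set so that $\|N^j-G^j\|_\infty<\delta_j$, where the tolerances $\delta_j>0$ will be fixed later. A point that needs attention is that the outputs of the $G^j$ are fed as inputs to the outer node $G^1$, so I must guarantee that each $G^j$ maps into the compact interval on which $G^1$ is later constructed; I would handle this by enlarging the output box of $N^j$ by a controlled amount and constructing $G^1$ on that enlarged compact set.

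The core estimate is at the top level. Writing $N(\text{\textbf{x}})=N^1[N^2(\text{\textbf{x}}),\ldots,N^m(\text{\textbf{x}}),x^1_i,\ldots]$ and $G(\text{\textbf{x}})=G^1[G^2(\text{\textbf{x}}),\ldots,G^m(\text{\textbf{x}}),x^1_i,\ldots]$, the triangle inequality gives
\begin{equation*}
\|N-G\|_\infty \;\le\; \bigl\|N^1(G^2,\ldots,G^m,\cdot)-G^1(G^2,\ldots,G^m,\cdot)\bigr\|_\infty + \bigl\|N^1(N^2,\ldots,N^m,\cdot)-N^1(G^2,\ldots,G^m,\cdot)\bigr\|_\infty.
\end{equation*}
The first summand is bounded by $\epsilon/2$ by choosing $G^1$ to approximate $N^1$ within $\epsilon/2$ on the enlarged compact domain. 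For the second, since $N^1$ is continuous on a compact set it is uniformly continuous with some modulus $\omega_{N^1}$, and the summand is bounded by $\omega_{N^1}(\max_j\delta_j)$; I force this below $\epsilon/2$ by taking each $\delta_j$ small enough. This simultaneously yields $\|N^j-G^j\|_\infty<\epsilon$ for every sub-function, which is the second conclusion of the theorem.

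The main obstacle will be the careful bookkeeping of compact domains and moduli of continuity across levels: the error budget has to be assigned top-down, starting from $\epsilon/2$ at the root, then using $\omega_{N^1}$ to determine the admissible input perturbation, then recursing with those tighter tolerances into each child. For trees deeper than two levels (which the general HFIT construction in Section~\ref{sec_hfit_tree} produces), the same decomposition is iterated via an induction on tree depth, with the $\delta$'s at deeper nodes determined by composing the moduli of all ancestor functions; uniform continuity on compact sets keeps each induced modulus well-defined, so the recursion terminates with strictly positive tolerances at the leaves. With the two-level case made precise as above, the general case then follows by this induction, which is essentially the construction of Zeng and Keane~\cite{zeng2005approximation} rewritten in the HFIT notation, completing the proof.
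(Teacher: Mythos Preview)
Your proposal is correct and is essentially the standard construction of Zeng and Keane~\cite{zeng2005approximation}, which you yourself acknowledge at the end. The paper does not give its own argument for this theorem at all: its entire proof is the single sentence ``Proof of Theorem~\ref{theorem:fuzzy_universsal} can be found in~\cite{zeng2005approximation},'' so your sketch in fact supplies considerably more detail than the paper does, while following the same route as the cited reference.
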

\begin{proof}
	Proof of Theorem~\ref{theorem:fuzzy_universsal} can be found in~\cite{zeng2005approximation}.
\end{proof}	
\begin{remark}
	It is to note that Theorem~\ref{theorem:fuzzy_universsal} shows that hierarchical structure of fuzzy systems is universal approximators. Therefore, they can approximate any continuous function $ N(\text{\textbf{x}}) $ to any degree of accuracy as-well-as they can approximate each component of that function. Hence, the proposed HFIT that can form a natural hierarchical structure can achieve universal structure approximation.   
\end{remark}
Another property of the proposed HFIT is the parameter tuning, which is performed by a global optimizer (e.g., DE was applied in this research). Hence, it is required to investigate the convergence ability of the DE algorithm in parameter tuning of HFIT.

\subsection{Optimal parameter through differential evolution convergence}
Convergence property and efficiency of DE is well studied~\cite{xue2005modeling,zhang2009theoretical}. A probabilistic viewpoint of DE convergence followed by a description of global convergence condition for DE is described in~\cite{hu2013sufficient}. They show that indeed DE converges to an optimal solution. Similarly, Zhang and Sanderson~\cite{zhang2009theoretical} studied the various property of DE, such as mutation, crossover and recombination operators that influence the DE convergence. DE follows a similar property as of EA class algorithms described in Section~\ref{sec:MOGP_convergence_proof}. Hence, its global convergence ability is not different than the one described for MOGP, and indeed it finds an optimal parameter vector for HFIT.

\subsection{Comparative study of HFIT with other models}
\label{sec_the_comparison}
The proposed HFIT learns knowledge contained in the training data through adaptation in its structure and the rules generated at its nodes. Such a process of learning/acquiring knowledge from data is somehow similar to the models having network-like layered architecture, i.e., ANFIS-like approaches, which usually have 4 or 5 or 6 layered network structure. However, HFIT's strength comes from its adaptive structure formation, whereas most of the network-like models have fixed layer structure. 

\subsubsection{Flexible structure formation}
Specifically, the models depending on layered structure (e.g., HyFIS, DENFIS, D-FNN, EFuNN, FALCON, GNN, SaFIN, SONFIN, SuPFuNIS, eT2FIS, IT2FNN-SVR-N/F, McIT2FIS-UM/US, RIT2FNS-WB, SEIT2FNN, SIT2FNN, TSCIT2FNN, etc.) can only provide adaptation in the number of generated rules in their hidden layer by keeping the input (first) and output (last) layer fixed. Network-like model's fixed layered architecture in some sense limits their representational flexibility as compared to HFIT. 

In Section~\ref{sec:approximation_HFIT}, it was shown that HFIT has the capability of representing any continuous function in any natural and arbitrarily separable hierarchical form. Therefore, it can be said that the network-like models that grow rules only in one direction have a shortfall in structural representation compared to HFIT, which can grow in layer-wise as-well-as breadth-wise. 

\subsubsection{Diverse fuzzy rules formation}
Additionally, the interaction of one RB to another through the structural representation is what sets HFIT apart from the other models, which generate only a single RB and do not have the interaction as it is in HFIT. Moreover, nodes in HFIT take difference input's combination govern by MOGP. Therefore, HFIT nodes exhibit heterogeneity, which drives the formation diverse rules in the nodes of HFIT. Whereas, rules in network-like models use same combination inputs while adding rules in the hidden layer during their training process.   

\subsubsection{Automatic fuzzy set selection}
Adaptation in the most of the network-like models is due to the input space partitioning (usually for choosing the number of membership function at the second layer) in two or three fixed fuzzy sets or by using some clustering method, which directly influences the number of rules to form in the third layer (usually called rule layer). The necessity of predefining the number of clusters is basic disadvantage with the clustering based partitioning. Some of the practices in the clustering based partitioning, like the one in SaFIS, are devoted to improving the clustering algorithms to avoid the requirements of such predefinition. However, the overlapping of the membership function of the fuzzy sets is another common problem with clustering based input space partitioning~\cite{tung2011safin}. In~\cite{paul2002subsethood}, authors pointed out four different cases of membership function's overlapping and proposed subsethood method to transmit the overlapping information to the rules layer.   

On the other hand, HFIT does not use clustering to determine input space partitions. Instead, each input at each HFIT's node is partitioned into two fuzzy sets, which is eventually determined by MOGP through evolution. Section~\ref{sec:MOGP_convergence_proof} shows that MOGP finds an optimum solution through its iterations and the use of genetic operators. Hence, MOGP at it best avoids the overlapping of the membership function of the fuzzy sets and also eliminates the requirement of an external agent for input space partitioning.  

\subsubsection{Minimal feature set selection}
Feature selection is another important aspect of HFIT. The network-like models such as EFuNN and DENFIS also does feature selection externally (say by external agents). However, in a sense, feature selection in such kind of models do not have direct participation in the structural representation of knowledge contained in training data. Whereas, feature selection is an integral part HFIT's learning process. Hence, feature selection performed by HFIT incorporates knowledge contained in training data into its structural representation in an explicit way compared to other network-like models. Since an external agent performs feature selection in the network-like models and many other models do not even perform feature selection, they are disadvantageous compared to HFIT when in comes to solving high-dimensional problems.

\subsubsection{Parameter tuning}
Finally, most of the models such as HyFIS, DENFIS, SaFIN, SONFIN, SEIT2FNN, McIT2FIS-UM/US, etc. employ gradient-based methods (e.g., backpropagation) for the parameter tuning. The gradient-based techniques are known as local optimizers, which lacks exploration capability compared to global optimizers (e.g., DE)~\cite{deStorn1997}. HFIT employs DE for its parameter optimization. When it comes to comparing models theoretically, it is not necessary to go deep in parameter tuning debate since such parameter tuning method like DE can also be applied to other models and backpropagation can be applied to HFIT. However, at present scenario, a combined effort of the proposed HFIT model, in this article, have an advantage compared to other models.

%, it can be said that, for a given sequence of DE populations $ \text{W}^t, t=0,1,2,\ldots $ and for a minimization as problem $ \min\{ E(\textbf{\text{w}}); \textbf{\text{w}} \in S \} $, DE converges to a global optimum, iff 
%\begin{equation}
%\lim_{t\rightarrow \infty} \left\lbrace \text{W}^t \cap S^*_\delta \ne \emptyset \right\rbrace = 1
%\end{equation} 
%where $ S^*_\delta = \{\textbf{\text{w}} | E(\textbf{\text{w}}^*) - E(\textbf{\text{w}})  < \delta \}$, $ S $ is search space, and $ E(\textbf{\text{w}}) $ is an objective function. This follows the following theorem
%\begin{mytherm}
%	
%\end{mytherm}
%A detailed convergence property is described in~\cite{hu2013sufficient}.  

{\color{black}
\section{Empirical {\color{black}Evaluation}}
\label{sec_evaluation_emp}}
This section describes the evaluated results of the proposed algorithms T1HFIT$^{\text{S}}$, T1HFIT$^{\text{M}}$, T2HFIT$^{\text{S}}$, and T2HFIT$^{\text{M}}$ on six example problems. Assume that the datasets in the examples are of the form: $ (\text{\textbf{X}},\text{\textbf{d}}) $, where $ \text{\textbf{X}} = ( \text{\textbf{x}}_1,\text{\textbf{x}}_2,\ldots,\text{\textbf{x}}_N ) $ is the set of the input vectors and  $\text{\textbf{d}} = \langle d_1,d_2,\ldots,d_N \rangle $ is the desired output vector. Here, the dataset has $ N $ input--output patterns (pairs) and if the vector $\text{\textbf{y}} = \langle y_1,y_2,\ldots, y_N \rangle $ is the predicted output vector{\color{black}, then} the performance of an algorithm for the dataset $ ( \text{\textbf{X}},\text{\textbf{d}} ) $ can be measured using RMSE $ E $ as defined in~\eqref{eq_rmse} and correlation coefficient $ r $ between the desired output vector $ \text{\textbf{d}} $ and $ \text{\textbf{y}} $ {\color{black}as:}
\begin{equation}
\label{eq-corr}
r = \frac{\sum_{i = 1}^{N}\left(d_i - \bar{d} \right) \left(y_i - \bar{y} \right) }{ \sqrt{\sum_{i = 1}^{N}\left(d_i - \bar{d} \right)^2 \sum_{i = 1}^{N}\left(y_i - \bar{y} \right)^2}}
\end{equation} 
where  $ \bar{d} $ and $ \bar{y} $ are the means of the vectors $ \text{\textbf{d}} $ and $ \text{\textbf{y}} $. For {\color{black}simplicity's sake}, the training and the test RMSEs were represented as $ E_n $ and $ E_t $, respectively. Similarly, the training and the test correlation coefficients were represented as $ r_n $ and $ r_t $, respectively. Additionally, the model's complexity $ c(\text{\textbf{w}}) $ {\color{black}and training time (in minutes) were reported. The reported training time included the time taken to create a tree structure, tune the tree parameters, partition the dataset (file input--output operations), write the developed model to a file, display {\color{black}the} tree on {\color{black}a} GUI, and compute RMSE and correlation coefficient.} The parameter setting mentioned in Table~\ref{tab_parameter_fis} was used {\color{black}to train} the proposed algorithms, which was developed as {\color{black}a} software tool and is available at http://dap.vsb.cz/sw/hfit/. The experiments were conducted on {\color{black}a} Windows Server R2 that had {\color{black}a} 20 core and 700 GB RAM. Each run of experiments was conducted with the random seeds generated from the system. The proposed algorithms were compared  with the algorithms collected from the literature (Table~\ref{tab_lit_fis}).

\begin{table}[!ht]
	\centering
 	{\footnotesize 	
	\caption{Parameter Setting for the Experiments}
	\label{tab_parameter_fis}
	\begin{tabular}{ll}
    	\toprule
		Algorithm training parameter & Value  \\
		\hline
		Maximum depth (layers) of a tree  &  4 \\			
		Maximum inputs to an FIS node  & 4 \\			
		Membership function search range & [0,1] \\		
		GP population & 50\\			
		CP mutation probability $ pm $ & 0.2  \\		
		GP crossover probability $ pc = 1 - pm$ & 0.8 \\		
		GP mating pool size & 25 \\			
		GP tournaments selection size &  2 \\	
		GP iterations & 500	\\
		DE population & 50 \\
		DE mutation factor $ F $ & 0.7 \\ 
		DE crossover factor $ cr $ & 0.9 \\			
		DE iterations & 5000  \\
		\bottomrule
	\end{tabular}}
\end{table}	

\begin{table*}
	\centering
	{\scriptsize 	
	\caption{Descriptions of the Existing FIS Algorithms Adopted for the Performance Comparisons}
	\label{tab_lit_fis}
	\begin{tabular}{ ll  l  p{7cm}  l p{3.5cm}  }
		\toprule 
		FIS & Algorithm &  Ref. & Description &  Type  & Parameter tuning\\
		\hline \multirow{13}{*}{\begin{sideways}{Type--1}\end{sideways}}
		& DENFIS & \cite{kasabov2002denfis} & Dynamic evolving neural-fuzzy inference system & TSK & Least-square estimator \\
		& D-FNN & \cite{wu2000dynamic} & Dynamic fuzzy neural networks & TSK & Backpropagation algorithm \\
		& EFuNN & \cite{kasabov2001evolving} & Evolving fuzzy neural networks & Mamdani & Widrow--Hoff least square \\
		& FALCON & \cite{lin1997art} & ART-based fuzzy adaptive learning control network & $--$ & Backpropagation algorithm \\
		& GNN & \cite{zhang2008granular} & Granular neural networks & $--$ & Genetic algorithm  \\
		& H-TS-FS & \cite{chen2007automatic} & Hierarchical Tukagi--Sugno fuzzy system & TSK & Evolutionary programming \\
		& HyFIS & \cite{kim1999hyfis} & Hybrid neural fuzzy inference system & $--$ & Gradient descent learning \\
		& IFRS and AFRS & \cite{duan2002multilevel} & Incremental and aggregated fuzzy relational systems & Mamdani & Backpropagation algorithm \\
		& RBF-AFA & \cite{cho1996radial} & Radial basis function based adaptive fuzzy systems & TSK & Gradient descent learning \\
		& SaFIN & \cite{tung2011safin} & Self-adaptive fuzzy inference network & Mamdani & Levenberg-Marquardt method\\
		& SONFIN & \cite{juang1998online} & Self-constructing neural fuzzy inference network & TSK & Backpropagation algorithm \\
		& SuPFuNIS & \cite{paul2002subsethood} & Subsethood-product fuzzy neural inference system & $--$ & Gradient descent learning \\
		& SVR-FM & \cite{chiang2004support} & Support-vector regression fuzzy model & TSK & Support vector regression \\
		\hline \multirow{11}{*}{\begin{sideways}{Type--2}\end{sideways}}
		& eT2FIS & \cite{tung2013et2fis} & Evolving type-2 neural fuzzy inference system & Mamdani & Gradient descent learning \\
		& IT2FNN-SVR-N/F & \cite{juang2010interval} & IT2fuzzy-NN-support-vector regression-fuzzy and numeric & TSK & Support vector regression \\
		& McIT2FIS-UM/US & \cite{das2015evolving} & Metacognitive interval type-2 neuro-fuzzy inference system & TSK & Gradient descent learning \\
		& NNT1FW and NNT2FW & \cite{angelov2004approach} & Type-1 and type-2 fuzzy  backpropagation neural networks & TSK & Backpropagation algorithm \\
		& RIT2FNS-WB & \cite{juang2013reduced} & Reduced IT2NFS-weighted bound-set & TSK & Gradient descent learning \\
		& MRIT2NFS & \cite{juang2013reduced} & Reduced IT2NFS-weighted bound-set & Mamdani & Gradient descent learning \\
		& SEIT2FNN & \cite{juang2008self} & Self-evolving IT2FIS & TSK & Kalman filtering algorithm \\
		& SIT2FNN & \cite{lin2014simplified} & Simplified Interval Type-2 Fuzzy Neural Networks & TSK & gradient descent learning \\
		& T2FLS & \cite{mendel2002} & Interval type-2 fuzzy logic system (TSK and singleton) & TSK & $--$ \\
		& T2FLS-G & \cite{mendel2004computing} & Gradient-descent based IT2FIS tuning & TSK & Derivation-based learning \\
		& TSCIT2FNN & \cite{lin2014tsk} & Compensatory interval type-2 fuzzy neural network & TSK & Kalman filter algorithm \\
		\bottomrule
	\end{tabular}}
\end{table*}

\subsection{Example 1---System Identification}
Online identification of the nonlinear system is a widely studied problem. The significance of this problem is evident from its usage in the literature for the validation of the approximation algorithms~\cite{narendra1990identification,juang2008self,juang2010interval,lin2014tsk,das2015evolving}. The nonlinear system identification of the plant is described by the following nonlinear difference equation:
\begin{equation}
\label{eq_plant}
y_p(k+1) = \frac{y_p(k)}{1+y_p(k)^2} + u^3(k)
\end{equation}
where [$u(k), y_p(k)$] is the input--output pair of the single input and the single output plant at the time $k$ and $y_p(k+1)$ is the one step ahead prediction. Hence, the objective is to predict $y_p(k+1)$  of the system based on the sinusoidal input $ u(k) = \sin(2 \pi k/100) $ and the current output $y_p(k) $. Let us assign the input $ x_1  = u(k)$ and the input $ x_2 = y(k)$.

{\color{black}The training  patterns were generated with $ k =1,\ldots,200$ and $ y_p(1) = 0$. Similarly, the test patterns were generated for $ k =201,\ldots,400$ as mentioned in~\cite{juang2013reduced}. Therefore, for the training, the inputs were $u(k)$ and  $y_p(k)$, and the desired output was $ y_p(k+1) $.} %The proposed algorithms T1HFIT$^{\text{S}}$, T1HFIT$^{\text{M}}$, T2HFIT$^{\text{S}}$, and T2HFIT$^{\text{M}}$ were trained using the parameter setting mentioned in Table~\ref{tab_parameter_fis}. 
{\color{black}The}  {\color{black}training and test were repeated ten times. Such repetitions were performed mainly for assessing an average performance of the proposed algorithms, which is} shown in Table~\ref{tab_onsi_stat}. Since the experiments were repeated ten times, ten different models were obtained for each algorithm. The results of the best models (regarding their RMSEs) were compared with the {\color{black}best} results available in the literature (Table~\ref{tab_onsi_comp}).

The performance statistics, shown in Table~\ref{tab_onsi_stat}, {\color{black}are} {\color{black}evidence} of the efficiency of the proposed algorithms. {\color{black}They show} that the mean correlation coefficients $ r_n $ and $ r_t $  of training and test sets are $ 1.00 $ and $ 1.00 $, respectively, which {\color{black}indicate} that the algorithm consistently performed with a high accuracy. Moreover, such consistency of high accuracy is evident from the obtained small standard deviations (STD) of the training and test RMSEs and correlation coefficients (Table~\ref{tab_onsi_stat}). 

Interestingly, the Pareto-based  multiobjective training offered less complex models (the mean parameter count $ c(\text{\textbf{w}}) $ of T1HFIT$^{\text{M}}$ was 34.4 compared to 57.2 of T1HFIT$^{\text{S}}$ and $ c(\text{\textbf{w}}) $ of T2HFIT$^{\text{M}}$ was 90.4 compared to 152.0 of T1HFIT$^{\text{S}}$) with high accuracies (Table~\ref{tab_onsi_stat}). Additionally, the {\color{black}training} time taken by T1HFIT$^{\text{M}}$ and T2HFIT$^{\text{M}}$ was much {\color{black}less} than {\color{black}by} T1HFIT$^{\text{S}}$ and T2HFIT$^{\text{S}}$. Hence, the Pareto-based multiobjective was advantageous to use, which provided the option of choosing the best solution from a Pareto-front. An example of {\color{black}a Pareto-front} is shown in Fig.~\ref{fig_pareto_front}.    
\begin{table}
	\centering
	\caption{Performance Evaluation on System Identification (Example-1)}
	\subtable[Performance Statistics (10 repetitions)]{
	\setlength{\tabcolsep}{3pt}
	%\caption{Example 1: Performance Statistics (10 repetitions)}% of Online Identification of Nonlinear System Problem}
	%\label{tab_onsi_stat}
	\begin{tabular}{llrrrr}
		\toprule
		&  & T1HFIT$^{\text{S}}$ & T1HFIT$^{\text{M}}$ & T2HFIT$^{\text{S}}$ & T2HFIT$^{\text{M}}$ \\
		\hline
		%\multicolumn{2}{l}{Training}  &  &  &  & \\
		$E_n$ & Best & 0.0043 & 0.0041 & 0.0033 & 0.0028 \\
		& Mean & 0.0181 & 0.0257 & 0.0123 & 0.0184 \\
		& STD & 0.0167 & 0.0164 & 0.0074 & 0.0105 \\
		{\color{black}$r_n$} & Best & 1.00 & 1.00 & 1.00 & 1.00 \\
		& Mean & 1.000 & 0.999 & 1.000 & 1.000 \\
		& STD & 0.0006 & 0.0007 & 0.0001 & 0.0002 \\
		%\multicolumn{2}{l}{Test}  &  &  &  & \\
		$E_t$ & Best & 0.0020 & 0.0041 & 0.0034 & 0.0028 \\
		& Mean & 0.0169 & 0.0262 & 0.0125 & 0.0187 \\
		& STD & 0.0173 & 0.0171 & 0.0076 & 0.0109 \\
		{\color{black}$r_t$} & Best & 1.00 & 1.00 & 1.00 & 1.00 \\
		& Mean & 1.000 & 0.999 & 1.000 & 1.000 \\
		& STD & 0.0006 & 0.0007 & 0.0001 & 0.0002 \\
		%\multicolumn{2}{l}{Parameters}  &  &  &  & \\
		$ c(\text{\textbf{w}}) $ & Best & 20 & 20 & 72 & 36 \\
		& Mean & 57.2 & 34.4 & 152 & 90.4 \\
		{\color{black}Time} & Best & 3.21 & 1.52 & 7.82 & 3.23 \\
		     & Mean & 6.27 & 2.91 & 8.91 & 5.14 \\
		\bottomrule
	\end{tabular} 
	\label{tab_onsi_stat}
	}\quad~\quad
	\subtable[Performance Comparison]{
    \setlength{\tabcolsep}{3pt}
	\begin{tabular}{llccr}
		\toprule
		%&  & Training & Test &  \\
		& Algorithm & $E_n$ & $E_t$ & $ c(\text{\textbf{w}}) $ \\
		\hline \multirow{5}{*}{\begin{sideways} {Type--1}\end{sideways}}
		& FALCON & 0.0200 &  & 54 \\
		& SaFIN &  & 0.0120 &  \\
		& SONFIN & 0.0080 & 0.0085 & 36 \\
		& \textbf{T1HFIT$^{\text{S}}$} & 0.0043 & 0.0043 & 60\\
		& \textbf{T1HFIT$^{\text{M}}$} & 0.0041 & 0.0041 & 40\\
		\hline \multirow{13}{*}{\begin{sideways} {Type--2}\end{sideways}}
		& T2FLS (singleton) & 0.0306 & $-$ & 120 \\
		& FT2FNN & 0.0388 & $-$ & 36 \\
		& T2FLS (TSK) & 0.0217 & $-$ & 120 \\
		& TSCIT2FNN & 0.0080 & $-$ & 34 \\
		& T2TSKFNS & $-$ & 0.0324 & 24 \\
		& T2FNN & $-$ & 0.0281 & 36 \\
		& SIT2FNN &  $-$ & 0.0241 & 36\\		
		& RIT2NFS-WB & 0.0073 & 0.0151 & 24 \\
		& MRI2NFS & 0.0042 & 0.0051 & 36 \\
		& T2FLS-G & 0.0214 & 0.0379 & 36 \\
		& SEIT2FNN & 0.0022 & 0.0022 & 84 \\
		& \textbf{T2HFIT$^{\text{S}}$} & 0.0033 & 0.0034 & 118\\
		& \textbf{T2HFIT$^{\text{M}}$} & 0.0028 & 0.0028 & 72\\
		\bottomrule
	\end{tabular}
	\label{tab_onsi_comp}
	}
\end{table} 

For the performance comparisons, the {\color{black}SaFIN result} was collected from~\cite{tung2011safin}, and FALCON and SONFIN from~\cite{juang2008self}. The results of T2FLS (singleton) and T2FLS (TSK) were obtained from~\cite{juang2008self}; FT2FNN, TSCIT2FNN, T2TSKFNS, and T2FNN from~\cite{lin2014tsk}; SEIT2FNN, MRI2NFS, RIT2NFS-WB, and T2FLS-G from~\cite{juang2013reduced}; and SIT2FNN from~\cite{lin2014simplified}. Table~\ref{tab_lit_fis} contains {\color{black}a} detailed description of these algorithms.

Two parameters may be used for comparing the algorithms: 1) the training and test RMSEs and 2) the {\color{black}parameter count} $ c(\text{\textbf{w}}) $. From the performance comparisons shown in Table~\ref{tab_onsi_comp}, it {\color{black}was} found that the proposed algorithms T1HFIT$^{\text{S}}$ and T1HFIT$^{\text{M}}$ were better than the T1FIS algorithms FALCON, SaFIN, and SONFIN. SONFIN offered the test RMSE $ E_t = 0.0085$ with the smallest parameter count $ c(\text{\textbf{w}}) = 36 $; whereas, the proposed algorithm T1HFIT$^{\text{M}}$ offered {\color{black}the} better test RMSE $ E_t = 0.0041$ with a slightly larger parameter count $ c(\text{\textbf{w}}) = 40$. 

Similarly, the proposed T2FIS algorithms T2HFIT$^{\text{S}}$ and T2HFIT$^{\text{M}}$ offered better performance compared to the algorithms T2FLS (Singleton), T2FLS (TSK), TSCIT2FNN, T2TSKFNS, T2FNN, SIT2FNN, RIT2NFS-WB, MRI2NFS. The algorithm SEIT2FNN reported test RMSE $ E_t = 0.0022$ and the parameter count was 84; whereas, in comparison to SEIT2FNN, the algorithm T2HFIT$^{\text{M}}$ offered a slightly higher test RMSE $ E_t = 0.0028 $, but had {\color{black}a} lower parameter count $ c(\text{\textbf{w}}) ${\color{black},} i.e., 72. 

{\color{black}The time comparison, however, is limited since the training time depends on several factors: 1) the type of programming language used; 2) the platform and its configurations {\color{black}on which} programs were executed; 3) the way data were fed for the training; 3) the status of the cache memory (in the case CPU time is observed); etc. It may be noted that{\color{black},} from the available training time reported in the literature{\color{black},} the MRI2NFS, RIT2NFS-W, T2FLS-G, SEIT2FNN approximately takes 0.15, 0.17, 2.41, {\color{black}and} 2.24 minutes (CPU time only) respectively. {\color{black}This is} in comparison to T1HFIT$^{\text{M}}$ and T2HFIT$^{\text{M}}${\color{black}, which take} 1.52 and 3.23 minutes (including CPU time, other file operations, etc.) respectively. Since the training times taken by the algorithms were {\color{black}close} to {\color{black}one another} and {\color{black}the time comparison} has limitations{\color{black}, it} may be concluded that {\color{black}the proposed} models performed {\color{black}efficiently}. {\color{black}This is also} evident from the performance statistics given in Table~\ref{tab_onsi_stat} and the performance comparison is provided in Table~\ref{tab_onsi_comp}.}

The best models obtained using the proposed algorithms are illustrated in Fig.\ref{fig_onsi}, which shows the hierarchical structure of the derived models and the selected inputs are indicated by $ x_i $ in the models. The rectangular blocks in Fig~\ref{fig_onsi} show the nodes (a T1FIS or T2FIS) of the tree (hierarchical structure). The target and predicted value {\color{black}plots} of {\color{black}200} samples are shown in Fig.~\ref{fig_onsi_plot}.

%\begin{table}
%	\centering
%	%\renewcommand{\arraystretch}{1.2}
%	%\setlength{\tabcolsep}{5pt}
%	\caption{Example 1: Performance Comparison}% of Online Identification of Nonlinear System Problem}
%	\label{tab_onsi_comp}
%	\begin{tabular}{llccr}
%		\toprule
%		%&  & Training & Test &  \\
%		& Algorithm & $E_n$ & $E_t$ & $ c(\text{\textbf{w}}) $ \\
%		\hline \multirow{5}{*}{\begin{sideways} {Type--1}\end{sideways}}
%		& FALCON & 0.0200 &  & 54 \\
%		& SaFIN &  & 0.0120 &  \\
%		& SONFIN & 0.0080 & 0.0085 & 36 \\
%		& \textbf{T1HFIT$^{\text{S}}$} & 0.0043 & 0.0043 & 60\\
%		& \textbf{T1HFIT$^{\text{M}}$} & 0.0041 & 0.0041 & 40\\
%		\hline \multirow{13}{*}{\begin{sideways} {Type--2}\end{sideways}}
%		& T2FLS (singleton) & 0.0306 & $-$ & 120 \\
%		& FT2FNN & 0.0388 & $-$ & 36 \\
%		& T2FLS (TSK) & 0.0217 & $-$ & 120 \\
%		& TSCIT2FNN & 0.0080 & $-$ & 34 \\
%		& T2TSKFNS & $-$ & 0.0324 & 24 \\
%		& T2FNN & $-$ & 0.0281 & 36 \\
%		& SIT2FNN &  $-$ & 0.0241 & 36\\		
%		& RIT2NFS-WB & 0.0073 & 0.0151 & 24 \\
%		& MRI2NFS & 0.0042 & 0.0051 & 36 \\
%		& T2FLS-G & 0.0214 & 0.0379 & 36 \\
%		& SEIT2FNN & 0.0022 & 0.0022 & 84 \\
%		& \textbf{T2HFIT$^{\text{S}}$} & 0.0033 & 0.0034 & 118\\
%		& \textbf{T2HFIT$^{\text{M}}$} & 0.0028 & 0.0028 & 72\\
%		\bottomrule
%	\end{tabular}
%\end{table} 
\begin{figure}
	\centering
	\subfigure[T1HFIT$^{\text{S}}$: $ E_n = 0.0043$]
	{
		\includegraphics[width=0.23\columnwidth]{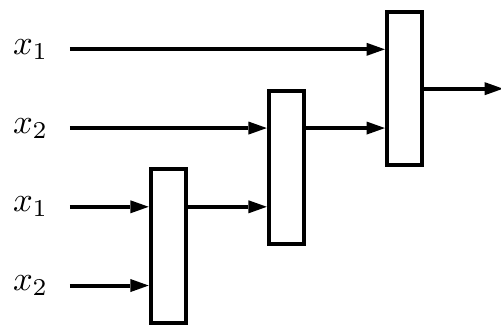}%
		\label{fig_onsi_1}%
	}
	\subfigure[T1HFIT$^{\text{M}}$: $ E_n = 0.0041$]
	{
		\includegraphics[width=0.23\columnwidth]{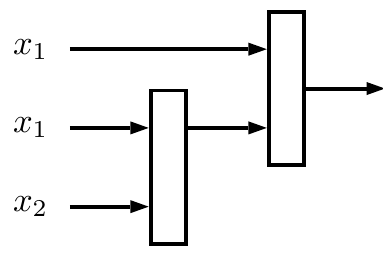}%
		\label{fig_onsi_2}%
	}
	\subfigure[T2HFIT$^{\text{S}}$: $ E_n = 0.0034$]
	{
		\includegraphics[width=0.23\columnwidth]{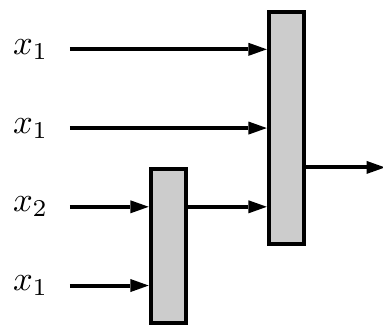}%
		\label{fig_onsi_3}%
	}
	\subfigure[T2HFIT$^{\text{M}}$: $ E_n = 0.0028$]
	{
		\includegraphics[width=0.23\columnwidth]{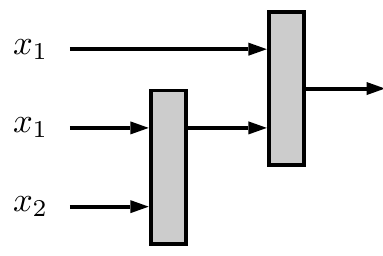}%
		\label{fig_onsi_4}%
	}
	\caption{Example--1: designed HFIT, where the shaded nodes indicate T2FIS.}
	\label{fig_onsi}
\end{figure}
\begin{figure}
	\centering
	\includegraphics[width=0.6\columnwidth]{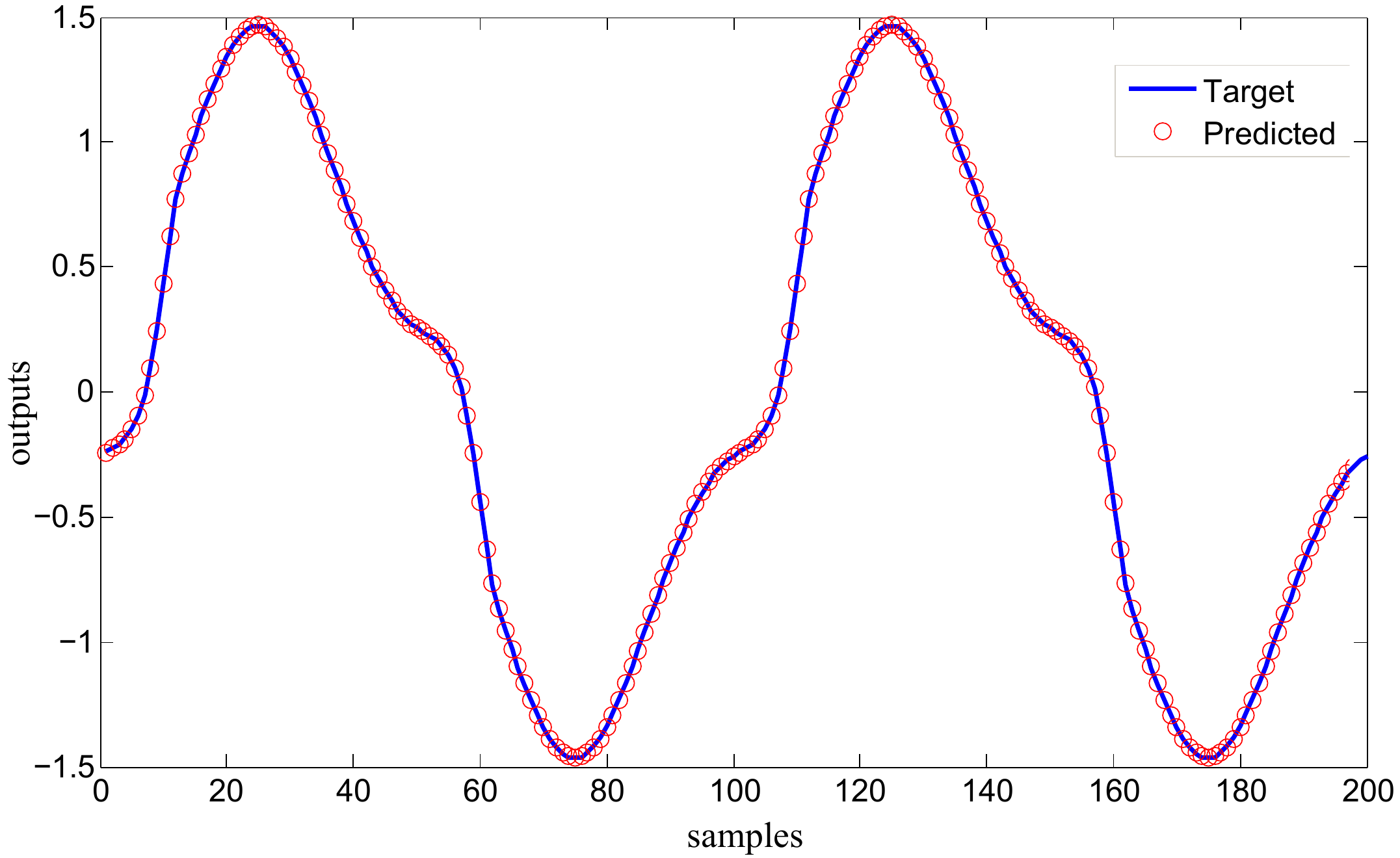}%
	\caption{Example--1: target versus predicted test values. The test outputs belong to algorithm T2HFIT$^{\text{M}}${\color{black}, which} has the test RMSE $ E_t =$ 0.0028.}
	\label{fig_onsi_plot}
\end{figure}

\subsection{Example 2---Noisy Chaotic Time Series Prediction}
\label{sec_fis_exp_2}
\subsubsection{Case--Clean Set}
\label{sec_perf_ex2}
A chaotic time series dataset, the Mackey-Glass chaotic time series, was used in this example, which was generated using the following delay differential equation:
\begin{equation}
\label{eq_mgts}
\frac{dx(k)}{dk} = \frac{0.2x(k-\tau)}{1+x^{10}(k-\tau)}-0.1x(k)
\end{equation}
where delay constant $ \tau > 17 $ and {\color{black}$k$ is the time step{\color{black}.}} In this example, the objective was to predict $x(k)$ using the past outputs of the time series as mentioned in~\cite{juang2010interval}. Hence, {\color{black}the} input--output pattern was of the form:
$$ \left[ x(k-24), x(k-18), x(k-12), x(k-6); x(k)\right]. $$
Let us say {\color{black}that} the inputs are $ x_1  = x(k-24)$ , $ x_2  = x(k-18)$, $ x_3  = x(k-12)$, and $ x_4  = x(k-6)$. {\color{black}For the training of the proposed algorithms, a total of 1000 patterns were generated from $ k=124$ to $1123${\color{black},} with the parameter $ \tau $ being set to 30 and $ x(0) $ being set to 1.2~\cite{juang2010interval}.} This set of training patterns were clean (no noise {\color{black}was} added). From the generated clean patterns, {\color{black}as mentioned in~\cite{juang2010interval},} the first 500 patterns (clean training set) were used for training {\color{black}purposes} and the second 500 patterns (clean test set) were used for {\color{black}test} purposes. {\color{black}Aiming to assess the average performance of the proposed algorithms, ten} repetitions of training and {\color{black}testing} were performed using clean training and test sets, and the results were collected accordingly (Table~\ref{tab_mcg_stat_clean}). Table~\ref{tab_mcg_com_clean} shows the comparison of results {\color{black}of} the proposed algorithms {\color{black}(the best among ten models)} with the {\color{black}best} results reported by {\color{black}the} algorithms listed in Table~\ref{tab_lit_fis}.

For this example (clean set), the performance statistics {\color{black}are} shown in Table~\ref{tab_mcg_stat_clean}. The obtained statistics illustrate that the proposed algorithms T1HFIT$^{\text{S}}$, T1HFIT$^{\text{M}}$, T2HFIT$^{\text{S}}$, and T2HFIT$^{\text{M}}$ performed with high accuracies. It shows that the mean correlation coefficient $ r_n $ of {\color{black}the} training {\color{black}set of all algorithms} is 1.00, and the mean correlation coefficient $ r_t $ of {\color{black}the} test set of the algorithms T1HFIT$^{\text{S}}$, T1HFIT$^{\text{M}}$, T2HFIT$^{\text{S}}$, and T2HFIT$^{\text{M}}$ are 0.9858, 0.9864, 0.9783, and 0.9912 respectively. That is the test correlation coefficients are closer to 1.00 ({\color{black}a} high positive correlation between target and predicted outputs). Such performance indicates that the {\color{black}algorithms} consistently performed with a high accuracy, and the obtained low values of STDs are {\color{black}evidence} of this fact (Table~\ref{tab_mcg_stat_clean}). 

Moreover, the Pareto-based  multiobjective {\color{black}training} offered less complex models (the mean parameter count $ c(\text{\textbf{w}}) $ of T1HFIT$^{\text{M}}$ was 57.6 compared to 71.6 of T1HFIT$^{\text{S}}$ and {\color{black}the} $ c(\text{\textbf{w}}) $ of T2HFIT$^{\text{M}}$ was 129.5 compared to 203.4 of T1HFIT$^{\text{S}}$) with high accuracies (Table~\ref{tab_mcg_stat_clean}). Hence, like {\color{black}in} example 1, in this example also {\color{black}the} Pareto-based multiobjective was advantageous to use, which provided the option to choose the best solution from a Pareto-front. Fig.~\ref{fig_pareto_front} {\color{black}illustrates} a Pareto-front {\color{black}created} during the multiobjective training of HFIT. 

\begin{table}
	\centering
	\caption{Performance Evaluation on Clean Set of Noisy Chaotic Time Series Prediction (Example-2)}
	\subtable[Performance Statistics (10 repetitions)]{
	\setlength{\tabcolsep}{3pt}
	%\caption{Example 2- Clean set: Performance Statistics (10 repetitions)}% of Clean Chaotic Time Series Prediction Problem}
	\label{tab_mcg_stat_clean}
	\begin{tabular}{llrrrr}
		\toprule
		&  & T1HFIT$^{\text{S}}$ & T1HFIT$^{\text{M}}$ & T2HFIT$^{\text{S}}$ & T2HFIT$^{\text{M}}$ \\
		\hline
		%\multicolumn{2}{l}{Training}  &  &  &  & \\
		$E_n$ & Best & 0.0115 & 0.0115 & 0.0108 & 0.0032 \\
		& Mean & 0.0345 & 0.0338 & 0.0413 & 0.0224 \\
		& STD & 0.0163 & 0.0207 & 0.0221 & 0.0203 \\
		$r_t$ & Best & 1.00 & 1.00 & 1.00 & 1.00 \\
		& Mean & 0.9858 & 0.9864 & 0.9783 & 0.9912 \\
		& STD & 0.0117 & 0.0107 & 0.0182 & 0.0154 \\
		%\multicolumn{2}{l}{Test}  &  &  &  &  \\
		$E_t$ & Best & 0.0122 & 0.0119 & 0.0086 & 0.0058 \\
		& Mean & 0.0414 & 0.0356 & 0.0427 & 0.0275 \\
		& STD & 0.0224 & 0.0173 & 0.0234 & 0.0207 \\
		$r_t$ & Best & 1.00 & 1.00 & 1.00 & 1.00 \\
		& Mean & 0.9786 & 0.9850 & 0.9769 & 0.9888 \\
		& STD & 0.0211 & 0.0120 & 0.0195 & 0.0158 \\
		%\multicolumn{2}{l}{Parameters}  &  &  &  &  \\
		$ c(\text{\textbf{w}}) $ & Best & 20 & 40 & 72 & 36 \\
		& Mean & 71.6 & 57.6 & 203.4 & 129.5 \\
		{\color{black}Time} & Best & 8.42 & 5.51 & 21.33 & 7.91 \\
			 & Mean & 71.6 & 11.03 & 31.83 & 16.58 \\
		\bottomrule
	\end{tabular}}\quad~\quad
	\subtable[Performance Comparison]{
		\setlength{\tabcolsep}{5pt}
			\label{tab_mcg_com_clean}
			\begin{tabular}{llccr}
				\toprule
				& Algorithm & $ E_n $ & $ E_t $ & $ c(\text{\textbf{w}}) $ \\
				\hline \multirow{11}{*}{\begin{sideways} {Type--1}\end{sideways}}
				& NNT1FW & $-$ & 0.0550 & $-$ \\
				& AFRS & 0.0267 & 0.0256 & 78 \\
				& IFRS & 0.0240 & 0.0253 & 58 \\
				& H-TS-FS$ ^1 $ & 0.0120 & 0.0129 & 148 \\
				& H-TS-FS$ ^2 $ & 0.0145 & 0.0151 & 46 \\
				& RBF-AFA & $-$ & 0.0128 & $-$ \\
				& HyFIS & $-$ & 0.0100 & $-$ \\
				& D-FNN & $-$ & 0.0080 & 70$^*$ \\
				& SuPFuNIS & $-$ & 0.0057 & 70$^*$ \\
				& \textbf{T1HFIT$^{\text{S}}$} & 0.0115 & 0.0122 & 60 \\
				& \textbf{T1HFIT$^{\text{M}}$} & 0.0115 & 0.0119 & 40 \\
				\hline \multirow{7}{*}{\begin{sideways} {Type--2}\end{sideways}}
				& T2FLS (singleton) & $-$ & 0.0426 & $-$ \\
				& T2FLS (TSK) & $-$ & 0.0431 & $-$ \\
				& NNT2FW & $-$ & 0.0390 & $-$ \\
				& SEIT2FNN$ ^1 $ & $-$ & 0.0034 & 126$^*$ \\
				& SEIT2FNN$ ^2 $ & $-$ & 0.0053 & 90$^*$ \\
				& \textbf{T2HFIT$^{\text{S}}$} & 0.0108 & 0.0086 & 108 \\
				& \textbf{T2HFIT$^{\text{M}}$} & 0.0032 & 0.0058 & 118 \\
				\bottomrule  
				\multicolumn{5}{l}{$^*$This is approximately calculated. It may be larger.}
			\end{tabular}
	}
\end{table}

Table~\ref{tab_mcg_com_clean} {\color{black}describes} the comparison between several {\color{black}algorithms} on clean training and test set. The {\color{black}results} of IFRS, AFRS, H-TS-FS1, and H-TS-FS2 {\color{black}were} collected from~\cite{chen2007automatic}; RBF-AFA, HyFIS, D-FNN, and SuPFuNIS from~\cite{juang2008self}; NNT1FW and NNT2FW from~\cite{angelov2004approach}; and T2FLS (Singleton), T2FLS (TSK), and SEIT2FN from~\cite{juang2008self}.

The training and test RMSEs and the {\color{black}parameter} count $ c(\text{\textbf{w}}) $ were used for comparing the algorithms, which is shown in Table~\ref{tab_mcg_com_clean}. {\color{black}A training time comparison for this example cannot be performed because of the unavailability of the training time of other algorithms in the literature. 

In T1FIS comparisons, it was found that the proposed algorithms T1HFIT$^{\text{S}}$ and T1HFIT$^{\text{M}}$ performed better than or {\color{black}were} competitive {\color{black}with} the algorithms NNT1FW, AFRS, IFRS, H-TS-FS, RBF-AFA, and HyFIS. The algorithms D-FNN and SuPFuNIS had better test RMESs $ E_t = 0.008$ and $ E_t = 0.005 $, but their parameter counts were larger since the number of rules in each case was 10. Since each T1FS MF has at least two parameters and each rule has three free parameters at the consequent part, the number of parameter count for two input variables stands to at least 70 (this is {\color{black}an} approximate calculation since D-FNN and SuPFuNIS may have other parameters that may increase {\color{black}the} parameter count value). Whereas, the algorithms T1HFIT$^{\text{S}}$ and T1HFIT$^{\text{M}}$ had parameter counts equal to 60 and 40{\color{black},} respectively. Therefore, T1HFIT$^{\text{S}}$ and T1HFIT$^{\text{M}}$ are {\color{black}competitive} {\color{black}with  D-FNN and SuPFuNIS}{\color{black}.}}

In T2FIS, the proposed algorithms clearly performed better than T2FLS (Singleton), T2FLS (TSK), and NNT2FW. Whereas, the performance of the proposed algorithms were competitive with SEIT2FNN$ ^1 $ (without fuzzy set reduction) and SEIT2FNN$ ^2 $ (with fuzzy set reduction) whose test RMSEs $ E_t $ were 0.0034 and 0.0058, respectively. The algorithm SEIT2FNN$ ^1 $ had 28 fuzzy sets and SEIT2FNN$ ^2 $ had 16 fuzzy sets (reduced), and each of these had seven rules. Hence, the parameter count of these algorithms stands to at least 126 and 90, respectively. On the other hand, the proposed algorithm T2HFIT$^{\text{S}}$ had {\color{black}a} test RMSE {\color{black}of} $ E_t = 0.0086 $ (slightly larger than SEIT2FNN$ ^1 $ and SEIT2FNN$ ^2 $), but the parameter count was 108, which is smaller than SEIT2FNN$ ^1 $. Similarly, the proposed algorithm T2HFIT$^{\text{M}}$ had {\color{black}a} test RMSE {\color{black}of} $ E_t = 0.0058$, which is  {\color{black}close} to SEIT2FNN$ ^2 $ and the {\color{black}parameter} count was smaller than SEIT2FNN$ ^1 $ and closer to SEIT2FNN$ ^2 $. Therefore, {\color{black}in this case,} the proposed T2HFIT$^{\text{M}}$ is as efficient {\color{black}as SEIT2FNN$ ^1 $ and SEIT2FNN$ ^2 $ are}. Fig.\ref{fig_mgts} shows the hierarchical structure of the best models obtained by the proposed algorithms. Additionally, the target versus prediction plot of test data samples is illustrated {\color{black}in Fig.~\ref{fig_mgts_plot}}. 
%\begin{table}
%	\centering
%	%\renewcommand{\arraystretch}{1.2}
%	%\setlength{\tabcolsep}{5pt}
%	\caption{Example 2--Clean set: Performance Comparison}% of Clean Chaotic Time Series Prediction Problem}
%	\label{tab_mcg_com_clean}
%	\begin{tabular}{llccr}
%		\toprule
%		 & Algorithm & $ E_n $ & $ E_t $ & $ c(\text{\textbf{w}}) $ \\
%		 \hline \multirow{11}{*}{\begin{sideways} {Type--1}\end{sideways}}
%		 & NNT1FW & $-$ & 0.0550 & $-$ \\
%		 & AFRS & 0.0267 & 0.0256 & 78 \\
%		 & IFRS & 0.0240 & 0.0253 & 58 \\
%		 & H-TS-FS$ ^1 $ & 0.0120 & 0.0129 & 148 \\
%		 & H-TS-FS$ ^2 $ & 0.0145 & 0.0151 & 46 \\
%		 & RBF-AFA & $-$ & 0.0128 & $-$ \\
%		 & HyFIS & $-$ & 0.0100 & $-$ \\
%		 & D-FNN & $-$ & 0.0080 & $-$ \\
%		 & SuPFuNIS & $-$ & 0.0057 & $-$ \\
%		 & \textbf{T1HFIT$^{\text{S}}$} & 0.0115 & 0.0122 & 60 \\
%		 & \textbf{T1HFIT$^{\text{M}}$} & 0.0115 & 0.0119 & 40 \\
%		  \hline \multirow{7}{*}{\begin{sideways} {Type--2}\end{sideways}}
%		 & T2FLS (singleton) & $-$ & 0.0426 & $-$ \\
%		 & T2FLS (TSK) & $-$ & 0.0431 & $-$ \\
%		 & NNT2FW & $-$ & 0.0390 & $-$ \\
%		 & SEIT2FNN$ ^1 $ & $-$ & 0.0034 & $-$ \\
%		 & SEIT2FNN$ ^2 $ & $-$ & 0.0053 & $-$ \\
%		 & \textbf{T2HFIT$^{\text{S}}$} & 0.0108 & 0.0086 & 108 \\
%		 & \textbf{T2HFIT$^{\text{M}}$} & 0.0032 & 0.0058 & 118 \\
%		\bottomrule  
%	\end{tabular}
%\end{table}
\begin{figure}
	\centering
	\subfigure[T1HFIT$^{\text{S}}$: $ E_n = 0.0115$]
	{
		\includegraphics[width=0.23\columnwidth]{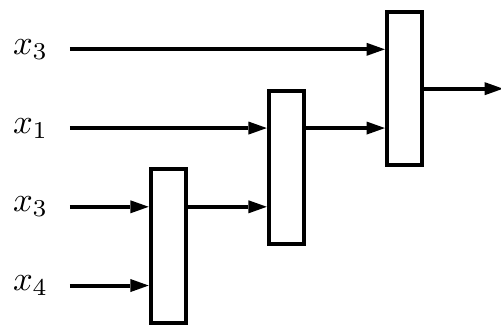}%
		\label{fig_mgts_1}%
	}
	\subfigure[T1HFIT$^{\text{M}}$: $ E_n = 0.0115$]
	{
		\includegraphics[width=0.23\columnwidth]{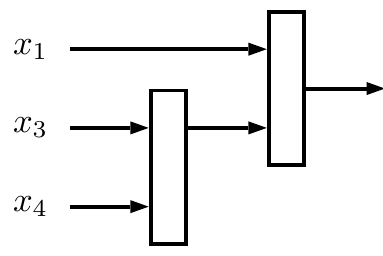}%
		\label{fig_mgts_2}%
	}
	\subfigure[T2HFIT$^{\text{S}}$: $ E_n = 0.0108$]
	{
		\includegraphics[width=0.23\columnwidth]{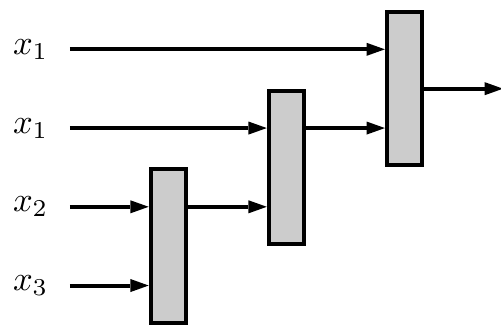}%
		\label{fig_mgts_3}%
	}
	\subfigure[T2HFIT$^{\text{M}}$: $ E_n = 0.0032$]
	{
		\includegraphics[width=0.23\columnwidth]{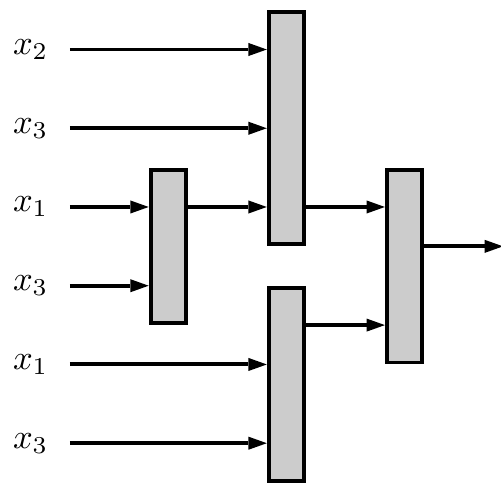}%
		\label{fig_mgts_4}%
	}
	\caption{Example--2 Clean set: designed HFIT. The shaded nodes are T2FIS. }
	\label{fig_mgts}
\end{figure}
\begin{figure}
	\centering
	\includegraphics[width=0.6\columnwidth]{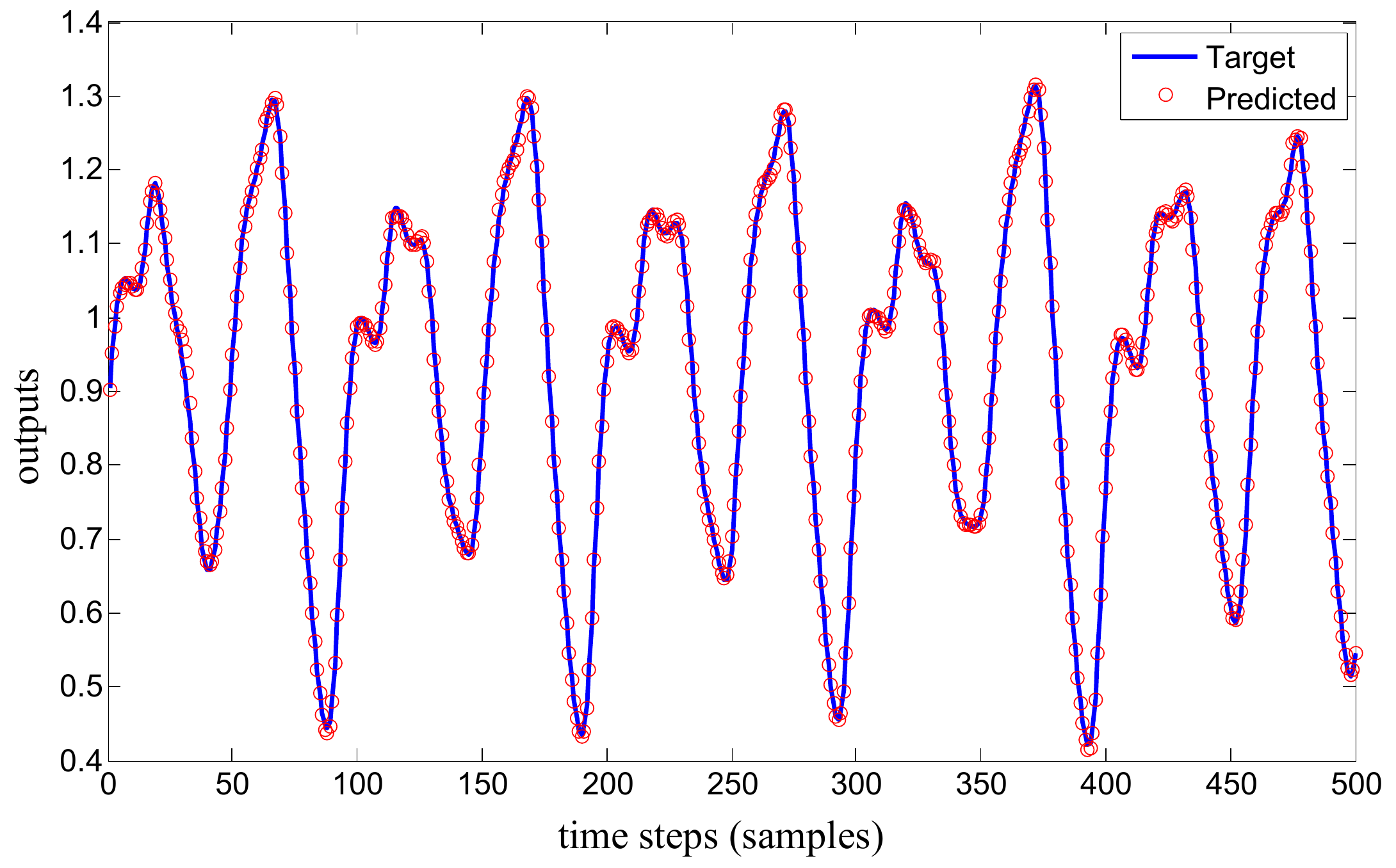}%
	\caption{Example--2 Clean set: target versus predicted test values. The test outputs belong to algorithm T2HFIT$^{\text{M}}$ that has the test RMSE $ E_t =$ 0.0058.}
	\label{fig_mgts_plot}
\end{figure}

\subsubsection{Case--Noisy Set}
The performances of the proposed algorithms were further evaluated for {\color{black}noisy} patterns. Therefore, three training sets and three test sets were generated by adding Gaussian noise with a mean {\color{black}of} 0 and STDs of 0.1, 0.2, and 0.3 to the original data $ x(k) $ as described in~\cite{juang2010interval}. These noisy training sets (with STDs 0.1, 0.2, and 0.3) were presented for the training of the proposed algorithms. With each training set of STDs {\color{black}of} 0.1, 0.2, and 0.3, three test sets were given for testing: clean, STD 0.1, and STD 0.3. The obtained results were compared with the results reported in the literature (Table~\ref{tab_mcg_com_noise}).  
\begin{table*}
	\centering
	\setlength{\tabcolsep}{3pt}
	{\footnotesize
	\caption{Example 2--Noisy set: Performance Comparison}% of Noisy Chaotic Time Series Prediction Problem}
	\label{tab_mcg_com_noise}
	\begin{tabular}{llccccrccccrccccr}
		\toprule
		& & Train & \multicolumn{4}{c}{Test}  & Train & \multicolumn{4}{c}{Test} & Train & \multicolumn{4}{c}{Test}   \\
		\cline{4-7}\cline{9-12}\cline{14-17} 
		FIS & Algorithm &  0.1 & clean &  0.1 &  0.3 & $ c(\text{\textbf{w}}) $ &  0.2 & clean &  0.1 &  0.3 & $ c(\text{\textbf{w}}) $ &  0.3 & clean &  0.1 &  0.3 & $ c(\text{\textbf{w}}) $  \\
		\hline \multirow{6}{*}{\begin{sideways} {Type--1}\end{sideways}} %\makecell[c]{Type--1}
		& SVR-FM & 0.128 & 0.045 & 0.087 & 0.200 & 1127 & 0.229 & 0.089 & 0.109 & 0.189 & 1127 & 0.332 & 0.138 & 0.147 & 0.198 & 1127 \\
		& EFuNN & 0.126 &  $-$ &  $-$ &  $-$ &  $-$ & 0.252 &  $-$ &  $-$ &  $-$ &  $-$ & 0.366 &  $-$ &  $-$ &  $-$ &  $-$ \\
		& DENFIS & 0.116 &  $-$ &  $-$ &  $-$ &  $-$ & 0.214 &  $-$ &  $-$ &  $-$ &  $-$ & 0.306 &  $-$ &  $-$ &  $-$ &  $-$ \\
		& SONFIN & 0.113 & 0.054 & 0.108 & 0.256 & 130 & 0.226 & 0.116 & 0.138 & 0.280 & 130 & 0.302 & 0.195 & 0.208 & 0.305 & 130 \\
		& \textbf{T1HFIT$^{\text{S}}$} & 0.127 & 0.050 & 0.140 & 0.363 & 60 & 0.234 & 0.111 & 0.153 & 0.349 & 104 & 0.305 & 0.100 & 0.159 & 0.356 & 64 \\
		& \textbf{T1HFIT$^{\text{M}}$} & 0.128 & 0.042 & 0.138 & 0.357 & 40 & 0.225 & 0.085 & 0.145 & 0.360 & 84 & 0.307 & 0.119 & 0.162 & 0.351 & 60 \\
		\hline \multirow{7}{*}{\begin{sideways} {Type--2}\end{sideways}} 
		& T2FLS-G & 0.133 & 0.074 & 0.103 & 0.220 & 110 & 0.238 & 0.125 & 0.132 & 0.200 & 110 & 0.357 & 0.232 & 0.234 & 0.264 & 110 \\
		& IT2FNN-SVR(N) & 0.128 & 0.048 & 0.087 & 0.193 & 103 & 0.234 & 0.085 & 0.105 & 0.186 & 103 & 0.349 & 0.127 & 0.138 & 0.188 & 103 \\
		& IT2FNN-SVR(F) & 0.127 & 0.046 & 0.088 & 0.215 & 103 & 0.233 & 0.083 & 0.103 & 0.180 & 103 & 0.347 & 0.121 & 0.131 & 0.184 & 103 \\
		& SEIT2FNN & 0.123 & 0.049 & 0.097 & 0.212 & 110 & 0.225 & 0.083 & 0.113 & 0.228 & 110 & 0.319 & 0.196 & 0.197 & 0.254 & 110 \\
		%& McIT2FIS-UM &  $-$ & 0.044 & 0.085 & 0.185 & 70 &  $-$ & 0.075 & 0.104 & 0.174 & 70 &  $-$ & 0.095 & 0.103 & 0.180 & 70 \\
		%& McIT2FIS-US &  $-$ & 0.045 & 0.087 & 0.181 & 70 &  $-$ & 0.057 & 0.097 & 0.177 & 70 &  $-$ & 0.088 & 0.118 & 0.179 & 70 \\
		& eT2FIS & 0.120 & 0.059 & 0.107 & 0.214 & $-$ & 0.225 & 0.083 & 0.132 & 0.247 & $-$ & 0.327 & 0.102 & 0.152 & 0.278 & $-$ \\
		& \textbf{T2HFIT$^{\text{S}}$} & 0.128 & 0.039 & 0.135 & 0.355 & 108 & 0.227 & 0.079 & 0.143 & 0.348 & 82 & 0.314 & 0.100 & 0.148 & 0.354 & 144 \\
		& \textbf{T2HFIT$^{\text{M}}$} & 0.123 & 0.042 & 0.135 & 0.365 & 72 & 0.233 & 0.087 & 0.144 & 0.348 & 72 & 0.311 & 0.097 & 0.148 & 0.356 & 108 \\
		\bottomrule
	\end{tabular}}
\end{table*}

Table~\ref{tab_mcg_com_noise} describes the comparisons between the results of the algorithms, where the results of SONFIN and SVR-FM were collected from~\cite{juang2010interval}, DENFIS and EFuNN from~\cite{tung2013et2fis}, SEIT2FNN, $\text{T2FLS-G}$, IT2FNN-SVR(N), IT2FNN-SVR(F) from~\cite{juang2010interval}, and eT2FIS from~\cite{lin2014simplified}. It is evident from the comparison of the results that the proposed algorithms performed efficiently over the noisy datasets and the obtained models were less complex than the other models listed in Table~\ref{tab_mcg_com_noise}. Particularly when T1FISs were compared.  Moreover, for each noisy data (STD 0.1, STD 0.2, and STD 0.3), the proposed algorithms had {\color{black}a} smaller parameter count and had a lower or {\color{black}competitive} training RMSE $ E_n $ compared to other listed algorithms. In T1FIS comparisons, the SONFIN had {\color{black}a} slightly better RMSE, but the number of parameters counts was larger than the proposed algorithms T1HFIT$^{\text{S}}$ and T1HFIT$^{\text{M}}$. Similarly, in T2FIS comparison, the algorithm eT2FIS had slightly better RMSE than the other listed algorithms, but the models obtained using the proposed algorithms were less complex, i.e., had a smaller parameter count.

\subsection{Example 3---Miles-Per-Gallon Prediction Problem}
To evaluate the performance of the proposed algorithms, a real-world MPG problem was used. The objective of this example was to predict or estimate the city-cycle fuel consumption in MPG. The MPG dataset was collected from the UCI machine learning repository~\cite{UCILichman2013}. This dataset has 392 {\color{black}samples,} each of which has six input variables, but in this example, as mentioned in~\cite{das2015evolving}, three variables ($ x_1 =$ weight, $ x_2 =$ acceleration, and $ x_3 =$ model year) were selected.  In the training process, 50\% (196 patterns) {\color{black}of} samples were randomly selected for training and the rest of the 50\% (196 patterns) {\color{black}of} samples were taken for testing. Such {\color{black}a} process {\color{black}for} the training set and {\color{black}test} set selection was repeated ten times. {\color{black}Accordingly, the collected} performance statistics {\color{black}are} shown in Table~\ref{tab_mpg_stat}. 

The performances of the proposed algorithms were compared with the literature (Table~\ref{tab_mpg_stat}). However, the algorithms chosen from the literature were tested over fewer test samples. Therefore, the comparison {\color{black}shown} in Table~\ref{tab_mpg_stat} {\color{black}is} limited to the comparison of the training RMSE because all the mentioned algorithms were trained over the same number of training samples. For the comparisons, the {\color{black}T1FLS result} was collected from~\cite{juang2013reduced} and the results of SEIT2FNN, RIT2NFS-WB, McIT2FIS-UM, and McIT2FIS-US were collected from~\cite{das2015evolving}. 

\begin{table}
	\centering
	\caption{Performance Evaluation on Miles-Per-Gallon Prediction Problem (Example-3)}
	\subtable[Performance Statistics (10 repetitions)]{
	\setlength{\tabcolsep}{1.5pt}
	%\caption{Example 3: Performance Statistics (10 repetitions)}% of Miles-Per-Gallon Prediction Problem}
	\label{tab_mpg_stat}
	\begin{tabular}{llrrrr}
		\toprule
		&  & T1HFIT$^{\text{S}}$ & T1HFIT$^{\text{M}}$ & T2HFIT$^{\text{S}}$ & T2HFIT$^{\text{M}}$ \\
		\hline
		%\multicolumn{2}{l}{Training}  &  &  &  & \\
		$E_n$ & Best & 1.8931 & 2.2686 & 2.0881 & 1.9582 \\
		& Mean & 2.7115 & 2.6037 & 2.4699 & 2.4052 \\
		& STD & 0.5144 & 0.4071 & 0.4461 & 0.3774 \\
		$r_n$ & Best & 0.97 & 0.96 & 0.96 & 0.96 \\
		& Mean & 0.921 & 0.941 & 0.946 & 0.950 \\
		& STD & 0.1035 & 0.0218 & 0.0244 & 0.0160 \\
		%\multicolumn{2}{l}{Test}  &  &  &  & \\
		$E_t$ & Best & 2.7550 & 2.7907 & 2.8383 & 2.6623 \\
		& Mean & 4.2333 & 3.3349 & 3.4006 & 3.3172 \\
		& STD & 0.5024 & 0.5720 & 0.7423 & 0.6855 \\
		$r_t$ & Best & 0.97 & 0.96 & 0.96 & 0.96 \\
		& Mean & 0.921 & 0.941 & 0.946 & 0.950 \\
		& STD & 0.1035 & 0.0218 & 0.0244 & 0.0160 \\
		%\multicolumn{2}{l}{Parameters}  &  &  &  & \\
		$ c(\text{\textbf{w}}) $ & Best & 20 & 20 & 108 & 118 \\
		& Mean & 132 & 78.8 & 224 & 207.4 \\
		{\color{black}Time} & Best & 1.89  & 1.75 & 8.75 & 8.53 \\
		     & Mean & 12.04 & 6.75 & 14.91 & 12.33 \\
		\bottomrule
	\end{tabular}}~\quad
\subtable[Performance Comparison (10 Repetitions)]{
		\setlength{\tabcolsep}{1.5pt}
		\label{tab_mpg_comp}
		\begin{tabular}{llccccc}
			\toprule
			 & Algorithm & Mean $E_n$ & STD & Mean $E_t$ & STD & \makecell{\color{black} Samples \\ \color{blue}(train, test)} \\
			 \hline \multirow{3}{*}{\begin{sideways} {Type--1}\end{sideways}} 
			 & T1FLS & $-$ & $-$ & 3.5960 & $-$ & 196, 120 \\
			 & \textbf{T1HFIT$^{\text{S}}$} & 2.7115 & 0.5144 & 4.2333 & 0.5024 & 196, 196 \\
			 & \textbf{T1HFIT$^{\text{M}}$} & 2.6037 & 0.4071 & 3.3349 & 0.5720 & 196, 196 \\
			 \hline \multirow{6}{*}{\begin{sideways} {Type--2}\end{sideways}} 
			 & McIT2FIS-US & 2.7358 & $-$ & 2.6770 & $-$ & 196, 120 \\
			 & SEIT2FNN & 2.7161 & $-$ & 2.7895 & $-$ & 196, 120  \\
			 & McIT2FIS-UM & 2.6524 & $-$ & 2.6486 & $-$ & 196, 120 \\
			 & RIT2NFS-WB & 2.3685 & $-$ & 2.7807 & $-$ & 196, 120  \\
			 & \textbf{T2HFIT$^{\text{S}}$} & 2.4699 & 0.4461 & 3.4006 & 0.7423 & 196, 196  \\
			 & \textbf{T2HFIT$^{\text{M}}$} & 2.4052 & 0.3774 & 3.3172 & 0.6855 & 196, 196 \\
			\bottomrule
		\end{tabular}
}
\end{table} 

The comparisons of the models in this example were based on the mean training and test RMSEs $ E_n $ and $ E_t $ obtained for the ten repetitions. However, the comparison on test RMSEs was limited since only 120 {\color{black}samples} were used for testing by the algorithms considered from literature. Whereas, the algorithms proposed in this work used 196 samples for testing (Table~\ref{tab_mpg_comp}). It was observed that the proposed algorithms T2HFIT$^{\text{S}}$ and T2HFIT$^{\text{M}}$ outperformed all the other algorithms except {\color{black}for} RIT2NFS-WB, which had {\color{black}a} slightly better training RMSE $ E_n = 2.3685 $ in comparison to the training RMSEs $ E_n= 2.4699 $ and $ E_n=2.4052 $ of T2HFIT$^{\text{S}}$ and T2HFIT$^{\text{M}}${\color{black},} respectively. Since the performance comparisons were based on the average value of ten repetitions, the model's hierarchical structures are not presented for this example. 

{\color{black}From the available training time reported in the literature, it may be noted that the algorithms McIT2FIS-UM, McIT2FIS-US, RIT2NFS-WB, and SEIT2FNN take 0.0025, 0.003, 0.16, {\color{black}and} 0.33 minutes (CPU time only) compared to T2HFIT$^{\text{M}}$, which takes 8.23 minutes. However, it may be noted that T2HFIT$^{\text{M}}$ is a two-phase population-based learning algorithm, whereas {\color{black}the} other algorithms are {\color{black}single-solution} based algorithms.} 
%\begin{table}
%	\centering
%	%\renewcommand{\arraystretch}{1.2}
%	%\setlength{\tabcolsep}{4pt}
%	\caption{Example 3: Performance Comparison (10 repetitions)}% of Miles-Per-Gallon Prediction Problem}
%	\label{tab_mpg_comp}
%	\begin{tabular}{llccccc}
%		\toprule
%		 %&  & \multicolumn{2}{c}{Training}   & \multicolumn{3}{c}{Test}  \\
%		 & Algorithm & Mean $E_n$ & STD & Mean $E_t$ & STD & Samples \\
%		 \hline \multirow{3}{*}{\begin{sideways} {Type--1}\end{sideways}} 
%		 & T1FLS & $-$ & $-$ & 3.5960 & $-$ & 120 \\
%		 & \textbf{T1HFIT$^{\text{S}}$} & 2.7115 & 0.5144 & 4.2333 & 0.5024 & 196 \\
%		 & \textbf{T1HFIT$^{\text{M}}$} & 2.6037 & 0.4071 & 3.3349 & 0.5720 & 196 \\
%		 \hline \multirow{6}{*}{\begin{sideways} {Type--2}\end{sideways}} 
%		 & McIT2FIS-US & 2.7358 & $-$ & 2.6770 & $-$ & 120 \\
%		 & SEIT2FNN & 2.7161 & $-$ & 2.7895 & $-$ & 120  \\
%		 & McIT2FIS-UM & 2.6524 & $-$ & 2.6486 & $-$ & 120 \\
%		 & RIT2NFS-WB & 2.3685 & $-$ & 2.7807 & $-$ & 120  \\
%		 & \textbf{T2HFIT$^{\text{S}}$} & 2.4699 & 0.4461 & 3.4006 & 0.7423 & 196  \\
%		 & \textbf{T2HFIT$^{\text{M}}$} & 2.4052 & 0.3774 & 3.3172 & 0.6855 & 196 \\
%		\bottomrule
%	\end{tabular}
%\end{table} 

\subsection{Example 4---Abalone Age Prediction}
In this example, a prediction problem was taken in which a person's age was predicted based on {\color{black}their} physical measurements. The {\color{black}Abalone} dataset was collected from the UCI machine learning repository~\cite{UCILichman2013}. It has 4177 data samples{\color{black},} {\color{black}each} of which has seven input variables ($ x_1 =$ length, $ x_2 =$ diameter, $ x_3 =$ height, $ x_4 =$ whole weight, $ x_5 =$ shucked weight, $ x_6 =$ viscera weight, and $ x_7 =$ shell weight) and one output variable (rings). To train the proposed algorithms, 80\% (3342 patterns) {\color{black}of} samples were randomly taken for training and 20\% (835 patterns) {\color{black}remaining} samples were taken for testing. {\color{black}Additionally, in this work, to assess the average performance of proposed algorithms,} training process was repeated ten times, and the collected results are summarized in Table~\ref{tab_abl_stat}. 

The obtained results are compared with the results reported in the literature (Table~\ref{tab_abl_com}). For the comparisons, {\color{black}the} results of General, HS, CCL, and Chen\&Cheng were collected from~\cite{juang2013reduced}, and the results of SEIT2FNN, RIT2NFS-WB, McIT2FIS-UM, and McIT2FIS-US were collected from~\cite{das2015evolving}. The algorithms General~\cite{baranyi2004generalized}, CCL~\cite{chang2008fuzzy}, HS~\cite{huang2008fuzzy}, and WFRI-GA~\cite{chen2011weighted} were fuzzy interpolate reasoning methods, where WFRI-GA was based on {\color{black}the} genetic algorithm and the algorithm `General' implemented {\color{black}the} Mamdani type FIS. It is evident from the results in Table~\ref{tab_abl_com} that the proposed algorithms (both T1FIS and T2FIS) {\color{black}outperformed} the algorithms considered {\color{black}for} comparisons. 

However, when comparing the test RMSEs, {\color{black}McIT2FIS-US}, McIT2FIS-UM, and RIT2NFS-WB had a slight edge over T2HFIT$^{\text{S}}$ and T2HFIT$^{\text{M}}$, but the parameter count of T2HFIT$^{\text{M}}$ was smallest among all {\color{black}the algorithms}, and it had the lowest training error. Hence, it may be concluded that {\color{black}T2HFIT$^{\text{M}}$} is the best performing algorithm for example 4. {\color{black} {\color{black}T2HFIT$^{\text{M}}$ performance falls behind only in training time comparison because} T2HFIT$^{\text{M}}${\color{black},} being a population based algorithm{\color{black},} takes longer training time than the other algorithms. The algorithms McIT2FIS-US, McIT2FIS-UM, RIT2NFS-WB, and SEIT2FNN take 1.81, 2.35, 5.48, {\color{black}and} 17.33 minutes (CPU time only) compared to T2HFIT$^{\text{M}}$, which takes 65.02 minutes. It is important to note that the other algorithms are single solution based algorithms. }The best-performing models of the proposed algorithms are illustrated in Fig.~\ref{fig_abl}, where the selected input feature is indicated by $ x_i $.

\begin{table}
	\centering
	\caption{Performance Evaluation on Abalone Age Prediction Problem (Example-4)}
	\subtable[Performance Statistics (10 Repetitions)]{
		\setlength{\tabcolsep}{3pt}
	%\renewcommand{\arraystretch}{1.2}
	%\caption{Example 4: Performance Statistics (10 repetitions)}% of Abalone Age Prediction Problem}
	\label{tab_abl_stat}
	\begin{tabular}{llrrrr}
		\toprule
		&  & T1HFIT$^{\text{S}}$ & T1HFIT$^{\text{M}}$ & T2HFIT$^{\text{S}}$ & T2HFIT$^{\text{M}}$ \\
		\hline
		%\multicolumn{2}{l}{Training}  &  &  &  & \\
		$E_n$ & Best & 2.1097 & 2.2857 & 2.1154 & 2.1275 \\
		& Mean & 2.3267 & 2.4284 & 2.2597 & 2.2404 \\
		& STD & 0.1534 & 0.1079 & 0.1478 & 0.0627 \\
		$r_n$ & Best & 0.76 & 0.71 & 0.76 & 0.75 \\
		& Mean & 0.688 & 0.655 & 0.710 & 0.716 \\
		& STD & 0.0490 & 0.0347 & 0.0481 & 0.0204 \\
		%\multicolumn{2}{l}{Test}  &  &  &  & \\
		$E_t$ & Best & 2.1260 & 2.3480 & 2.1824 & 2.1428 \\
		& Mean & 2.3644 & 2.4843 & 2.3808 & 2.3533 \\
		& STD & 0.1448 & 0.1029 & 0.1676 & 0.1127 \\
		$r_t$ & Best & 0.76 & 0.71 & 0.76 & 0.75 \\
		& Mean & 0.688 & 0.655 & 0.710 & 0.716 \\
		& STD & 0.0490 & 0.0347 & 0.0481 & 0.0204 \\
		%\multicolumn{2}{l}{Parameters}  &  &  &  & \\
		$ c(\text{\textbf{w}}) $ & Best & 20 & 20 & 144 & 72 \\
		& Mean & 77.6 & 46.4 & 188.4 & 152.9 \\
		{\color{black}Time} & Best & 45.53 & 40.22 & 88.5 & 65.02 \\
		     & Mean & 83.33 & 70.16 & 213.5 & 192 \\
		\bottomrule
	\end{tabular}}~\quad~\quad
\subtable[Performance Comparison]{
	\setlength{\tabcolsep}{4pt}
	%\centering
	%\renewcommand{\arraystretch}{1.2}
	%\caption{Example 4: Performance Comparison}% of Abalone Age Prediction Problem}
	\label{tab_abl_com}
	\begin{tabular}{llccr}
		\toprule
		 %&  & Training & Test &  \\
		 & Algorithm & $E_n$ & $E_t$ & $ c(\text{\textbf{w}}) $ \\
		 \hline \multirow{6}{*}{\begin{sideways} {Type--1}\end{sideways}} 
		 & HS & $-$ & 3.1600 & $-$ \\
		 & General & $-$ & 3.1500 & $-$ \\
		 & CCL & $-$ & 2.6500 & $-$ \\
		 & Chen\&Cheng & $-$ & 2.5900 & $-$ \\
		 & \textbf{T1HFIT$^{\text{S}}$} & 2.1097 & 2.1260 & 124 \\
		 & \textbf{T1HFIT$^{\text{M}}$} & 2.2857 & 2.3480 & 84 \\
		 \hline \multirow{6}{*}{\begin{sideways} {Type--2}\end{sideways}} 
		 & RIT2NFS-WB & 2.4047 & 2.1346 & 131 \\
		 & McIT2FIS-UM & 2.3481 & 1.8740 & 115 \\
		 & SEIT2FNN & 2.3388 & 2.4330 & 140 \\
		 & McIT2FIS-US & 2.3357 & 1.8387 & 115 \\
		 & \textbf{T2HFIT$^{\text{S}}$} & 2.1154 & 2.1824 & 226 \\
		 & \textbf{T2HFIT$^{\text{M}}$} & 2.1275 & 2.1428 & 108 \\
		\bottomrule
	\end{tabular}
}
\end{table} 
\begin{figure}
	\centering
	\subfigure[T1HFIT$^{\text{S}}$: $ E_n = 2.1097$]
	{
		\includegraphics[width=0.23\columnwidth]{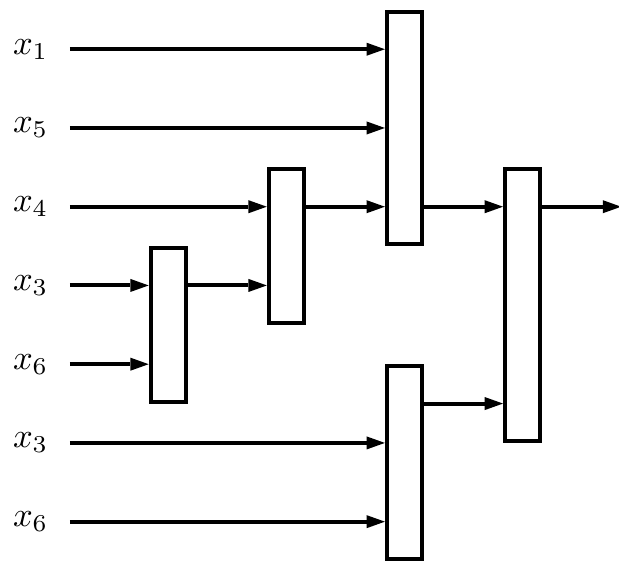}%
		\label{fig_abl_1}%
	}
	\subfigure[T1HFIT$^{\text{M}}$: $ E_n = 2.2857$]
	{
		\includegraphics[width=0.23\columnwidth]{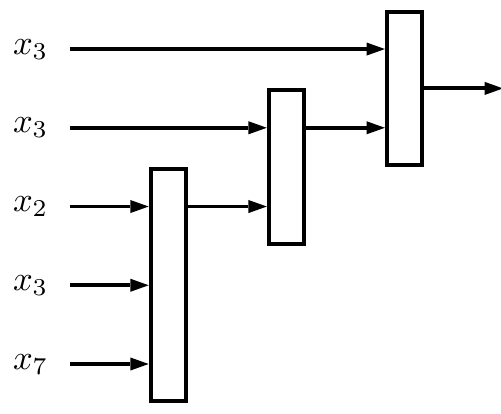}%
		\label{fig_abl_2}%
	}
	\subfigure[T2HFIT$^{\text{S}}$: $ E_n = 2.1154$]
	{
		\includegraphics[width=0.23\columnwidth]{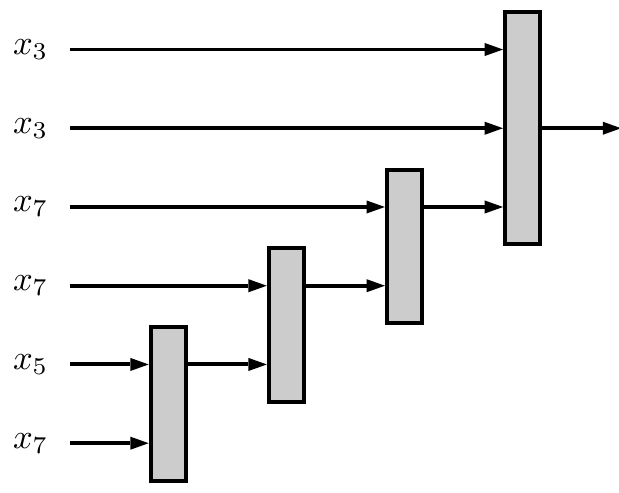}%
		\label{fig_abl_3}%
	}
	\subfigure[T2HFIT$^{\text{M}}$: $ E_n = 2.1275$]
	{
		\includegraphics[width=0.23\columnwidth]{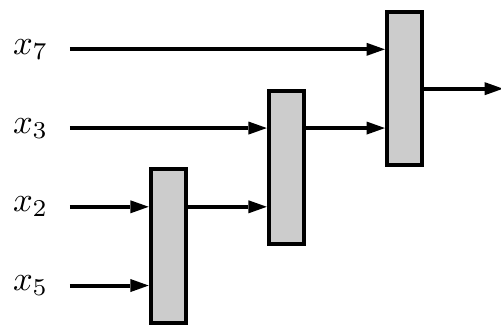}%
		\label{fig_abl_4}%
	}
	\caption{Example--4: designed HFIT, where the shaded nodes are T2FIS.}
	\label{fig_abl}
\end{figure}

\subsection{Example 5---Box-Jenkins Gas Furnace Problem}
In this example, the Box and Jenkins gas furnace dataset that was taken from~\cite{boxJenkins1976}, which has 296 data samples. The objective of this example was to predict the $ \text{CO}_2 $ concentration from the gas-flow rate. The {\color{black}gas} furnace system is modeled using a series, which is of the form: $ y(k) = f(y(k-1),u(k-4)$. For the training of the proposed models, {\color{black}as mentioned in~\cite{das2015evolving}}, 100\% (296 patterns) {\color{black}of the} samples were used. {\color{black} To show an average performance ability of the proposed algorithms, the} training process was {\color{black}also} repeated ten times, and the collected results are summarized in Table~\ref{tab_box_stat}. The performances of the proposed algorithms {\color{black}(the best results)} were compared with the {\color{black}best} performances of the algorithms reported in the literature (Table~\ref{tab_box_com}). 

To compare the performance of the algorithms, the results of T1-NFS and GNN were collected from~\cite{juang2013reduced}, and the results of SEIT2FNN, RIT2NFS-WB, McIT2FIS-UM, and McIT2FIS-US were collected from~\cite{das2015evolving}. As reported in Table~\ref{tab_box_com}, the proposed algorithms clearly outperformed the algorithms T1-NFS, GNN$ ^1 $, and GNN$ ^2 $ in the case of T1FIS comparisons and algorithms SEIT2FNN, RIT2NFS-WB, McIT2FIS-UM, and McIT2FIS-US in the case of T2FIS comparisons. 

For T2FIS, the proposed algorithm T2HFIT$^{\text{M}}$ provided a training RMSE $ E_n = 0.284 $, which was slightly lower than the training RMSE $ E_n = 0.269$ of SEIT2FNN. However, {\color{black}the} parameter count of T2HFIT$^{\text{M}}$ was 72 compared {\color{black}to 152} of SEIT2FNN. {\color{black}Additionally, {\color{black}despite} being a population based algorithm, {\color{black}T2HFIT$^{\text{M}}$} takes 6.31 minutes for the training{\color{black},} whereas SEIT2FNN takes 604.66 minutes for the training~\cite{juang2013reduced}.} Therefore, it may be concluded that{\color{black},} for example 5, {\color{black}T2HFIT$^{\text{M}}$} performed the best. The best-performing models are illustrated in Fig.~\ref{fig_box}.

\begin{table}
	\centering
	\caption{Performance Evaluation on Box-Jenkins Gas Concentration Problem (Example-5)}
	\subtable[Performance Statistics (10 Repetitions)]{
		\setlength{\tabcolsep}{2pt}
	%\renewcommand{\arraystretch}{1.2}
	%\caption{Example 5: Performance Statistics (10 repetitions)}% of Box-Jenkins Gas Concentration Problem}
	\label{tab_box_stat}
	\begin{tabular}{llrrrr}
		\toprule
		&  & T1HFIT$^{\text{S}}$ & T1HFIT$^{\text{M}}$ & T2HFIT$^{\text{S}}$ & T2HFIT$^{\text{M}}$ \\
		\hline
		%\multicolumn{2}{l}{Training}  &  &  &  & \\
		$E_n$ & Best & 0.246 & 0.280 & 0.256 & 0.275 \\
		& Mean & 0.303 & 0.344 & 0.291 & 0.301 \\
		& STD & 0.036 & 0.043 & 0.023 & 0.033 \\
		$r_n$ & Best & 0.97 & 0.97 & 0.97 & 0.97 \\
		& Mean & 0.959 & 0.947 & 0.963 & 0.960 \\
		& STD & 0.010 & 0.013 & 0.006 & 0.010 \\
		%\multicolumn{2}{l}{Parameters}  &  &  &  & \\
		$ c(\text{\textbf{w}}) $ & Best & 40 & 40 & 72 & 72 \\
		& Mean & 132.8 & 58.4 & 286 & 167.4 \\
		{\color{black}Time} & Best & 5.41 & 4.22 & 9.82 & 6.31 \\
		     & Mean & 11.84 & 4.76 & 20.67 & 11.15 \\
		\bottomrule
	\end{tabular}}~\quad~\quad
	\subtable[Performance Comparison]{
		\setlength{\tabcolsep}{5pt}
	%\renewcommand{\arraystretch}{1.2}
	%\caption{Example 5: Performance Comparison}% of Box-Jenkins Gas Concentration Problem}
	\label{tab_box_com}
	\begin{tabular}{llcr}
		\toprule
		%&  & Training &  \\
		& Algorithm & $E_n$ & $ c(\text{\textbf{w}}) $ \\
		\hline \multirow{5}{*}{\begin{sideways} {Type--1}\end{sideways}} 
		& T1-NFS & 0.4074 & $-$ \\
		& GNN$ ^1 $ & 0.3114 & $-$ \\
		& GNN$ ^2 $ & 0.2983 & $-$ \\
		& \textbf{T1HFIT$^{\text{S}}$} & 0.2455 & 124 \\
		& \textbf{T1HFIT$^{\text{M}}$} & 0.2838 & 40 \\
		\hline \multirow{6}{*}{\begin{sideways} {Type--2}\end{sideways}} 
		& SEIT2FNN & 0.2690 & 152 \\
		& RIT2NFS-WB & 0.3527 & 90 \\
		& McIT2FIS-UM & 0.3139 & 48 \\
		& McIT2FIS-US & 0.3181 & 48 \\
		& \textbf{T2HFIT$^{\text{S}}$} & 0.2767 & 154 \\
		& \textbf{T2HFIT$^{\text{M}}$} & 0.2840 & 72 \\
		\bottomrule
	\end{tabular}}
\end{table} 

\begin{figure}
	\centering
	\subfigure[T1HFIT$^{\text{S}}$: $ E_n = 0.2455$]
	{
		\includegraphics[width=0.23\columnwidth]{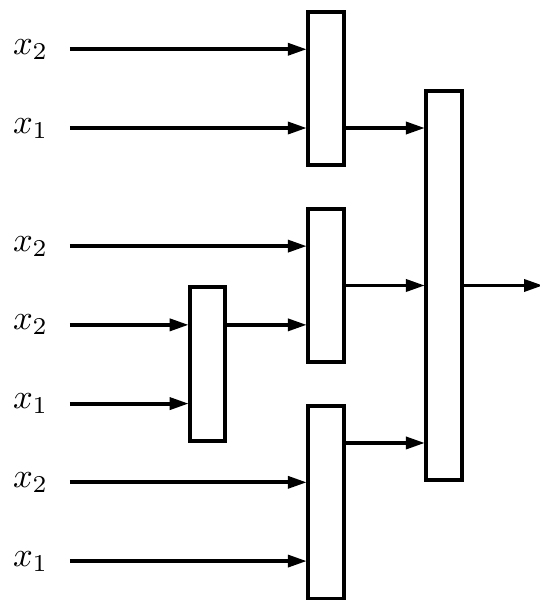}%
		\label{fig_box_1}%
	}
	\subfigure[T1HFIT$^{\text{M}}$: $ E_n = 0.2838$]
	{
		\includegraphics[width=0.23\columnwidth]{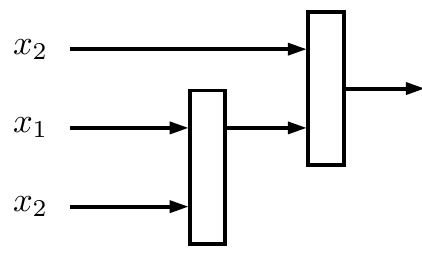}%
		\label{fig_box_2}%
	}
	\subfigure[T2HFIT$^{\text{S}}$: $ E_n = 0.2767$]
	{
		\includegraphics[width=0.23\columnwidth]{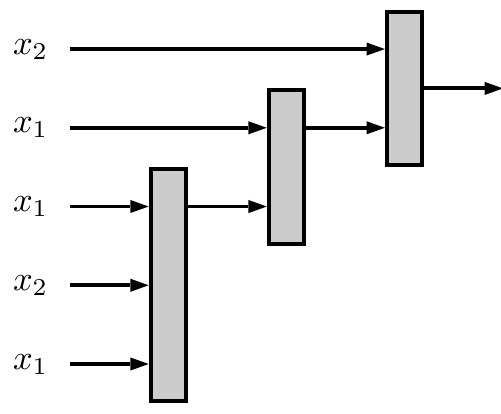}%
		\label{fig_box_3}%
	}
	\subfigure[T2HFIT$^{\text{M}}$: $ E_n = 0.2840$]
	{
		\includegraphics[width=0.23\columnwidth]{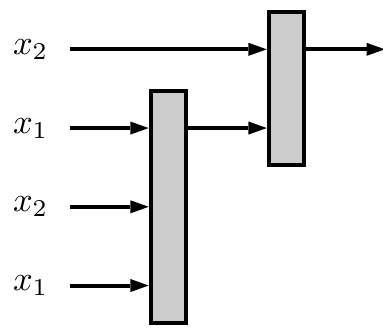}%
		\label{fig_box_4}%
	}
	\caption{Example--5: designed HFIT, where the shaded nodes are T2FIS.}
	\label{fig_box}
\end{figure}

\subsection{Example 6---Poly (lactic-co-glycolic acid) (PLGA) micro- and  {\color{black}nanoparticle} dissolution rate prediction}
\label{sec_perf_ex6}
This example illustrates a pharmaceutical {\color{black}industry} problem related to PLGA dissolution profile prediction, which is a complex problem since a vast number of factors governs its dissolution rate profile and it has a {\color{black}high} noise and redundancy because {\color{black}the} dataset was obtained from various experimental measurements and instruments. As per the dataset provided in~\cite{szlkek2013heuristic,ojha2015dimensionality}, this problem has 747 samples and a total of 300 input features, which influence the PLGA protein particle's dissolution rate~\cite{astete2006synthesis}. The input features are categorized into {\color{black}five groups}: protein descriptor, formulation characteristics, plasticizer, emulsifier, and time delay, which has 85, 17, 98, 99, and 1 features, respectively. 

{\color{black}The description of each feature group is as follows{\color{black}:} 1) The protein descriptors (85 features) {\color{black}describe} the type of molecules and proteins used in the {\color{black}drug's} manufacturing. 2) The formulation characteristics (17 features) describe the molecular properties{\color{black},} such as molecular weight, particle size, etc., of the molecules and proteins. 3) {\color{black}The} plasticizer (98 features) describes {\color{black}properties}{\color{black},} such as fluidity of the material used. 4) The emulsifier (99 features) describes the stabilizing properties of the material used in {\color{black}the drug's} manufacturing. 5) The time delay (1 feature) represents the time taken to dissolve/dissolute a sample drug.}

The PLGA dissolution profile prediction is a significant problem since it plays a crucial role in the medical application and toxicity evaluation of PLGA-based microparticles dosages~\cite{langer2004designing}. Moreover, PLGA microparticles are important diluents, which {\color{black}are} used for producing drugs in their correct dosage form. It is also used as a filler, as an excipient, and as an active pharmaceutical ingredient because it acts as a catalyst for drug absorption/dissolution/solubility{\color{black}~\cite{makadia2011poly}}. Therefore, PLGA dissolution is a widely studied research problem in pharmaceutical manufacturing and powder technology.
%graham1999phase

Using the parameter setting mentioned in Table~\ref{tab_parameter_fis} and using 10-fold cross-validation, the proposed algorithm T1HFIT$^{\text{M}}$ was able to select seven input features and was able to approximate a test RMSE of $ E_t = 18.66$. The selected features were: phase polyvinyl alcohol Mw ($ x_{90} $), ASA ($ x_{122} $), pH 8 msdon ($ x_{192} $), aromatic bond count ($ x_{204} $), a(xx) ($ x_{218} $), pH 12 msacc ($ x_{281} $), time days ($ x_{299} $). Similarly, the proposed algorithm T2HFIT$^{\text{M}}$ was able to approximate a test RMSE of $ E_t = 15.259$ with only four input features: aromatic atom count ($ x_{66} $), phase polyvinyl alcohol concentration inner phase ($ x_{88} $), pH 1 msdon ($ x_{285} $), time days ($ x_{299} $). {\color{black}Additionally, T2HFIT$^{\text{M}}$ provided a simple model (i.e., $ c(\text{\textbf{w}}) = 108$) compared to T2HFIT$^{\text{M}}$ that had {\color{black}} model complexity $ c(\text{\textbf{w}}) = 156$. Moreover, T2HFIT$^{\text{M}}$ takes 7.16 minutes of training time compared to the 45.7 minutes of T1HFIT$^{\text{M}}$. This difference in time is due to the difference between the number of input features being selected by T2HFIT$^{\text{M}}$ and T1HFIT$^{\text{M}}$.}

{\color{black}Feature} reduction is a significant task since it reduces {\color{black}drug's} manufacturing cost. Table~\ref{tab_plga_com} shows a comparison of the proposed T1HFIT$^{\text{M}}$ and T2HFIT$^{\text{M}}$ with {\color{black}algorithms} such as multilayer perceptron (MLP), reduced error pruning tree (REP Tree), heterogeneous flexible neural tree (HFIT), and Gaussian process regression (GPR). It is evident from the results that the proposed algorithm {\color{black}approximates} the PLGA dissolution profile with a lower number of features, and its approximation error was {\color{black}competitive} with the performance of other algorithms. Fig.~\ref{fig_plga} illustrates the obtained models for the prediction of {\color{black}the} PLGA dissolution profile.
\begin{table}
	\centering
	\caption{Example 6: Performance Comparison}% of Box-Jenkins Gas Concentration Problem}
	\label{tab_plga_com}
	\begin{tabular}{llcr}
		\toprule
		Algorithm & Ref. & RMSE $ E_t $ & No. of features \\
		\hline
		MLP  & \cite{szlkek2013heuristic} & 14.3 & 17 \\
		HFIT & \cite{ojha2016ensemble} & 13.2 & 15 \\
		REP Tree  & \cite{ojha2015dimensionality} & 13.3 & 15 \\
		GPR  & \cite{ojha2015dimensionality} & 14.9 & 15 \\
		MLP  & \cite{ojha2015dimensionality} & 15.2 & 15 \\
		MLP  & \cite{szlkek2013heuristic} & 15.4 & 11 \\
		\textbf{T1HFIT$^{\text{M}}$} & present work & 18.6 & 7 \\
		\textbf{T2HFIT$^{\text{M}}$} & present work & 15.2 & 4 \\
		\bottomrule
	\end{tabular}
\end{table} 

\begin{figure}
	\centering
	\subfigure[T1HFIT$^{\text{M}}$: $ E_t = 18.66$]
	{
		\includegraphics[width=0.25\columnwidth]{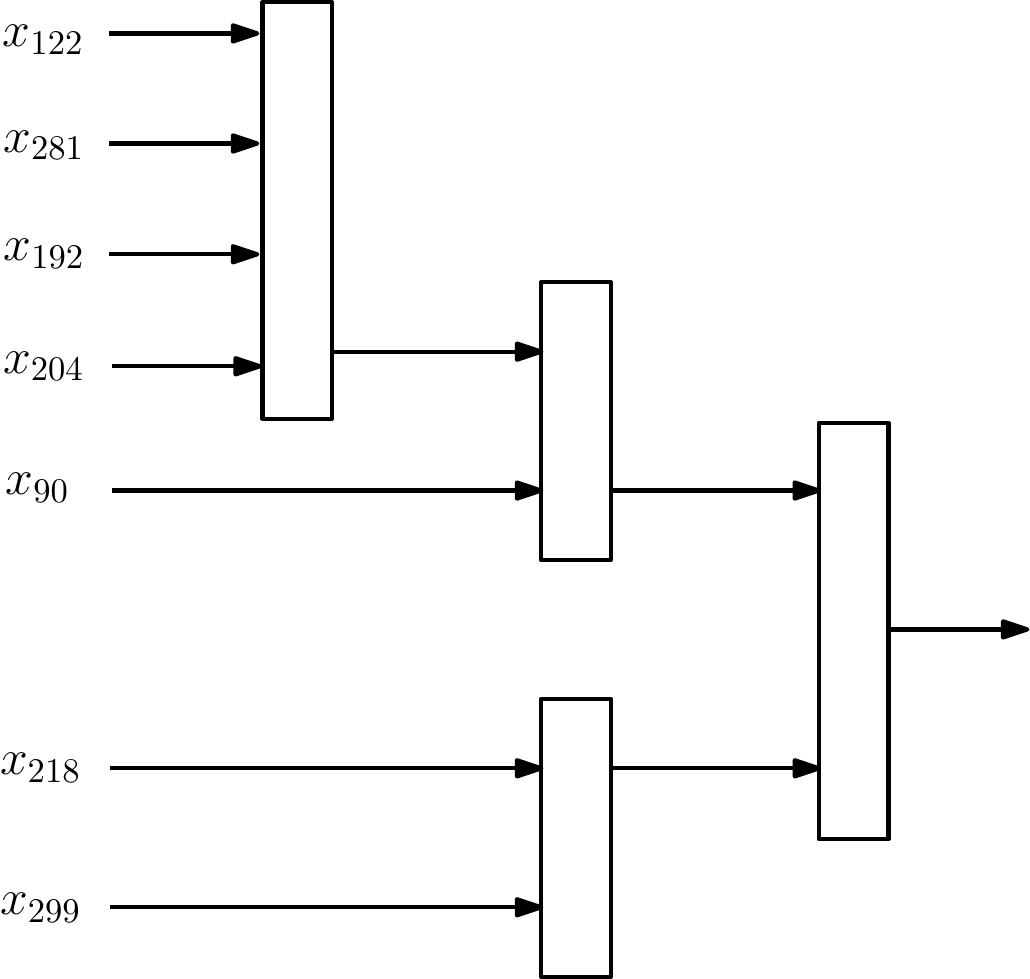}%
		\label{fig_plga_1}%
	}
	\subfigure[T2HFIT$^{\text{M}}$: $ E_t = 15.25$]
	{
		\includegraphics[width=0.25\columnwidth]{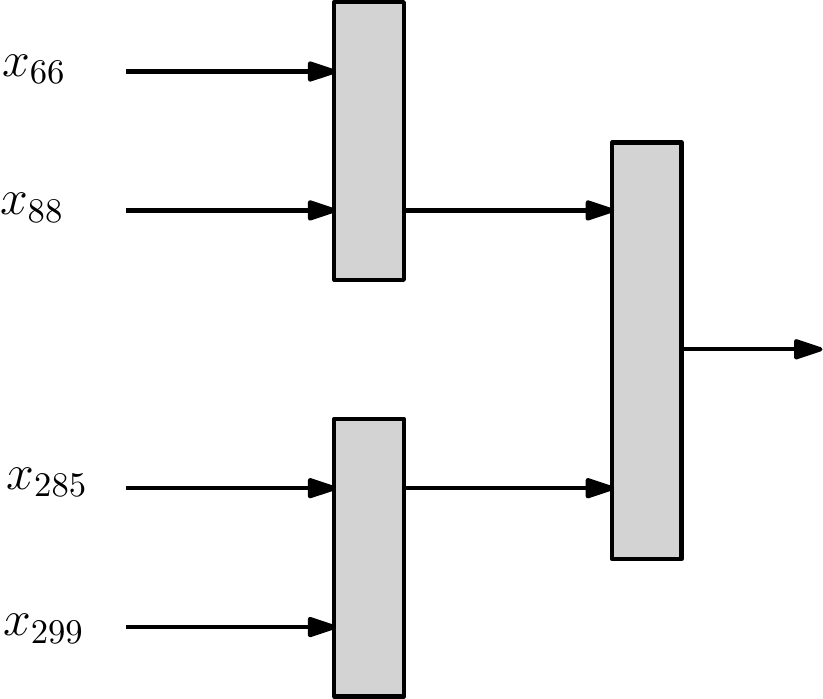}%
		\label{fig_plga_2}%
	}
	\caption{Example--6: Designed HFIT, where the shaded nodes are T2FIS.}
	\label{fig_plga}
\end{figure}

\section{Discussion}
\label{sec_disc}
The proposed HFIT algorithms T1HFIT$^{\text{S}}$, T1HFIT$^{\text{M}}$, T2HFIT$^{\text{S}}$, and T2HFIT$^{\text{M}}$ were evaluated {\color{black}through} six examples{\color{black},} including a real-world {\color{black}example} from the pharmaceutical industry. Performance of the proposed algorithms {\color{black}was} compared with {\color{black}algorithms} that offer {\color{black}fuzzy system's structural} optimization (e.g., SEIT2FNN, SONFIN, SaFIN, TSCIT2FNN, etc.), {\color{black}hierarchical} fuzzy system design (e.g., IFRS, H-TS-FS, etc.), {\color{black}dynamic} fuzzy system design (e.g., DENFIS, D-FNN, etc.), and so forth. The obtained results illustrate the efficiency of the proposed algorithms in comparison to the algorithms collected from the literature. Such performance was obtained by using the parameter setting mentioned in Table~\ref{tab_parameter_fis}. Moreover, a comparison using {\color{black}noisy} data [example 2, case 2 (Section~\ref{sec_perf_ex2})] has proved the approximation efficiency of the proposed algorithms over other algorithms. The HFIT algorithms not only offer {\color{black}solutions} with high accuracy (low approximation error), but they also provide the solutions with low complexity. 

The number of clusters needs to be predetermined in {\color{black}the algorithms that use} a {\color{black}cluster-based} partitioning of the input space and to define fuzzy sets. On the contrary, the proposed HFIT {\color{black}uses a only two partitions for each inputs and automatically defines fuzzy sets} by using the dynamics of the evolutionary process. Such ability is particularly significant for the predictive modeling of {\color{black}problems} like example 6 (Section~\ref{sec_perf_ex6}) that has a large number of input features. It would be a difficult task for {\color{black}fuzzy-NN-based} algorithms (e.g., SONFIN, SEIT2FNN, McIT2FIS, etc.) to design a network-like structure to solve {\color{black}a} high-dimensional problem (e.g., example--6 that has 300 input features){\color{black}, whereas} the proposed HFIT solves example--6 with {\color{black}satisfactory} accuracy and low {\color{black}model's} complexity. Section~\ref{sec_the_comparison} show that HFIT has several qualities that set it apart from many algorithms mentioned in this work. 

In Section~\ref{sec_evaluation_emp}, a comprehensive study of the comparative results of the proposed algorithms was presented. It was observed that the proposed {\color{black}HFIT-based} algorithms gave better performance than the other algorithms collected from the literature. For example, in the case of example--1 T1HFIT$^{\text{M}}$ provided better RMSE with {\color{black}a} lower parameter count. Additionally, T2HFIT$^{\text{M}}$ offered an RMSE (i.e., 0.0028) with a low complexity (i.e., 72) in comparison to {\color{black}SEIT2FNN} that gave an RMSE {\color{black}of} 0.0022 with a model complexity {\color{black}of} 84. 

Similarly, for example-2, T2HFIT$^{\text{M}}$ offered a {\color{black}competitive} RMSE (i.e., 0.0058) in comparison to {\color{black}SEIT2FNN$ ^2 $} that gave an RMSE {\color{black}of} 0.0053. Additionally, {\color{black}in the noisy dataset comparison}, the proposed {\color{black}T2HFIT$^{\text{M}}$} provided better training RMSEs with lower {\color{black}model's} complexities {\color{black}when} compared to many of the recently proposed T2FIS algorithms{\color{black},} such as SEIT2FNN, IT2FNN-SVR, and T2FLS-G. Moreover, the models developed by the proposed algorithm adapted its structure in each instance of {\color{black}noisy dataset experiments}; whereas, the other models had a fixed structure in each instance of their experiments (Table~\ref{tab_mcg_com_noise}). Therefore, the proposed algorithm was able to accommodate the variance in noise more precisely than the other models. 

{\color{black}With} example--3, example--4, and example--5, the proposed type-1 HFIT surpassed all the other algorithms. Whereas, type-2 HFIT performed {\color{black}competitively} with algorithms such as RIT2NFS-WB, McIT2FIS, and SEIT2FNN. It was observed that {\color{black}the} training RMSE of T2HFIT$^{\text{M}}$ for {\color{black}example--3} was {\color{black}as per with} RIT2NFS-WB, but the complexity of the proposed T2HFIT$^{\text{M}}$ was much less than RIT2NFS-WB. For example--4, T2HFIT$^{\text{M}}$ {\color{black}outperformed} all its counterparts in both {\color{black}accuracy} and complexity. For example--5, the training RMSE of T2HFIT$^{\text{M}}$ was {\color{black}close} to SEIT2FNN, but on model complexity and training time, T2HFIT$^{\text{M}}$ outperformed SEIT2FNN by a {\color{black}comfortable} margin. Therefore, it may be concluded that the proposed HFIT version performed efficiently against other algorithms found in the literature.   

{\color{black}The proposed HFIT is a population-based algorithm. Therefore, it should naturally take more training time than a single solution based algorithm. In addition to that, the training time depends on several factors such as the programming language used, the type of platform, the hardware configuration of the machine, {\color{black}the} method of data feeding during the training, etc. Therefore, training time comparison is limited. {\color{black}However, by comparing the training time of the proposed algorithms with the training times of some of other algorithms (training time of only a few algorithms is reported in the literature), the following was observed: 1) in the case of example--1, the proposed T2HFIT$^{\text{M}}$ was found competitive with other algorithms, 2) in the case of {\color{black}example--5}, T2HFIT$^{\text{M}}$ outperformed SEIT2FNN, 3) in the case of example--6, which has 747 samples and 300 input features, the proposed T2HFIT$^{\text{M}}$ takes only about 7.16 minutes, which is remarkable.}}%that indicates an efficient performance by the proposed metho

For example--6, the proposed T2HFIT$^{\text{M}}$ was {\color{black}more} efficient than {\color{black}T1HFIT$^{\text{M}}$} because T2HFIT$^{\text{M}}$ was capable of accommodating {\color{black}noisy} information more efficiently than T1HFIT$^{\text{M}}$. {\color{black}This is evident from the fact that the average RMSE of T2HFIT$^{\text{M}}$ was 16.64, and the average RMSE of T1HFIT$^{\text{M}}$ was 22.36. Hence, {\color{black} the proposed T2HFIT model, which is relied on interval type-2 MFs,} is worth considering in such high-dimensional and noisy application problems.}
	
A comparison between single objective and multiobjective summarized in Table~\ref{tab_s_v_m} {\color{black}suggests} that the multiobjective approach has performance superiority over the single objective because multiobjective gives a competitively better approximation error with lower model complexity in both type-1 and type2 cases {\color{black}compared to single objective}. Additionally, it can be observed that {\color{black}type-2} {\color{black}HFIT} offers better approximation error against type-1 {\color{black}HFIT}.
 \begin{table}
 	\centering
 	\caption{Performance Summary on Benchmark Examples: Single Objective Versus Multiobjective and Type-1 Versus Type-2}% of Box-Jenkins Gas Concentration Problem}
 	\label{tab_s_v_m}
	\begin{tabular}{crrrrrrrrr}
		\toprule
		& \multicolumn{4}{c}{Single Objective} &  & \multicolumn{4}{c}{Multibjective}  \\
		\cline{2-5}\cline{7-10}
		& \multicolumn{2}{c}{T1HFIT$^{\text{S}}$} & \multicolumn{2}{c}{T2HFIT$^{\text{S}}$} & & \multicolumn{2}{c}{T1HFIT$^{\text{M}}$} & \multicolumn{2}{c}{T2HFIT$^{\text{M}}$} \\
		\hline
		Example & $ E_n $ & $ c(\text{\textbf{w}}) $ & $ E_n $  & $ c(\text{\textbf{w}}) $ &   &  $ E_n $ & $ c(\text{\textbf{w}}) $  & $ E_n $ & $ c(\text{\textbf{w}}) $ \\
		\hline
		1 & 0.018 & 57.2 & 0.012 & 152.0 & & 0.025 & 34.4 & 0.018 & 90.4\\
		2 & 0.034 & 71.7 & 0.041 & 203.4 & & 0.033 & 57.6 & 0.022 & 129.5\\
		3 & 2.711 & 132.0 & 2.469 & 224.0 & & 2.603 & 78.8 & 2.405 & 204.4\\
		4 & 2.326 & 77.6 & 2.259 & 188.4 & & 2.428 & 46.4 & 2.242 & 152.9\\
		5 & 0.303 & 138.8 & 0.291 & 286.0 & & 0.344 & 58.4 & 0.301 & 167.4\\
		{\color{black}6} & 24.32 & 220.0 & 16.499 & 208.0 & &17.448 & 156.0 & 14.352 & 108.0\\
		\hline
		Average & 4.952 & 116.2 & 3.595 & 210.3 & & 3.814 & 71.9 & 3.223 & 142.1\\
		\bottomrule
	\end{tabular}
 \end{table} 

Since HFIT algorithms were developed using {\color{black}the} evolutionary process, the quality of {\color{black}their} performance is subjected to carefully setting {\color{black}of} the parameters {\color{black}mentioned in} Table~\ref{tab_parameter_fis}. Hence, the results of the algorithms mentioned in this work may be further improved {\color{black}upon by choosing} different sets of parameters; however, this is a trial-and-error process. For example, the feature selection, i.e., the number of inputs {\color{black}into} a node (a fuzzy subsystem) is proportional to the setting of the maximum inputs {\color{black}into} an node. Similarly, the hierarchy (number of layers) in an HFIT is proportional to the setting of {\color{black}the} maximum depth of a tree. Therefore, {\color{black}HFIT's complexity} can be controlled using these parameters. Additionally, the parameters of MOGP and DE, such as their population size, crossover probability, mutation probability, etc., influence {\color{black}HFIT's performance}.

\section{Conclusions}
\label{sec_con}
{\color{black}Using a} fuzzy inference system (FIS) for data mining inherently requires a multiobjective solution and the proposed multiobjective design {\color{black}for a} hierarchical fuzzy inference tree (HFIT) stands as a viable option that constructs a tree-like model whose nodes are low-dimensional FIS. The proposed  HFIT was developed for both type-1 and type-2 FIS and each node in HFIT implements a Takagi--Sugeno--Kang model. Both type-1 and type-2 FIS {\color{black}were} studied in the scope of single objective and multiobjective optimization using genetic programming. Hence, four versions of HFIT were studied: T1HFIT$^{\text{S}}$, T1HFIT$^{\text{M}}$, T2HFIT$^{\text{S}}$, and T2HFIT$^{\text{M}}$. The parameters of the membership functions and the consequent parts of the rules were {\color{black}optimized} using {\color{black}a} differential evolution algorithm. {\color{black}HFIT's optimization procedure} was a two-phase evolutionary optimization approach, in which structure optimization and parameter optimization {\color{black}were} applied one-by-one until a formidable solution was obtained. {\color{black}The approximation ability of the proposed HFIT was theoretically examined. As a result of that four distinguished quality of HFIT was discovered: adaptation in structure, diverse rule generation, automatic fuzzy set selection, minimal feature drive structure formation.} A comprehensive performance comparison was performed for evaluating the efficiency of the proposed HFIT. {\color{black}The performance of the proposed HFIT algorithm was found {\color{black}to be} {\color{black}efficient} and competitive compared to the {\color{black}algorithms} collected from the literature. {\color{black}HFIT} provided {\color{black}competitive} approximation compared to other algorithms {\color{black}and simultaneously} it produced less complex models. Additionally, HFIT performs feature selection and automatic structure design, which is a necessary for solving high-dimensional problems. } 

%\section*{Acknowledgment}
%This work was supported by the IPROCOM Marie Curie initial training network, funded through the People Programme (Marie Curie Actions) of the European Union's Seventh Framework Programme FP7/2007-2013/ under REA grant agreement No. 316555.
%\ifCLASSOPTIONcaptionsoff
%  \newpage
%\fi

\bibliographystyle{IEEEtran}
% argument is your BibTeX string definitions and bibliography database(s)
%\bibliography{vkoWCCI2016}ieee_fuzzy_ref
\bibliography{ieee_fuzzy_ref}
\end{document}